\newcommand\mycitet[1]{\cite{#1}}
\newcommand{\id}{\mathcal{I}}
\newcolumntype{C}[1]{>{\centering\arraybackslash}p{#1}}
\newlist{inlinelist}{enumerate*}{1}
\setlist*[inlinelist,1]{%
	label=(\roman*),
}
\begin{document}
\title{Adversarial Computation of Optimal Transport Maps}
%
%
\author{Jacob Leygonie$^\star$\inst{1,2} \and
Jennifer She$^\star$\inst{2,3}\and
Amjad Almahairi\thanks{Authors contributed equally.}\inst{1} \and \\
Sai Rajeswar\inst{1,2} \and
Aaron Courville\inst{2}}
\authorrunning{J. Leygonie et al.}
%
\institute{Element AI, Canada 
\and MILA, Universit{\'e} de Montr{\'e}al, Canada
\and Stanford University, USA
}
\maketitle              
\begin{abstract}
Computing optimal transport maps between high-dimensional and continuous distributions is a challenging problem in optimal transport (OT). Generative adversarial networks (GANs) are powerful generative models which have been successfully applied to learn maps across high-dimensional domains. However, little is known about the nature of the map learned with a GAN objective. To address this problem, we propose a generative adversarial model in which the discriminator's objective is the $2$-Wasserstein metric. We show that during training, our generator follows the $W_2$-geodesic between the initial and the target distributions. As a consequence, it reproduces an optimal map at the end of training. We validate our approach empirically in both low-dimensional and high-dimensional continuous settings, and show that it outperforms prior methods on image data.

\keywords{Optimal Transport \and Generative Adversarial Networks}
\end{abstract}

\section{Introduction}
\label{introduction}

The computation of optimal maps is a fundamental problem in optimal transport (OT), with various applications to image processing~\citep{gramfort2015fast}, 
computer graphics~\citep{solomon2015convolutional}, and
transfer learning~\citep{courty2017optimal}.
Informally, an optimal map (or Monge map) transforms one probability measure into another, while minimizing ``cost''. Formally, given two probability measures $\mu$ and $\nu$ on the Euclidean space $\mathds{R}^m$, and a cost function $c :\mathds{R}^m\times \mathds{R}^m\rightarrow \mathds{R}$, an optimal map $T : \mathds{R}^m \to \mathds{R}^m$ is any function achieving the infimum of the \emph{Monge problem},
\begin{equation} \label{MONGE_PROBLEM}
 \inf_{T \in \mathcal{A}} \int_{\mathds{R}^m} c(x,T(x)) d\mu(x),
\end{equation}
where $\mathcal{A}$ is the set of all functions from $\mathds{R}^m$ to $\mathds{R}^m$ such that $\nu = T_\#\mu$ is the \emph{push-forward} of $\mu$ through $T$, that is, $\nu(A) = T_\#\mu(A) = \mu(T^{-1}(A))$ for all measurable sets $A \subseteq \mathds{R}^m$.

Computing optimal maps can be quite challenging, especially for high-dimensional and continuous spaces~\citep{Peyre2018}.
Most recent approaches find approximate optimal maps by solving a regularized relaxation of the Monge problem. 
Notably, Cuturi~\mycitet{cuturi2013sinkhorn} proposes the Sinkhorn algorithm which can only handle discrete and semi-discrete distributions, and its time-complexity is quadratic with respect to the input space. Genevay~\emph{et al.}~\mycitet{Genevay2016} propose a stochastic gradient method which can be applied to continuous distributions. However, the size of their model grows linearly with the number of input samples, which does not scale to high-dimensional input measures.
More recently, Seguy~\emph{et al.}~\mycitet{Seguy2018LSOT} learn a neural-network-based optimal map which can handle probability measures in high-dimensional spaces more efficiently. However, for high-dimensional image data, their approach is not competitive with state-of-the-art generative models, such as generative adversarial networks (GANs)~\citep{goodfellow2014generative}.


GANs learn a parametric map (the generator network) between two arbitrary distributions through an adversarial game with another discriminator network. In generative modelling applications, the generator maps a low-dimensional latent space to a high-dimensional target space. Recently, GANs were successfully applied in learning maps between two domains of arbitrary dimension in an unsupervised fashion~\citep{zhu2017unpaired, almahairi2018augmented}.
Motivated by the remarkable success of GANs in large-scale image domains, we devise a GAN-based method for learning optimal maps, which can be applied to continuous and high-dimensional spaces.


With existing GAN frameworks, it is challenging to ensure that the generator's learned map is truly optimal in the OT sense. Characterizing the generated map relies critically on the choice of the discriminator's objective. On the one hand, objectives based on $f$-divergence~\citep{goodfellow2014generative,NIPS2016_6066} bear no clear connection with OT theory. On the other hand,
objectives based on the $1$-Wasserstein distance~\citep{arjovsky2017wasserstein} are strongly related to the Monge problem~\eqref{MONGE_PROBLEM}. Nonetheless, the $1$-Wasserstein metric lacks properties such as the uniqueness of both the optimal map and the $1$-Wasserstein geodesics~\citep{bottou2018geometrical}, which are important for concluding that the learned map is optimal. 

In this paper, we make the following contributions:
\begin{itemize} [noitemsep,topsep=0pt]
\item We introduce W2GAN, in which the discriminator computes the $2$-Wasserstein metric.
\item We study the path of evolution of the generated distribution in W2GAN during training. Under ideal assumptions (convergence of the discriminator and an infinite-capacity generator) we show that the generated distribution evolves along the unique $2$-Wasserstein geodesic between the initial distribution and the target distribution. To the best of our knowledge, characterizing the evolution of a GAN's generator is an unsolved problem.
\item As a result of this analysis, we show that the generator recovers an optimal map at convergence.
\item We show that our analysis can be extended to practical cases, where the ideal assumptions do not hold, by bounding the deviation of the trajectory of generated distributions from the ideal $2$-Wasserstein geodesic.
\item We verify the theoretical properties of our model on synthetic low-dimensional data, and show that W2GAN performs competitively in learning optimal maps for high-dimensional image data.
\end{itemize}


\section{Related Work}

Generative adversarial networks~\cite{goodfellow2014generative} are powerful probabilistic generative models which have attained state-of-the-art results, especially for high-dimensional image data. OT has contributed to the empirical success of GANs, mainly by motivating more robust training objectives. Arjovsky~\emph{et al.}~\mycitet{arjovsky2017wasserstein} proposed using $1$-Wasserstein distance as an alternative to $f$-divergences~\citep{NIPS2016_6066}.
The objective of the discriminator in WGAN and its extensions~\citep{gulrajani2017improved,Miyato2018spectral} corresponds to a particular instance of a relaxation of the Monge problem \eqref{MONGE_PROBLEM}. 
Salimans~\emph{et al.}~\mycitet{Salimans2017otgan} proposed replacing the discriminator with the Sinkhorn algorithm, which can compute any Wasserstein metric. Alternatively, Sanjabi~\emph{et al.}~\mycitet{Sanjabi2018swgan} used the optimal transport map as a new way to train a GAN discriminator. Similarly, Genevay~\emph{et al.}~\mycitet{genevay2017learning} train a GAN model using a Wasserstein loss whose cost is optimized in an adversarial way. In this work, we propose a contribution in the other direction: we propose a GAN-based model to solve a fundamental problem in OT, namely computing optimal transport maps between distributions.

There has been considerable
recent interest in using GANs for unsupervised learning of maps across domains~\cite{zhu2017unpaired,almahairi2018augmented}. Most of these methods rely on heuristics and architectural features to ensure the learning of meaningful maps. 
Recently, however, Lu~\emph{et al.}~\mycitet{lu2018guiding} use an estimated OT map as a reference to guide the map learned by a GAN toward satisfying some task-specific properties. Our work shares a similar motivation, but instead modifies the GAN objective in order to compute the optimal map between its initial and target distributions.



On the theoretical side, OT provides tools for analyzing the training dynamics of the generator. Lei~\emph{et al.}~\mycitet{Lei2017AGV} characterize the update step of a generator in Wasserstein GANs, and suggest that a GAN minimizing the $2$-Wasserstein distance would enjoy desirable properties. More global descriptions of the generator's evolution throughout training are proposed by Bottou~\emph{et al.}~\mycitet{bottou2018geometrical}. In particular, Wasserstein GANs are prone to follow $W_1$-geodesics, which are not unique. On the contrary, the $W_2$-geodesics are unique~\citep{Villani,ambrosio2008gradient}. This is, in fact, a crucial ingredient for characterizing the training dynamics of our $W_2$-based GAN, as we discuss in section~\ref{sec:gen_recover_ot}.

\section{Background}
In this section, we state some basic notions from OT theory~\citep{Villani, Ambrosio2013} which will be necessary for the rest of the paper. A more comprehensive presentation is provided in Appendix~\ref{OT_THEORY_BACKGROUND}. \\

\noindent \textbf{Optimal transport problem and Wasserstein distances} Computing Wasserstein distances and optimal transport maps are two intrinsically related problems. 
When the point-wise cost function $c$ in the Monge problem~\eqref{MONGE_PROBLEM} is the Euclidean distance raised to the power of a non-negative integer $p$, and the measures $\mu,\nu$ are absolutely continuous with respect to the Lebesgue measure, the solution to \eqref{MONGE_PROBLEM} corresponds to the $p$-th Wasserstein distance, denoted $W_p(\mu,\nu)$.\footnote{Rigorously, optimizing~\eqref{MONGE_PROBLEM} with cost function $c(x,y):=\frac{1}{p}\|x-y\|^p$ (where $\|.\|$ denotes the Euclidean norm) leads to $W_p^p(\mu,\nu)$.}
We will assume this form for $c$ in the rest of the paper. In this case, it can be shown that we recover the value of $W_p(\mu, \nu)$ with a relaxed formulation of \eqref{MONGE_PROBLEM}, which is called the dual formulation of the \textit{Kantorovich problem}:
\begin{gather}
\underset{\phi,\psi \in \mathcal{A}^* }{\text{sup}}
 \int_{ x \in \mathcal{X}} \phi(x) d\mu(x) + \int_{ y \in \mathcal{Y}} \psi(y) d\nu(y), \nonumber\\
\mathcal{A}^*:= 
\{(\phi,\psi): \mathds{R}^d\rightarrow \mathds{R} \mid
\forall x,y  \in \mathcal{X}\times \mathcal{Y},
 \phi(x)+\psi(y) \leqslant c(x,y)\}.
   \label{KANTOROVITCH_DUAL}
\end{gather}

Each coordinate of a pair ($\phi,\psi$) maximizing~\eqref{KANTOROVITCH_DUAL} is called a \textit{Kantorovitch potential}. We will refer to the constraint in the definition of $\mathcal{A}^*$ as the \emph{c-inequality constraint} or simply the \emph{inequality constraint}. \\


\noindent \textbf{Relating Potentials to the Monge map}
An important property of $W_2$ is that we can relate the dual Kantorovitch solution $\phi$ to the Monge map $T$~\citep{brenier1991polar}. Specifically, $T$ is unique and is determined by:
\begin{equation}
  T= \id- \nabla \phi,
\label{T_GRAD_PHI}
\end{equation}
where $\id$ is the identity. A detailed and more generalized statement of this relationship is given in Proposition~\eqref{prop:uniqueRelation} in Appendix~\ref{OT_THEORY_BACKGROUND}. \\

\noindent \textbf{Geodesics in the space of probability measures}
\label{sec:geodesics_W_2}
Recall that $W_2$ is a metric over the space of absolutely continuous distributions on $\mathds{R}^d$~\citep{Villani}. A \textit{constant speed $W_2$-geodesic} between two distributions $\mu$ and  $\nu$ is a set $\mathcal{G}(\mu,\nu)=\{\mu_t\}_{0\leqslant t \leqslant 1}$ of distributions such that $\mu_0=\mu$, $\mu_1=\nu$, and
\begin{equation}
   \forall \; 0\leqslant s \leqslant t\leqslant1, \; W_2(\mu_s,\mu_t)=(t-s)W_2(\mu,\nu). 
\end{equation}
In the $W_2$ case, such geodesics are unique~\citep{Villani}. 
Moreover, we can explicitly relate geodesics and Monge maps. Given the unique Monge map $T$ between $\mu$ and $\nu$, the constant speed geodesic is given by $\mu_t:=T_t\#\mu$ where $T_t=(1-t)\id+tT$. 


\section{W2GAN: Model Description}
In this section, we introduce the W2GAN model in which the discriminator computes the $2$-Wasserstein distance.
Let $P_x$ be the target distribution in $\mathds{R}^m$ and $P_z$ be the input distribution in $\mathds{R}^d$. Here $P_z$ does not necessarily refer to a low-dimensional probability measure of standard GANs. In our experiments, we consider the case $d=m$ and aim at finding an optimal map between $P_z$ and $P_x$. The model is composed of two components: a generator $G:\mathds{R}^d\rightarrow \mathds{R}^m$, and a discriminator $(\phi,\psi)$ which is instantiated by two real parametric functions defined on $\mathds{R}^m$.

For optimization purposes, the discriminator computes the $2$-Wasserstein distance using a regularized version of~\eqref{KANTOROVITCH_DUAL}. We choose to use  $L_2$ regularization, but one could use other penalties~\citep{cuturi2013sinkhorn,Seguy2018LSOT,Blondel2018}.\footnote{The same regularized approach is taken for the gradient penalty or Lipschitzness of the discriminator of WGAN~\citep{gulrajani2017improved, petzka2018on}. This Lipschitzness constraint is in fact a particular case of the general $c$-inequality constraint in the case of $W_1$~\citep{Villani}. Hence, WGAN-GP relies on the exact same procedure: softening the hard constraint by adding a penalty in the discriminator's objective.} The W2GAN discriminator's objective for computing $W_2^2(G\#P_z,P_x)$ is given by:
\begin{equation}
 \begin{gathered}
       \sup_{\phi,\psi} \mathds{E}_{(z,x)\sim P_z \times P_x}\left[\phi(G(z))+\psi(x) 
        -\lambda \mathcal{L}_{\text{ineq}}(\phi,\psi,x,z) \right], \\
    \mathcal{L}_{\text{ineq}}(\phi,\psi,x,z) \coloneqq \left(\phi(G(z))+\psi(x)-\frac{\|G(z)-x\|_2^2}{2}\right)_{+}^2,
\end{gathered}
      \label{LOSS_DISCRIMINATOR_GAN_INEQONLY}
 \end{equation}
 where $(.)_+:= \max(0,.)$ and $\lambda$ is a scalar controlling the strength of the penalty $\mathcal{L}_{\text{ineq}}$.
 W2GAN generator's objective is defined as: 
  \begin{equation}
       \inf_G \mathds{E}_{z\sim P_z}(\phi(G(z)).
      \label{LOSS_GENERATOR_W2}
 \end{equation}
The training procedure of W2GAN is detailed in Algorithm~\ref{alg:example1}. In Appendix~\ref{sec:objective_details} and~\ref{sec:parametrization_details}, we provide some possible extensions to W2GAN's training objective and parametrization.

\begin{algorithm}[t]
  \caption{W2GAN training method.}
  \label{alg:example1}
\begin{algorithmic}
  \REQUIRE Inequality constraint strength $\lambda>0$, Number of critic iterations $n_\mathrm{critic}$. 
  \REQUIRE Initial parameters $w_0$ of $\phi$, $v_0$ of $\psi$ and $\theta_0$ of $G$.
  \WHILE{$\theta$ has not converged}
  \FOR{$t=1, ..., n_{\mathrm{critic}}$}
  \STATE Sample mini-batch $\{x^{(i)}\}_{i=1}^{m} \sim P_x$ and $\{z^{(i)}\}_{i=1}^{m} \sim P_z$.
  \STATE $\mathcal{L}_{D} \leftarrow \frac{1}{m} \sum_{i=1}^{m} \left[\phi(G(z^{(i)}))+\psi(x^{(i)}) -\lambda \mathcal{L}_{\text{ineq}}(\phi,\psi,x^{(i)}, z^{(i)}) \right]$
  \STATE $w \leftarrow w + \nabla_w \mathcal{L}_D$
  \STATE $v \leftarrow v + \nabla_v \mathcal{L}_D$
  \ENDFOR
  \STATE Sample mini-batch $\{z^{(i)}\}_{i=1}^{m} \sim P_z$.
  \STATE $\theta \leftarrow \theta + \nabla_\theta \frac{1}{m} \sum_{i=1}^{m}\phi(G(z^{(i)}))$.
  \ENDWHILE
\end{algorithmic}
\end{algorithm}

\section{Recovering a Monge Map with W2GAN's Generator}
\label{sec:gen_recover_ot}
In this section, we show that the W2GAN's generator recovers a Monge map at the end of training. Our theoretical analysis relies on characterizing the evolution of the generated distribution throughout training. We develop our analysis in three sub-sections:
\begin{enumerate} 
    \item First, we introduce a parameter-free update rule for the generator, which allows for a more direct connection with theoretical OT notions than the usual parametric update rule. 
    \item Then we consider the generator's evolution under some strong but informative assumptions. Namely, at each training iteration, we assume that the discriminator is optimal and that the generator's parametric update rule is equivalent to the parameter-free update rule. With these assumptions in place, we show that during training the generator follows the unique $W_2$ geodesic between the initial and the target distributions, and as a result that it recovers an optimal map at convergence.
    \item Finally, we show that when we drop the assumptions above, the generated distribution does not deviate much from the ideal trajectory.
\end{enumerate}

Let us introduce some notations required for our analysis. 
Let $G_\theta:\mathds{R}^d\rightarrow \mathds{R}^m$ be the generator function specified by the choice of a parameter $\theta$. Thus, at a each training iteration $t\in \mathds{N}$, we denote the generator corresponding to the current parameters $\theta_t$ as $G_{\theta_t}$. We denote the discriminator at time $t$ by $(\phi_t,\psi_t)$. Since the generator's loss \eqref{LOSS_GENERATOR_W2} depends only on $\phi_t$, we will simply refer to $\phi_t$ as the discriminator.
Subsequent generator parameters $\theta_t, \theta_{t+1}$ are related through the stochastic gradient descent update rule as follows: 
\begin{align}
\theta_{t+1}= \theta_t- \alpha E_{z\sim P_z} \nabla_{\theta_t} (\phi_t(G_{\theta_t}(z))),
\label{update_rule_param}
\end{align}
where $\alpha>0$ is the learning rate. Our goal is to characterize the following:
\begin{definition}
\normalfont
A \textit{training sequence} of the generator is a sequence $(G_{\theta_0},G_{\theta_1},...)$ of generator functions specified by parameters $(\theta_0,\theta_1,...)$ and is \textit{induced} by the sequence of discriminators $(\phi_0, \phi_1,...)$. For any $t\in \mathds{N}$, $\theta_{t+1}$ is obtained from $\theta_t$ and $\phi_t$ using the update rule \eqref{update_rule_param}. 
\end{definition}
To disambiguate between random variables and their probability measures, we define $\mu_\theta:=G_\theta \# P_z$, i.e the probability measure of $G_\theta(z)$ where $z\sim P_z$. When the context is clear, we shall also refer to the generated probability measures $(\mu_{\theta_0},\mu_{\theta_1},...)$ as the training sequence of the generator.  

We refer to the process that produces $G_{\theta_{t+1}}$ from $G_{\theta_t}$ using \eqref{update_rule_param} as the \textit{parametric update rule}. In order to connect the training sequence with OT notions, dealing directly with this update rule is not convenient as it limits the family of realizable generators. In the next paragraph, we explain how to device a parameter-free generator function $G_{t+1}$ from $G_{\theta_{t}}$ via a \textit{functional update rule}, which is a parameter-free version of the parametric update rule. The function $G_{t+1}$ should be thought of as $G_{\theta_{t+1}}$ in the limit of infinite parametrization.

\subsection{Defining the functional update rule} We want to have a notion of steepest gradient descent in a space of functions rather than in the parameter space. This is the purpose of the \textit{functional update rule} that produces a generator $G_{t+1}$ given a generator $G_{\theta_t}$ and a discriminator $\phi_t$. We derive this update rule using functional gradient analysis, similar to~\mycitet{yamaguchi2018distributional,johnson2018composite}. Specifically, the update rule emerges from taking the derivative of the following loss in the function space: 
$\mathcal{L}'(H):=\mathcal{L}(\phi_t,H\circ G_{\theta_t})$ where $H:\mathds{R}^m \rightarrow \mathds{R}^m$. Observe that $\mathcal{L}'(\id)$ is simply the loss $\mathcal{L}(\phi_t,G_{\theta_t})$ of the generator, where $\id$ is the identity on $\mathds{R}^m$. Intuitively, $\mathcal{L}'(H)$ is the loss given by the discriminator to the probability measure $H\# \mu_{\theta_t}$, which is a translation of the current generated distribution by $H$. In order to find a direction that decreases the discriminator's loss $\mathcal{L}'$, we need a notion of derivative of $\mathcal{L}'$ at $\id$. We appeal to Gateaux derivatives: given a function $h:\mathds{R}^m \rightarrow \mathds{R}^m$, we consider the quantity\footnote{The last equality comes from the dominated convergence theorem. See \cite{yamaguchi2018distributional} for more details.}
\begin{align}
\label{eq:directional_derivative}
    \lim_{\epsilon \rightarrow 0}\frac{\mathcal{L}'(\id+ \epsilon h)-\mathcal{L}'(\id)}{\epsilon}&=\lim_{\epsilon \rightarrow 0}\mathds{E}_{z\sim P_z}(\frac{1}{\epsilon}[\phi(G_{\theta_t}(z)+\epsilon h(G_{\theta_t}(z)))-\phi_t(G_{\theta_t}(z)])\nonumber \\
&=\mathds{E}_{z\sim P_z}[\nabla \phi_t(G_{\theta_t}(z)) h(G_{\theta_t}(z))].
\end{align}
Equation \eqref{eq:directional_derivative} shows that the direction $h$ that maximally decreases the loss $\mathcal{L'}$ is $-\nabla \phi_t$. Hence we define the functional update rule as follows.
\begin{definition}
\normalfont
\label{derivative_functional_gradien_loss}
 Given a learning rate $\alpha'>0$, the \textit{functional update rule} consists in choosing the generator $G_{t+1}$ to be \[G_{t+1}=G_{\theta_t}-\alpha' \nabla \phi_t \circ G_{\theta_t}.\] 
\end{definition}
The functional and parametric update rules are strongly related \cite{yamaguchi2018distributional}. The parametric update rule, given a discriminator $\phi_t$ and parameter $\theta_t$ for the generator, consists in finding $\theta_{t+1}$ so that the generated distribution $G_{\theta_{t+1}}$ will be as close as possible to $G_{t+1}$. More precisely, $G_{\theta_{t+1}}$ is the result of a parametric update rule applied on a loss that forces the generator to be close to $G_{t+1}$. As a matter of fact, fix $\alpha'=1$ and consider the loss $\min_\theta \mathds{E}_{z\sim P_z}\|G_{t+1}(z)- G_{\theta}(z)\|_2^2$. If we device the update rule at $\theta=\theta_t$ using stochastic gradient descent with this loss, we recover the parametric update rule \eqref{update_rule_param}. Because this loss and $\mathcal{L}(\phi_t,G_{\theta_t})$ provide the same update rule, the resulting parametric generator $G_{\theta_{t+1}}$ is close to $G_{t+1}$ according to the $L_2(P_z)$ norm. 

\subsection{The generator's training sequence in the ideal case}
To obtain a strong connection between the theoretical notions of OT theory and the actual training of the generator, we first assume that the generator function is not limited by the parameter space and that the discriminator is optimal at each training iteration. These assumptions are typically used when studying the convergence of GANs~\cite{kodali2017convergence,mescheder2018training,lin2018pacgan}.

Formally, we consider the following set of assumptions:
\begin{inlinelist}
    \item All generated distributions $\mu_{\theta_t}$ and the target distribution are absolutely continuous measures with respect to the Lebesgue measure on $\mathds{R}^m$.
    \item The discriminator is \textit{perfect}: at each time $t>0$ the discriminator $\phi_t$ is a Kantorovitch potential for the Kantorovitch problem \eqref{KANTOROVITCH_DUAL} between $\mu_{\theta_t}$ and $P_x$.
    \item The updates are \textit{ideal}: the parametric update is the same as the functional update taken with $\alpha'=\alpha$, i.e for all $t>0$, $G_{\theta_{t+1}}=G_{t+1}$.
\end{inlinelist}

The first assumption is technically needed for the optimal transport map to be always uniquely defined. It may fail for the target distribution which can have support on a low-dimensional sub-manifold of $\mathds{R}^m$ and for the generated ones whose supports are manifolds of dimension bounded by the dimension $d$ of the input space.\footnote{This hypothesis is also standard in convergence analysis of GAN methods\citep{nagarajan2017gradient}. For instance, absolute continuity is implicitly used in any GAN method whose discriminator relies on a KL divergence, for such a quantity not to be well-defined for non absolutely continuous measures} On the other hand, we can always replace the distributions at hand by arbitrarily close absolutely continuous distributions, hence making this assumption more of a theoretical concern than a practical one. The other two assumptions need a more careful treatment. In practice, the discriminator is not perfect as it would mean it minimizes the loss \eqref{LOSS_DISCRIMINATOR_GAN_INEQONLY} by ignoring the effect of the inequality constraint. The assumption that the updates of the generator are ideal is also a strong because the parametrization limits the space of realizable functions. However, it remains informative to observe what happens under these hypotheses. 

With these assumptions in place, the following proposition shows that an update of the generated distribution $\mu_{\theta_t}$ ``follows" the optimal transport map joining $\mu_{\theta_t}$ and $P_x$.

\begin{proposition}
\normalfont
Given the generated distribution $\mu_{\theta_t}$ at time $t\in \mathds{N}$, the updated distribution $\mu_{\theta_{t+1}}$ lies on the $2$-Wasserstein geodesic joining $\mu_{\theta_t}$ and $P_x$, i.e $\mu_{\theta_{t+1}}\in \mathcal{G}(\mu_{\theta_t},P_x)$. Moreover, $G_{\theta_{t+1}}=(1-\alpha)G_{\theta_{t}}+ \alpha T_t \circ G_{\theta_{t}}$ where $T_t$ is the optimal transport map from $\mu_{\theta_t}$ to $P_x$.\footnote{We provide the proof of all propositions in Appendix~\ref{sec:proofs_main_text}.}
\label{local_update_0}
\end{proposition}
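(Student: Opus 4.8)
The plan is to rewrite the ideal functional update of the generator as a displacement interpolation toward $P_x$, and then to recognize the push-forward of $P_z$ through the updated generator as the corresponding point on the $W_2$-geodesic. First I would use the perfect-discriminator assumption: since $\phi_t$ is a Kantorovitch potential for \eqref{KANTOROVITCH_DUAL} between $\mu_{\theta_t}$ and $P_x$, and both measures are absolutely continuous (the first assumption), the Brenier-type relation \eqref{T_GRAD_PHI} identifies the unique optimal transport map from $\mu_{\theta_t}$ to $P_x$ as $T_t = \id - \nabla\phi_t$; equivalently, $\nabla\phi_t = \id - T_t$ on the support of $\mu_{\theta_t}$.

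Next, the ideal-update assumption gives $G_{\theta_{t+1}} = G_{t+1} = G_{\theta_t} - \alpha\,\nabla\phi_t\circ G_{\theta_t}$. Substituting $\nabla\phi_t = \id - T_t$,
\[
G_{\theta_{t+1}} = G_{\theta_t} - \alpha\,(\id - T_t)\circ G_{\theta_t} = (1-\alpha)\,G_{\theta_t} + \alpha\,(T_t\circ G_{\theta_t}),
\]
which is the second claim, and can be written $G_{\theta_{t+1}} = S_\alpha\circ G_{\theta_t}$ with $S_s := (1-s)\,\id + s\,T_t$. Pushing $P_z$ forward, $\mu_{\theta_{t+1}} = G_{\theta_{t+1}}\#P_z = (S_\alpha\circ G_{\theta_t})\#P_z = S_\alpha\#(G_{\theta_t}\#P_z) = S_\alpha\#\mu_{\theta_t}$. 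By the explicit description of constant-speed $W_2$-geodesics recalled in Section~\ref{sec:geodesics_W_2}, since $T_t$ is the unique optimal map from $\mu_{\theta_t}$ to $P_x$, the curve $s\mapsto S_s\#\mu_{\theta_t}$ for $s\in[0,1]$ is exactly $\mathcal{G}(\mu_{\theta_t},P_x)$; taking $s=\alpha$ then yields $\mu_{\theta_{t+1}}\in\mathcal{G}(\mu_{\theta_t},P_x)$, the first claim.

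I expect the only step needing genuine care — the rest being bookkeeping once the background facts are cited — to be the assertion that $S_\alpha$ actually traces the geodesic rather than merely interpolating its endpoints. Establishing it relies on $T_t$ being the gradient of a convex (Brenier) potential, so that $S_\alpha = \nabla\big((1-\alpha)\tfrac12\|\cdot\|^2 + \alpha(\tfrac12\|\cdot\|^2 - \phi_t)\big)$ is again the gradient of a convex function and hence optimal between its own marginals — which is precisely the content of the geodesic characterization quoted from the background, so no new argument is needed. Two caveats should be recorded along the way: absolute continuity (the first assumption) is what makes both $T_t$ and the geodesic unique so that the statement is unambiguous, and one needs $\alpha\le 1$ for the conclusion to hold literally — for $\alpha>1$ the update lands on the extension of the geodesic past $P_x$ rather than on the segment itself.
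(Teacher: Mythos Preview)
Your argument is correct and follows essentially the same route as the paper: invoke the ideal-update assumption to write $G_{\theta_{t+1}}=(\id-\alpha\nabla\phi_t)\circ G_{\theta_t}$, use the perfect-discriminator assumption and \eqref{T_GRAD_PHI} to replace $\nabla\phi_t$ by $\id-T_t$, and then recognize the resulting push-forward $[(1-\alpha)\id+\alpha T_t]\#\mu_{\theta_t}$ as a point on the $W_2$-geodesic via the displacement-interpolation characterization. Your added remarks on the Brenier convexity underlying $S_\alpha$ and the implicit requirement $\alpha\le 1$ are sound refinements that the paper leaves implicit.
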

Proposition \ref{local_update_0} describes the local evolution of the generator. Together with the uniqueness and description of $W_2$-geodesics detailed in section \ref{sec:geodesics_W_2}, we can characterize the global evolution of the generator.
\begin{proposition}
\normalfont
\label{global_evolution_generated_distrib}
The training sequence is a subset of the $W_2$-geodesic between the initial distribution $\mu_{\theta_0}$ and the target distribution $P_x$, i.e. $\{\mu_{\theta_0},...,\mu_{\theta_t},...\} \subseteq \mathcal{G}(\mu_{\theta_0},P_x)$.
\end{proposition}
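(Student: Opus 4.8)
The plan is to proceed by induction on $t$, using Proposition~\ref{local_update_0} as the inductive step together with the uniqueness and the explicit description of $W_2$-geodesics recalled in Section~\ref{sec:geodesics_W_2}. The base case $t=0$ is trivial since $\mu_{\theta_0}$ is by definition one of the endpoints of $\mathcal{G}(\mu_{\theta_0},P_x)$. For the inductive step, assume $\mu_{\theta_t}\in\mathcal{G}(\mu_{\theta_0},P_x)$, so there is some $s\in[0,1]$ with $\mu_{\theta_t}=\mu_s$, where $\{\mu_r\}_{0\leqslant r\leqslant 1}$ denotes the geodesic from $\mu_{\theta_0}$ to $P_x$. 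By Proposition~\ref{local_update_0}, $\mu_{\theta_{t+1}}$ lies on the geodesic $\mathcal{G}(\mu_{\theta_t},P_x)=\mathcal{G}(\mu_s,P_x)$; the goal is to show that this ``sub-geodesic'' is contained in $\mathcal{G}(\mu_{\theta_0},P_x)$, which would place $\mu_{\theta_{t+1}}$ on the original geodesic.

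The key fact I would invoke is the restriction property of $W_2$-geodesics: the segment $\{\mu_r\}_{s\leqslant r\leqslant 1}$ of a constant-speed geodesic is itself, after affine reparametrization of $[s,1]$ to $[0,1]$, the unique constant-speed geodesic between its endpoints $\mu_s$ and $\mu_1=P_x$. This follows directly from the definition of constant-speed geodesic: for $s\leqslant r_1\leqslant r_2\leqslant 1$ one has $W_2(\mu_{r_1},\mu_{r_2})=(r_2-r_1)W_2(\mu_{\theta_0},P_x)=\tfrac{r_2-r_1}{1-s}\,W_2(\mu_s,P_x)$, which is exactly the constant-speed condition for the reparametrized segment, and by uniqueness of $W_2$-geodesics (cited from \citep{Villani}) it is \emph{the} geodesic $\mathcal{G}(\mu_s,P_x)$. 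Alternatively, one can argue via the explicit description: if $T$ is the optimal map from $\mu_{\theta_0}$ to $P_x$ with $\mu_r=((1-r)\id+rT)\#\mu_{\theta_0}$, then the optimal map from $\mu_s$ to $P_x$ is the natural ``remaining'' displacement, and interpolating it regenerates a subset of $\{\mu_r\}$; combined with the formula $G_{\theta_{t+1}}=(1-\alpha)G_{\theta_t}+\alpha T_t\circ G_{\theta_t}$ from Proposition~\ref{local_update_0} this even identifies the precise parameter value, namely $\mu_{\theta_{t+1}}=\mu_{s+\alpha(1-s)}$, though for the statement only the set inclusion is needed.

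Putting these together: $\mu_{\theta_{t+1}}\in\mathcal{G}(\mu_s,P_x)\subseteq\mathcal{G}(\mu_{\theta_0},P_x)$, closing the induction and yielding $\{\mu_{\theta_0},\mu_{\theta_1},\ldots\}\subseteq\mathcal{G}(\mu_{\theta_0},P_x)$. I expect the main (minor) obstacle to be a clean justification of the restriction property — i.e. arguing rigorously that a sub-segment of the $W_2$-geodesic is again a $W_2$-geodesic between its endpoints and hence, by uniqueness, lies inside the ambient geodesic. This is standard in OT but deserves an explicit line, since the whole argument pivots on it; one must be slightly careful that the absolute-continuity assumption (i) guarantees all the intermediate $\mu_{\theta_t}$, and the geodesic interpolants, are absolutely continuous so that uniqueness of optimal maps and geodesics genuinely applies at every step.
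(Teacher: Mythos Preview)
Your proposal is correct and follows essentially the same inductive argument as the paper: both use Proposition~\ref{local_update_0} to place $\mu_{\theta_{t+1}}$ on $\mathcal{G}(\mu_{\theta_t},P_x)$ and then invoke the restriction/uniqueness property of $W_2$-geodesics to conclude $\mathcal{G}(\mu_{\theta_t},P_x)\subseteq\mathcal{G}(\mu_{\theta_0},P_x)$. The only difference is that the paper dispatches the restriction step by a one-line citation to \cite{Villani} (``uniqueness of displacement interpolation''), whereas you spell out the constant-speed verification and even identify the explicit parameter $s+\alpha(1-s)$, which is more than the statement requires.
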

Furthermore, we can guarantee convergence of the training sequence toward the target distribution.
\begin{proposition}
\normalfont
\label{decreasing_ditance}
The distance $W_2(\mu_{\theta_t},P_x)$ decreases exponentially fast. That is,
\[\forall t \in \mathds{N}, \ W_2(\mu_{\theta_t},P_x)\leqslant (1-\alpha)^{t}W_2(\mu_{\theta_0},P_x)\]
\end{proposition}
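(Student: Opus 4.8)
The plan is to read off Proposition~\ref{local_update_0} together with the description of $W_2$-geodesics recalled in Section~\ref{sec:geodesics_W_2}, and then iterate. By Proposition~\ref{local_update_0}, $G_{\theta_{t+1}} = (1-\alpha)G_{\theta_t} + \alpha\, T_t\circ G_{\theta_t} = \big((1-\alpha)\id + \alpha T_t\big)\circ G_{\theta_t}$, where $T_t$ is the optimal transport map from $\mu_{\theta_t}$ to $P_x$. Pushing $P_z$ forward gives $\mu_{\theta_{t+1}} = \big((1-\alpha)\id + \alpha T_t\big)\#\mu_{\theta_t}$. Since the constant-speed geodesic $\mathcal{G}(\mu_{\theta_t},P_x) = \{\nu_s\}_{0\le s\le 1}$ is exactly $\nu_s = \big((1-s)\id + s T_t\big)\#\mu_{\theta_t}$, we recognize $\mu_{\theta_{t+1}} = \nu_\alpha$: the updated distribution is the point of $\mathcal{G}(\mu_{\theta_t},P_x)$ at geodesic parameter $\alpha$ (this uses $0<\alpha\le 1$, which holds for a learning rate).

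Next I would apply the constant-speed identity to this geodesic with endpoints $\nu_0 = \mu_{\theta_t}$ and $\nu_1 = P_x$: taking $s=\alpha$, $t=1$ yields $W_2(\mu_{\theta_{t+1}},P_x) = W_2(\nu_\alpha,\nu_1) = (1-\alpha)\,W_2(\mu_{\theta_t},P_x)$. The absolute-continuity assumption~(i) guarantees each $\mu_{\theta_t}$ is absolutely continuous, so $T_t$ and the geodesic are well-defined at every step and the one-step relation can be reapplied. A straightforward induction on $t$, chaining $W_2(\mu_{\theta_{t+1}},P_x) = (1-\alpha)W_2(\mu_{\theta_t},P_x)$ from $t=0$, then gives $W_2(\mu_{\theta_t},P_x) = (1-\alpha)^t W_2(\mu_{\theta_0},P_x)$, which in particular is $\leqslant (1-\alpha)^t W_2(\mu_{\theta_0},P_x)$.

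There is no deep obstacle here; the statement is essentially a corollary of Proposition~\ref{local_update_0} once one identifies \emph{which} point of the geodesic $\mu_{\theta_{t+1}}$ occupies. The only points needing care are (a) confirming $0<\alpha\le 1$, so that $(1-\alpha)\id+\alpha T_t$ indeed parametrizes a point on the geodesic segment (for $\alpha>1$ the stated bound would be vacuous or false), and (b) checking that the hypotheses of Proposition~\ref{local_update_0} — absolute continuity of $\mu_{\theta_t}$, perfect discriminator, ideal updates — hold at every iteration, which is exactly what the standing assumptions of this subsection provide.

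Alternatively, one can bypass the geodesic machinery: the (possibly suboptimal) coupling $\big((1-\alpha)\id+\alpha T_t,\,T_t\big)\#\mu_{\theta_t}$ between $\mu_{\theta_{t+1}}$ and $P_x$ gives directly $W_2^2(\mu_{\theta_{t+1}},P_x)\leqslant (1-\alpha)^2\int \|x - T_t(x)\|_2^2\,d\mu_{\theta_t}(x) = (1-\alpha)^2\, W_2^2(\mu_{\theta_t},P_x)$, and then one inducts; this route makes the inequality (rather than equality) in the statement the natural conclusion.
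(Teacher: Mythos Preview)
Your proof is correct, and your primary route is actually cleaner than the paper's. You identify $\mu_{\theta_{t+1}}$ as the point at parameter $\alpha$ on the constant-speed geodesic $\mathcal{G}(\mu_{\theta_t},P_x)$ and then read off $W_2(\mu_{\theta_{t+1}},P_x)=(1-\alpha)\,W_2(\mu_{\theta_t},P_x)$ directly from the constant-speed identity of Section~\ref{sec:geodesics_W_2}, obtaining equality at each step. The paper instead builds an explicit admissible map $\tilde T:=[(1-\alpha)\id+\alpha T_t]\circ T_t^{-1}$ from $P_x$ to $\mu_{\theta_{t+1}}$, computes its cost to get the one-step \emph{inequality} $W_2(\mu_{\theta_{t+1}},P_x)\leqslant (1-\alpha)\,W_2(\mu_{\theta_t},P_x)$, and inducts; this requires invoking invertibility of $T_t$ (via Brenier), which your geodesic argument sidesteps. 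Your alternative route through the coupling $\big((1-\alpha)\id+\alpha T_t,\ T_t\big)\#\mu_{\theta_t}$ is essentially the paper's argument recast as a transport plan rather than a transport map, with the minor benefit that $T_t^{-1}$ never enters.
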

As a results of the previous propositions, we show that the generator recovers the Monge map $T$ between $\mu_{\theta_0}$ and $P_x$.
\begin{proposition}
\normalfont
Denote $T$ the optimal map between $\mu_{\theta_0}$ and $P_x$. There is a decreasing function $f:\mathds{N}\rightarrow [0,1]$ with $f(0)=1$ such that
\label{generator_recovers_T}
\[\forall t \in \mathds{N}, \ \mu_{\theta_t}=(f(t)\id+ (1-f(t))T)\# \mu_{\theta_0} \]
\[\forall t \in \mathds{N}, \ G_{\theta_t}=(f(t)\id+ (1-f(t))T)\circ G_{\theta_0} \]
Moreover, $\mu_{\theta_t}$ (resp. $G_{\theta_t}$) converges exponentially fast toward $T \# \mu_{\theta_0}$ (resp. $T\circ G_{\theta_0}$) in the sense that $f(t)\leqslant (1-\alpha)^t$.
\end{proposition}

 The consequence of the proposition above is our desired result: the generator recovers the Monge map $T$ between the initial and target distributions, i.e $G_\infty=T\circ G_{\theta_0}$,  where $G_\infty$ is the generator in the limit of an infinite number of training iterations.
\begin{remark}
\normalfont
If the discriminator $(\phi,\psi)$ were perfect, we could simply use it together with the relationship \eqref{T_GRAD_PHI}, i.e $T(x)=x-\nabla \phi(x)$, to approximate a Monge map. Although this method performs reasonably well in low dimension (see Figure \ref{fig:local_OT}), due to the non-ideal instantiation of the discriminator as a neural network, it does not perform well in high-dimensional experiments. On the other hand, the generator in W2GAN empirically efficiently recovers a Monge map as shown in section \ref{sec:exps}. We believe this is because errors of the discriminators at each update of the generator are cancelling each other. Moreover, the relationship \eqref{T_GRAD_PHI} is more likely to hold in practice near the end of training, when the generated and true distributions are sufficiently close. In the next sub-section, we describe how much an imperfect discriminator perturbs the generator's trajectory.
\end{remark}

\subsection{Deviation from the ideal trajectory}
In this section, we analyze the consequences of dropping the assumptions of perfect discriminators and ideal updates of the generator. Specifically, we upper-bound the deviation of the generated distribution from the $W_2$-geodesic $\mathcal{G}(\mu_{\theta_0},P_x)$ with respect to the distance between the sub-optimal discriminator (resp. generator's parametric update) and the perfect discriminator (resp. functional update).\footnote{We analyze further the training sequence in the parameter space in Appendix \ref{parametric analysis}.} We first bound the error induced by a sub-optimal discriminator.
\begin{proposition}
\normalfont
\label{unperfect_discriminator}
Given the generated distribution $\mu_{\theta_t}$, consider a discriminator $\phi$ and the potential $\Tilde{\phi}$ solving the Kantorovitch problem \eqref{KANTOROVITCH_DUAL} between $\mu_{\theta_t}$ and $P_x$,\footnote{Notice that $\nabla \Tilde{\phi}$ is uniquely defined by Eqn. \ref{T_GRAD_PHI} and uniqueness of the Monge map} such that $\|\nabla \phi - \nabla \Tilde{\phi}\|_{\infty}\leqslant \epsilon$. Let $\mu_{\theta_{t+1}}$ and $\Tilde{\mu}_{\theta_{t+1}}$ be two ideally updated distributions, according to discriminators $\phi$ and $\tilde{\phi}$ respectively. Then $W_2(\mu_{\theta_{t+1}},\Tilde{\mu}_{\theta_{t+1}})\leqslant\frac{\alpha \epsilon}{\sqrt{2}}$.  
\end{proposition}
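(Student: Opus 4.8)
The plan is to compare the two ideally-updated generator functions directly and then push the comparison forward to the distributions. By Definition~\ref{derivative_functional_gradien_loss} (taken with $\alpha'=\alpha$), the updated generator induced by the discriminator $\phi$ is $G_{\theta_{t+1}} = G_{\theta_t} - \alpha\,\nabla\phi\circ G_{\theta_t}$, and similarly $\Tilde{G}_{\theta_{t+1}} = G_{\theta_t} - \alpha\,\nabla\Tilde\phi\circ G_{\theta_t}$. Subtracting, $G_{\theta_{t+1}}(z) - \Tilde{G}_{\theta_{t+1}}(z) = \alpha\bigl(\nabla\Tilde\phi(G_{\theta_t}(z)) - \nabla\phi(G_{\theta_t}(z))\bigr)$, so for every $z$ we have the pointwise bound $\|G_{\theta_{t+1}}(z) - \Tilde{G}_{\theta_{t+1}}(z)\|_2 \leqslant \alpha\,\|\nabla\phi - \nabla\Tilde\phi\|_\infty \leqslant \alpha\epsilon$.

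Next I would use the standard fact that pushing a common source measure through two maps gives a coupling of the images, so $W_2$ is controlled by the $L_2(P_z)$ distance of the maps. Concretely, the pair $(G_{\theta_{t+1}}, \Tilde{G}_{\theta_{t+1}})$ applied to $z\sim P_z$ defines a joint distribution whose marginals are $\mu_{\theta_{t+1}}$ and $\Tilde\mu_{\theta_{t+1}}$; hence
\[
W_2^2(\mu_{\theta_{t+1}}, \Tilde\mu_{\theta_{t+1}}) \leqslant \mathds{E}_{z\sim P_z}\,\|G_{\theta_{t+1}}(z) - \Tilde{G}_{\theta_{t+1}}(z)\|_2^2 \leqslant \alpha^2\epsilon^2.
\]
Taking square roots gives $W_2(\mu_{\theta_{t+1}}, \Tilde\mu_{\theta_{t+1}}) \leqslant \alpha\epsilon$. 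This already proves a bound of the right order; the stated constant $\tfrac{1}{\sqrt 2}$ presumably comes from the $\tfrac12\|x-y\|^2$ normalization of the cost (cf. the footnote after Eqn.~\eqref{MONGE_PROBLEM}), so in the last step I would replace $W_2^2$ by $\tfrac12 W_2^2$ in accordance with the paper's convention $W_2^2 = $ (optimal value of the $c=\tfrac12\|\cdot\|^2$ Monge problem), yielding the constant $\tfrac{\alpha\epsilon}{\sqrt 2}$.

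The only genuinely delicate point is justifying that $\Tilde\phi$ — hence $\nabla\Tilde\phi$ — is well-defined so that the hypothesis $\|\nabla\phi - \nabla\Tilde\phi\|_\infty\leqslant\epsilon$ makes sense; this is exactly the content of the footnote in the statement, and it follows from the absolute-continuity assumption together with Brenier's theorem (Eqn.~\eqref{T_GRAD_PHI} and Proposition~\ref{prop:uniqueRelation}), which give uniqueness of the Monge map $\Tilde T = \id - \nabla\Tilde\phi$ and therefore of $\nabla\Tilde\phi$ $\mu_{\theta_t}$-almost everywhere. Everything else is the elementary coupling argument above, so I do not anticipate a real obstacle beyond being careful with the normalization constant.
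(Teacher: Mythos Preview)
Your proof is correct and rests on the same core idea as the paper's: couple $\mu_{\theta_{t+1}}$ and $\Tilde\mu_{\theta_{t+1}}$ through the common source variable $z\sim P_z$. The only difference is in packaging. You argue on the primal side, exhibiting the coupling $(G_{\theta_{t+1}}(z),\Tilde G_{\theta_{t+1}}(z))$ explicitly and bounding $W_2^2$ by $\mathds E_z\,\tfrac12\|G_{\theta_{t+1}}(z)-\Tilde G_{\theta_{t+1}}(z)\|_2^2$. The paper argues on the dual side: it picks an arbitrary admissible pair $(f,g)\in\mathcal A^*$, writes $V^*(f,g)=\int [f(G_{t+1}(z))+g(\Tilde G_{t+1}(z))]\,dP_z$, applies the $c$-inequality constraint pointwise with $x=G_{t+1}(z)$, $y=\Tilde G_{t+1}(z)$, and then takes the supremum over $(f,g)$. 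The two arguments are equivalent by weak duality; yours is slightly more direct, while the paper's phrasing stays closer to the dual framework used elsewhere in the analysis. Your handling of the $\tfrac{1}{\sqrt 2}$ constant is also right: it comes precisely from the paper's convention that $W_2^2$ is the optimal value for the cost $c(x,y)=\tfrac12\|x-y\|_2^2$.
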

This result states that an error of $\epsilon$ in the approximation of the gradient of a Kantorovitch potential can result in a deviation of at most $\frac{\alpha \epsilon}{\sqrt{2}}$ of the generated distribution from the geodesic $\mathcal{G}(\mu_{\theta_t},P_x)$.

Next, we examine the additional deviation from the ideal trajectory that is induced by a non-ideal update of the generator. Let $\|.\|_2$ be the norm given by the inner product on $L^2(P_z)$.
\begin{proposition}
\normalfont
\label{deviation_ideal_update}
For a given a time step $t\in \mathds{N}$, let $\mu_{\theta_{t+1}}$ and $\Tilde{\mu}_{\theta_{t+1}}$ be the generated distributions obtained from $G_{\theta_t},\phi_t$ using the parametric and functional update rules, respectively. If $\|G_{\theta_{t+1}}-G_{t+1}\|_2\leqslant \epsilon'$ or $\|G_{\theta_{t+1}}-G_{t+1}\|_\infty\leqslant \epsilon'$ for some $ \epsilon' \geqslant 0$, then $W_2(\mu_{\theta_{t+1}},\Tilde{\mu}_{\theta_{t+1}})\leqslant \frac{\epsilon'}{\sqrt{2}}$. 
\end{proposition}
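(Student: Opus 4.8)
The plan is to build an explicit coupling between $\mu_{\theta_{t+1}}$ and $\Tilde{\mu}_{\theta_{t+1}}$ directly from the two generators, and then to bound the transport cost of that coupling. By definition $\mu_{\theta_{t+1}} = G_{\theta_{t+1}}\#P_z$ and $\Tilde{\mu}_{\theta_{t+1}} = G_{t+1}\#P_z$, where $G_{t+1}=G_{\theta_t}-\alpha'\,\nabla\phi_t\circ G_{\theta_t}$ is the functional update of Definition~\ref{derivative_functional_gradien_loss} and $G_{\theta_{t+1}}$ is the parametric update~\eqref{update_rule_param}. First I would observe that the map $z\mapsto (G_{\theta_{t+1}}(z),G_{t+1}(z))$ pushes $P_z$ forward to a probability measure $\pi$ on $\mathds{R}^m\times\mathds{R}^m$ whose first and second marginals are exactly $\mu_{\theta_{t+1}}$ and $\Tilde{\mu}_{\theta_{t+1}}$; hence $\pi$ is an admissible transport plan in the Kantorovitch problem between these two measures.

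Next I would evaluate the quadratic transport cost of $\pi$. Using the paper's normalization $c(x,y)=\tfrac12\|x-y\|_2^2$ (see the footnote following~\eqref{MONGE_PROBLEM}, so that the optimal Kantorovitch value equals $W_2^2$), feasibility of $\pi$ gives
\[
W_2^2(\mu_{\theta_{t+1}},\Tilde{\mu}_{\theta_{t+1}}) \;\le\; \int \tfrac12\|x-y\|_2^2\, d\pi(x,y) \;=\; \tfrac12\,\mathds{E}_{z\sim P_z}\big\|G_{\theta_{t+1}}(z)-G_{t+1}(z)\big\|_2^2 \;=\; \tfrac12\,\|G_{\theta_{t+1}}-G_{t+1}\|_2^2 ,
\]
where the last quantity is the squared norm in $L^2(P_z)$. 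Under the first hypothesis $\|G_{\theta_{t+1}}-G_{t+1}\|_2\le\epsilon'$ the right-hand side is at most $(\epsilon')^2/2$, so taking square roots yields $W_2(\mu_{\theta_{t+1}},\Tilde{\mu}_{\theta_{t+1}})\le \epsilon'/\sqrt2$. For the alternative hypothesis $\|G_{\theta_{t+1}}-G_{t+1}\|_\infty\le\epsilon'$ — i.e. $\|G_{\theta_{t+1}}(z)-G_{t+1}(z)\|_2\le\epsilon'$ for $P_z$-almost every $z$ — I would bound $\mathds{E}_{z\sim P_z}\|G_{\theta_{t+1}}(z)-G_{t+1}(z)\|_2^2\le(\epsilon')^2$ pointwise, which again gives $W_2\le\epsilon'/\sqrt2$ (this is just $\|f\|_{L^2(P_z)}\le\|f\|_{L^\infty(P_z)}$ for a probability measure).

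I do not anticipate a genuine obstacle: the core of the argument is the standard observation that a deterministic map yields a coupling. The only points that need minor care are \emph{(i)} tracking the factor $\tfrac12$ in the paper's normalization of the quadratic cost, which is precisely what produces the $\sqrt2$ in the denominator, and \emph{(ii)} integrability, i.e. checking $G_{\theta_{t+1}},G_{t+1}\in L^2(P_z)$ so that $\pi$ has finite second moments and the bound above is meaningful; this follows from the standing assumption that the generated measures are absolutely continuous with finite second moments, which is needed anyway for $W_2$ to be well defined.
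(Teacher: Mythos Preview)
Your argument is correct. The paper's own proof reaches the identical bound $\tfrac12\,\mathds{E}_{z\sim P_z}\|G_{\theta_{t+1}}(z)-G_{t+1}(z)\|_2^2\le (\epsilon')^2/2$, but via the \emph{dual} side of Kantorovitch: it takes any admissible pair $(f,g)\in\mathcal{A}^*(\mu_{\theta_{t+1}},\Tilde{\mu}_{\theta_{t+1}})$, merges the two integrals into a single expectation over $z\sim P_z$, applies the constraint $f(x)+g(y)\le \tfrac12\|x-y\|_2^2$ pointwise, and then passes to the supremum. You instead work on the \emph{primal} side by pushing forward $P_z$ through $(G_{\theta_{t+1}},G_{t+1})$ to get an explicit coupling. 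The underlying idea is the same in both proofs---exploit that the two measures share the common source $P_z$ so one can pair samples through $z$---and the resulting computation is line-for-line identical; only the formulation (primal coupling versus dual potentials) differs.
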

Finally, the total error of approximation can be bounded as follows.
\begin{corollary}
\normalfont
\label{total deviation}
If
\begin{inlinelist}
    \item the discriminator $\phi$ satisfies $\|\nabla \phi - \nabla \Tilde{\phi}\|_{\infty}\leqslant \epsilon$, where $\Tilde{\phi}$ is defined as in Proposition $5$, and 
    \item the parametric and functional updates $G_{\theta_{t+1}}$ and $G_{t+1}$ with respect to $\phi$ satisfy $\|G_{\theta_{t+1}}-G_{t+1}\|_2\leqslant \epsilon'$ or $\|G_{\theta_{t+1}}-G_{t+1}\|_\infty\leqslant \epsilon'$. 
\end{inlinelist}
then $W_2(\mu_{\theta_{t+1}},\Tilde{\mu}_{\theta_{t+1}})\leqslant \frac{\alpha\epsilon +\epsilon'}{\sqrt{2}}$ where $\Tilde{\mu}_{\theta_{t+1}}$ is the ideally updated distribution with respect to the perfect discriminator $\Tilde{\phi}$. 
\end{corollary}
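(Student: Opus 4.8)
The plan is to reduce the statement to a triangle inequality in the metric space of absolutely continuous measures equipped with $W_2$, using Propositions~\ref{unperfect_discriminator} and~\ref{deviation_ideal_update} to control the two sources of error separately. The key is to introduce the right intermediate object: let $\hat{\mu}_{\theta_{t+1}}$ be the distribution obtained from $G_{\theta_t}$ and the \emph{sub-optimal} discriminator $\phi$ by the \emph{functional} (ideal) update rule with rate $\alpha$, i.e. $\hat{\mu}_{\theta_{t+1}} := \big((\id-\alpha\nabla\phi)\circ G_{\theta_t}\big)\#P_z = G_{t+1}\#P_z$, where $G_{t+1}$ is exactly the functional update associated to $\phi$ in hypothesis (ii) of the corollary. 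This measure is the hinge between the two hypotheses: it differs from $\Tilde{\mu}_{\theta_{t+1}}$ only through $\phi\neq\Tilde{\phi}$, and it differs from $\mu_{\theta_{t+1}}$ only because the parametric update is not exactly the functional one.

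Next I would invoke Proposition~\ref{unperfect_discriminator} with the data $\mu_{\theta_t}$, the sub-optimal $\phi$, and the Kantorovitch potential $\Tilde{\phi}$: the two ``ideally updated distributions according to discriminators $\phi$ and $\Tilde{\phi}$'' in that statement are precisely $\hat{\mu}_{\theta_{t+1}}$ and $\Tilde{\mu}_{\theta_{t+1}}$, so hypothesis (i), $\|\nabla\phi-\nabla\Tilde{\phi}\|_\infty\leqslant\epsilon$, yields $W_2(\hat{\mu}_{\theta_{t+1}},\Tilde{\mu}_{\theta_{t+1}})\leqslant \alpha\epsilon/\sqrt{2}$. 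Then I would apply Proposition~\ref{deviation_ideal_update} to the pair $(G_{\theta_{t+1}},G_{t+1})$ built from $G_{\theta_t}$ and the \emph{same} discriminator $\phi$: since $G_{t+1}\#P_z=\hat{\mu}_{\theta_{t+1}}$ and $G_{\theta_{t+1}}\#P_z=\mu_{\theta_{t+1}}$, and since hypothesis (ii) gives $\|G_{\theta_{t+1}}-G_{t+1}\|_2\leqslant\epsilon'$ (or the $L^\infty$ analogue), that proposition yields $W_2(\mu_{\theta_{t+1}},\hat{\mu}_{\theta_{t+1}})\leqslant \epsilon'/\sqrt{2}$.

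Finally, because $W_2$ is a genuine metric on absolutely continuous distributions (Section~\ref{sec:geodesics_W_2}), the triangle inequality closes the argument:
\[
W_2(\mu_{\theta_{t+1}},\Tilde{\mu}_{\theta_{t+1}})\;\leqslant\; W_2(\mu_{\theta_{t+1}},\hat{\mu}_{\theta_{t+1}})+W_2(\hat{\mu}_{\theta_{t+1}},\Tilde{\mu}_{\theta_{t+1}})\;\leqslant\;\frac{\epsilon'}{\sqrt{2}}+\frac{\alpha\epsilon}{\sqrt{2}}\;=\;\frac{\alpha\epsilon+\epsilon'}{\sqrt{2}}.
\]
This is essentially bookkeeping, so I do not expect a real obstacle; the only point needing care is the alignment of hypotheses, namely checking that $\hat{\mu}_{\theta_{t+1}}$ simultaneously satisfies the role of an ``ideally updated distribution with respect to $\phi$'' demanded by Proposition~\ref{unperfect_discriminator} and of the push-forward $G_{t+1}\#P_z$ of the functional update demanded by Proposition~\ref{deviation_ideal_update}, and that all three measures remain absolutely continuous (so that $W_2$'s metric structure and both propositions apply), which is guaranteed by carrying over assumption (i) of the ideal-case analysis.
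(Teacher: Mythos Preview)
Your proposal is correct and follows essentially the same route as the paper: introduce the intermediate measure obtained from $G_{\theta_t}$ by the functional update with the sub-optimal discriminator $\phi$ (the paper denotes it $\mu_{\theta_{t+1},\phi}$, your $\hat{\mu}_{\theta_{t+1}}$), apply Proposition~\ref{unperfect_discriminator} and Proposition~\ref{deviation_ideal_update} to the two legs, and conclude by the triangle inequality for $W_2$. Your write-up is in fact more careful than the paper's own sketch in spelling out why the intermediate object fits both propositions.
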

This corollary states that we can bound the deviation of the training sequence of the generated distribution from the geodesic $\mathcal{G}(\mu_{\theta_t},P_x)$ when controlling the bias induced by the parametric update and the non-optimality of the discriminator. 

Since the W2GAN model is closely related to WGANs, a natural question is whether we can adapt the previous analysis to the case of discriminators optimizing $W_1$ instead of $W_2$.
Unfortunately, unlike $W_2$-geodesics, the $W_1$-geodesics between a fixed pair of absolutely continuous measures are non-unique. In the Appendix \ref{sec:W1_case}, we provide more details on how the difference between $W_1$ and $W_2$ makes it difficult to adapt our argument.

\section{Experiments}
\label{sec:exps}

\subsection{2-Dimensional Synthetic Data}

We first consider learning optimal maps between synthetic datasets in 2-dimensional space.\footnote{Detailed experimental setup is provided in Appendix~\ref{sec:exp_details} and code is provided in \url{https://github.com/jshe/wasserstein-2}.} Working with 2D data allows visualizing learned maps easily. Each
dataset is composed of samples from two distributions: $P_z$ and $P_x$, and we seek to learn optimal maps from $P_z$ to $P_x$. 
three datasets (samples are shown in Figure~\ref{fig:2d_data}-(a)). The three datasets we consider are:
\begin{inlinelist}
\item \textbf{4-Gaussians}: $P_z$ and $P_x$ are mixtures of 4 Gaussians with equal mixture weights, where the mixture centers of $P_x$ are closer to each other than those of $P_z$.
\item \textbf{Checkerboard}: $P_z$ and $P_x$ are mixtures of uniform distributions over 2D squares, of size 4 and 5 respectively, with equal mixture weights. The mixture centers of the two distributions form an alternating checkerboard pattern.
\item \textbf{2-Spirals}: $P_z$ and $P_x$ are uniform distributions over spirals that are rotations of each other.
\end{inlinelist}
We show in Figure~\ref{fig:2d_data}-(b) the optimal map between \emph{data samples}, obtained by applying the optimal assignment algorithm between samples. We refer to this method as \textit{Discrete-OT}, and use it as a reference for evaluating optimal maps obtained by various methods on these specific data samples.

We apply our proposed W2GAN model on these three datasets, where the generator takes input samples from $P_z$ and and maps them to $P_x$. Note that we first initialize the generator as an identity function.\footnote{We experiment with two methods for identity initialization: reconstruction and adding a skip connection to output with small initial weights.} 
As baselines, we compare with the following methods:
\begin{inlinelist}
\item \textit{Barycentric-OT}: a two-step algorithm for computing optimal maps between continuous or discrete distributions introduced by~\mycitet{Seguy2018LSOT}. The algorithm is based on first computing a regularized optimal transport
plan, then estimating a parametric optimal map as a barycentric projection.
\item WGAN-GP~\citep{gulrajani2017improved}, and
\item WGAN-LP~\citep{petzka2018on}. Both WGAN-GP and WGAN-LP use the $W_1$ metric as the discriminator's adversarial objective, but differ in the form of gradient penalty on the discriminator. We similarly initialize their generators as identity functions.
\end{inlinelist}

\begin{figure}[!t]
\centering
     \subfloat[Data samples]{%
       \includegraphics[width=0.48\textwidth]{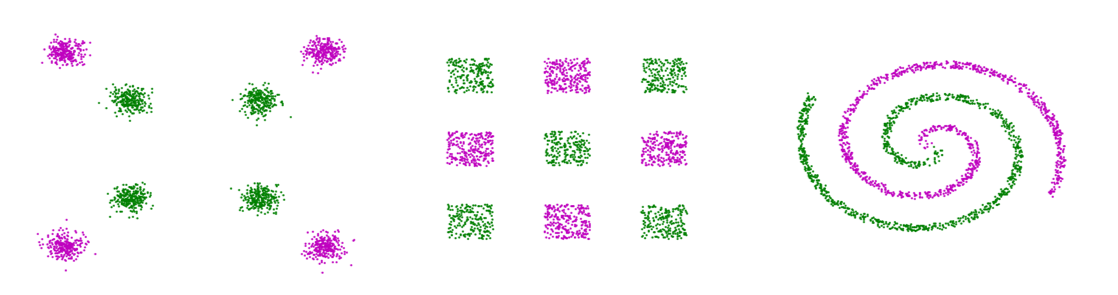}
     }
     \hfill
     \subfloat[Discrete-OT]{%
       \includegraphics[width=0.48\textwidth]{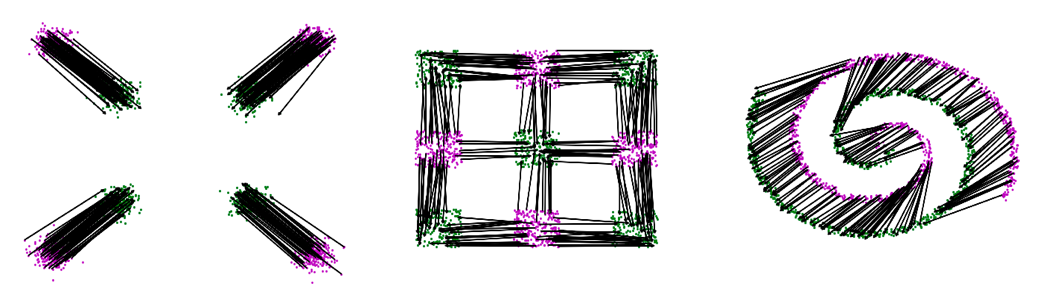}
     }
     \caption{2D data experimental settings.
(a) 1024 data samples of $P_z$ (magenta) and $P_x$ (green) in 4-Gaussians, Checkerboard, 2-Spirals. (b) Optimal maps (black arrows) with Discrete-OT \emph{between 200 samples from $P_z$ and $P_x$}}
     \label{fig:2d_data}
   \end{figure}
   

\newcommand{\figwidth}{0.21}

\begin{figure*}[!ht]

\centering
\begin{tabular}{C{3.1cm}C{3.1cm}C{3.1cm}C{3.1cm}}
  \begin{minipage}[t]{\figwidth\textwidth}
    \centering
    \includegraphics[width=\textwidth]{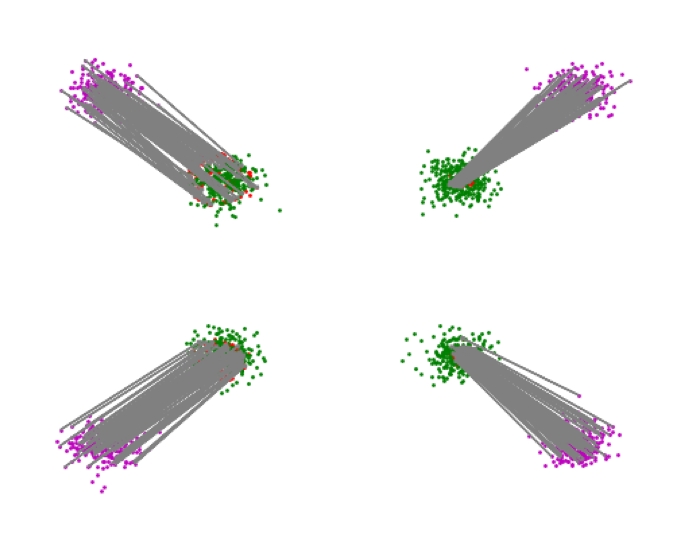} 
  \end{minipage}&
  \begin{minipage}[t]{\figwidth\textwidth}
    \centering
    \includegraphics[width=\textwidth]{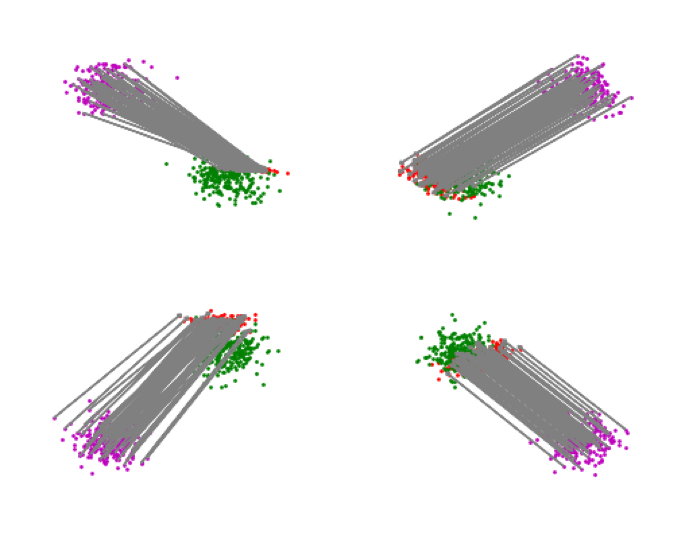} 
  \end{minipage}&
  \begin{minipage}[t]{\figwidth\textwidth}
    \centering
    \includegraphics[width=\textwidth]{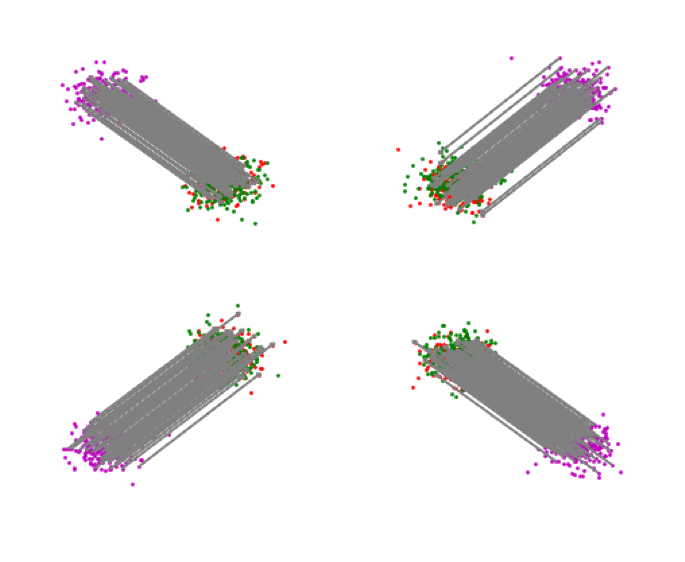}
  \end{minipage}&
  \begin{minipage}[t]{\figwidth\textwidth}
    \centering
    \includegraphics[width=\textwidth]{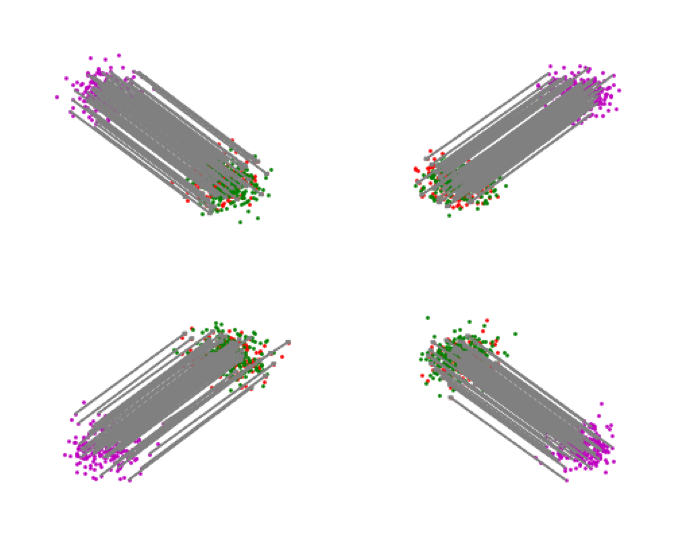}
  \end{minipage}\\
 \begin{minipage}[t]{\figwidth\textwidth}
    \centering
    \includegraphics[width=\textwidth]{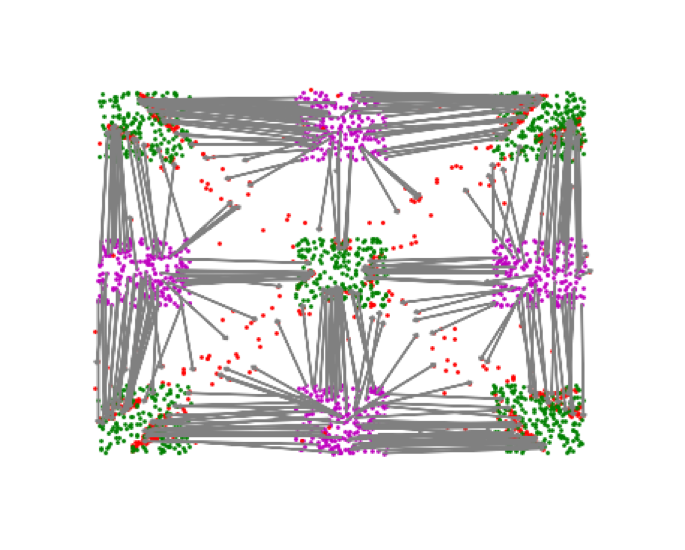} 
  \end{minipage}&
  \begin{minipage}[t]{\figwidth\textwidth}
    \centering
    \includegraphics[width=\textwidth]{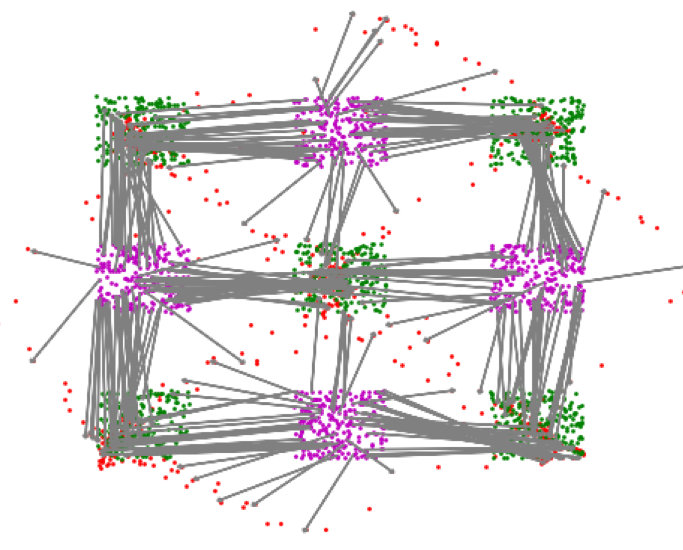} 
  \end{minipage}&
  \begin{minipage}[t]{\figwidth\textwidth}
    \centering
    \includegraphics[width=\textwidth]{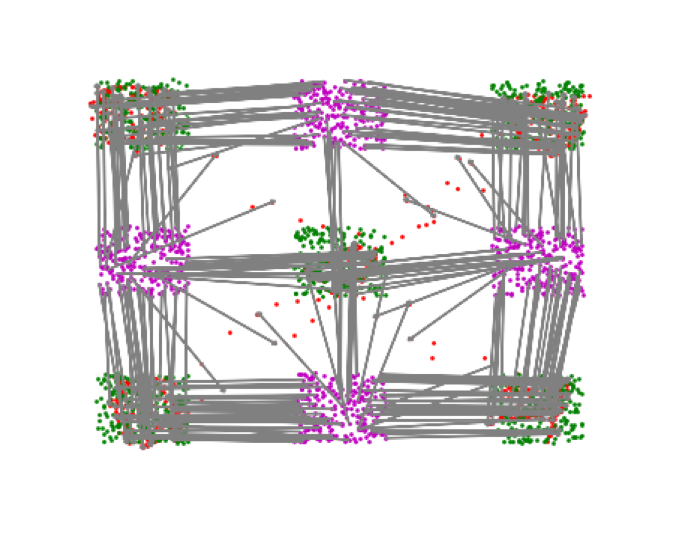} 
  \end{minipage}&
  \begin{minipage}[t]{\figwidth\textwidth}
    \centering
    \includegraphics[width=\textwidth]{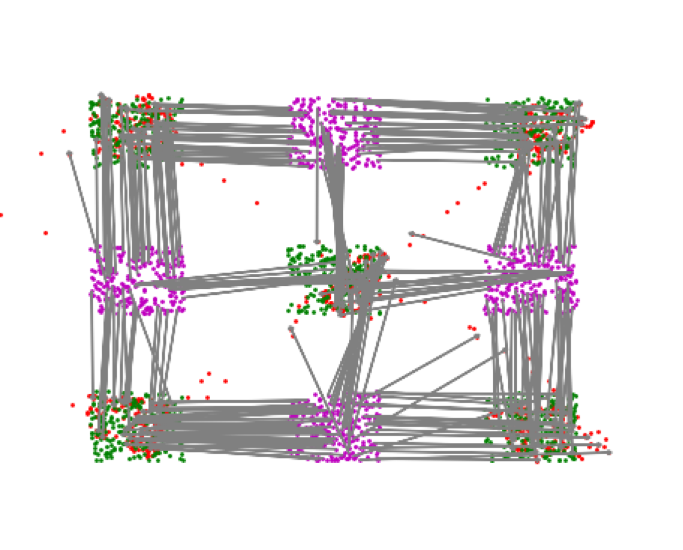}
  \end{minipage}\\
  \begin{minipage}[t]{\figwidth\textwidth}
    \centering
    \includegraphics[width=\textwidth]{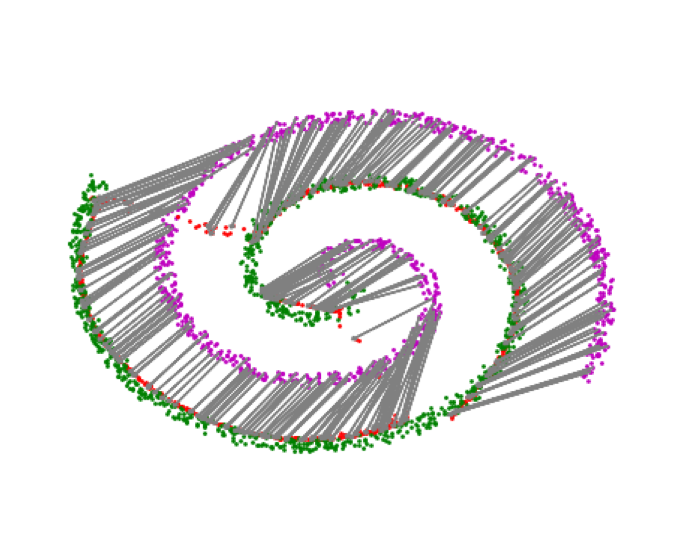} 
  \end{minipage}&
  \begin{minipage}[t]{\figwidth\textwidth}
    \centering
    \includegraphics[width=\textwidth]{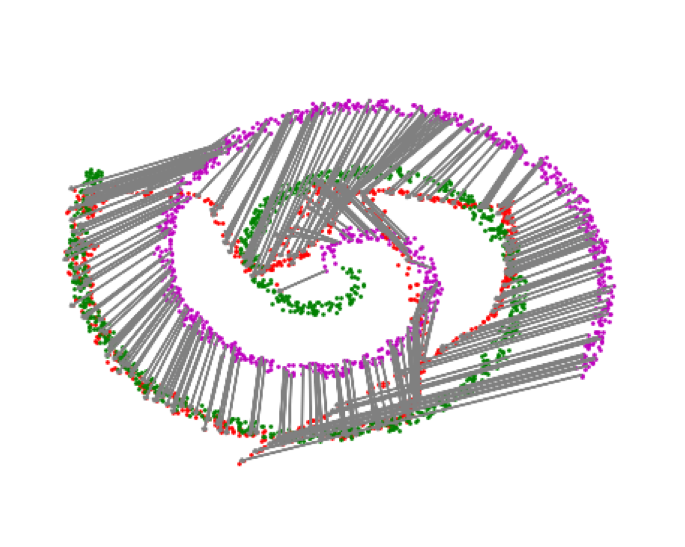} 
  \end{minipage}&
  \begin{minipage}[t]{\figwidth\textwidth}
    \centering
    \includegraphics[width=\textwidth]{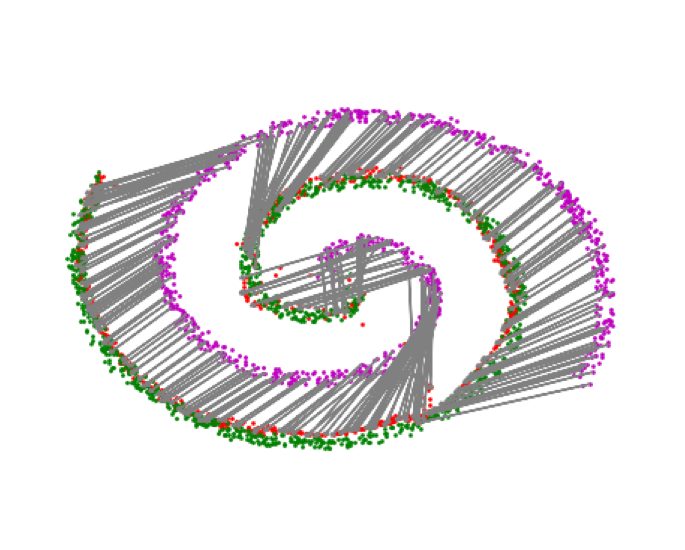}
  \end{minipage}&
  \begin{minipage}[t]{\figwidth\textwidth}
    \centering
     \includegraphics[width=\textwidth]{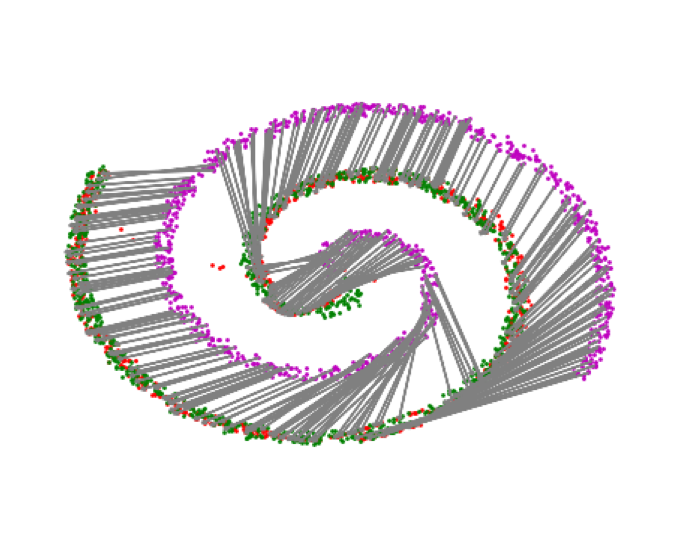} 
  \end{minipage}\\
      \small{(a) Barycentric-OT} & \small{(b) WGAN-GP} & \small{(c) WGAN-LP} & \small{(d) W2GAN (ours)} \\
\end{tabular}
\caption{Maps learned in three synthetic datasets (Figure~\ref{fig:2d_data}) by Barycentric-OT, WGAN-GP, WGAN-LP and our proposed W2GAN.
The generator maps (with gray arrows) samples $z\sim P_z$ (magenta) to $G(z)$ (red) such that it matches $P_x$ (green).
}
\label{fig:gan_2d}
\end{figure*}


\begin{figure}[!t]
\centering
  \subfloat[$x - \nabla \phi(x)$]{
    \includegraphics[width=0.2\textwidth]{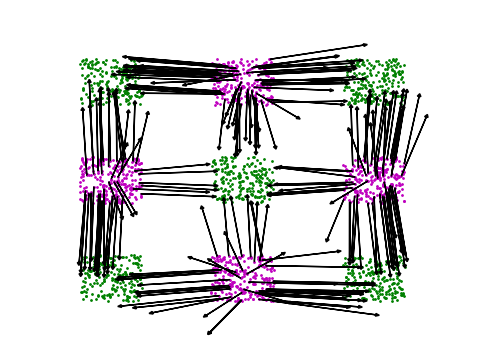} 
    }
    \subfloat[$\phi(x)$]{
    \includegraphics[width=0.2\textwidth]{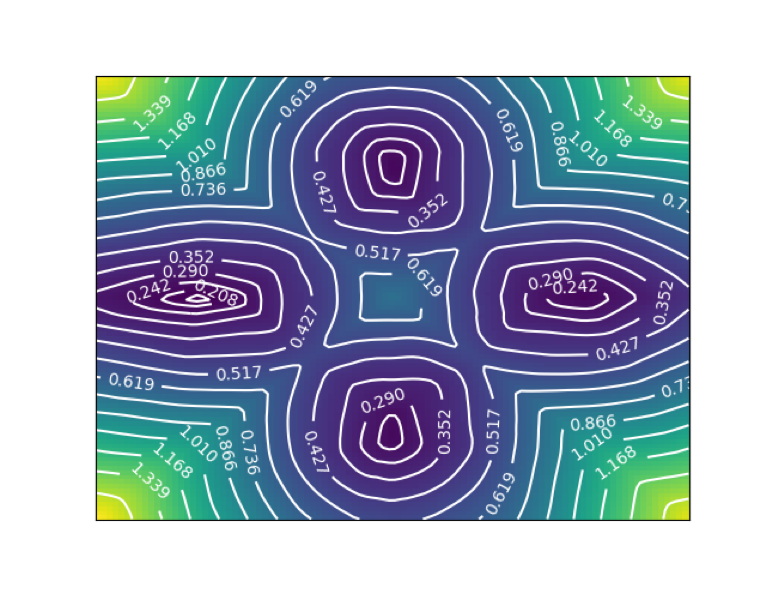} 
    }
    \subfloat[$\psi(x)$]{
    \includegraphics[width=0.2\textwidth]{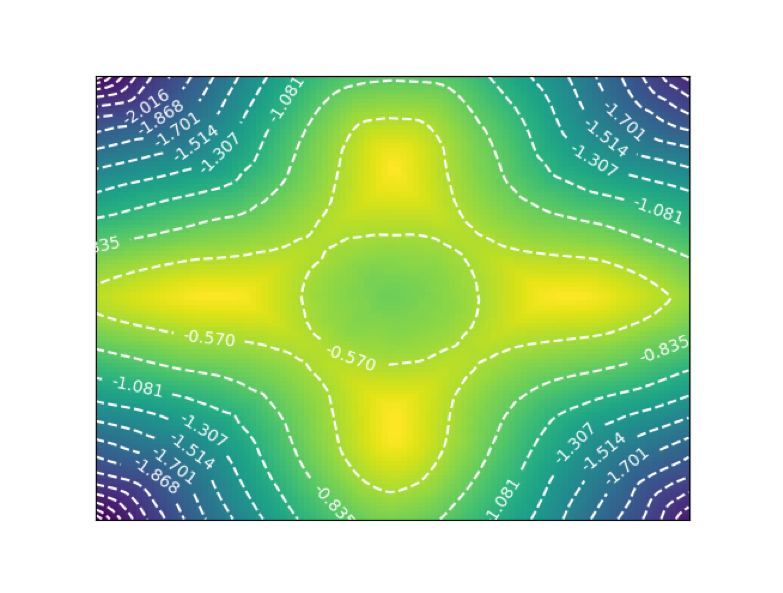} 
    }
\caption{The discriminator approximates the Monge map locally in W2GAN. (a) Gradient direction provided by $\phi$ to the generator. 
(b-c) Heat-map of values of $\phi$ and $\psi$ over $\mathds{R}^2$. }
\label{fig:local_OT}
\end{figure}

\begin{figure*}[t!]

\centering
\begin{tabular}{C{3cm}C{3cm}C{3cm}C{3cm}C{3cm}}
  \begin{minipage}[t]{\figwidth\textwidth}
    \centering
    \includegraphics[width=\textwidth]{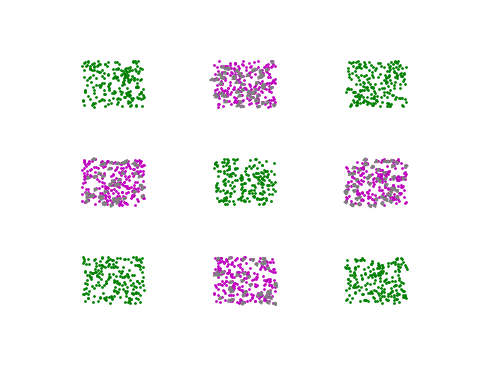} 
  \end{minipage}&
  \begin{minipage}[t]{\figwidth\textwidth}
    \centering
    \includegraphics[width=\textwidth]{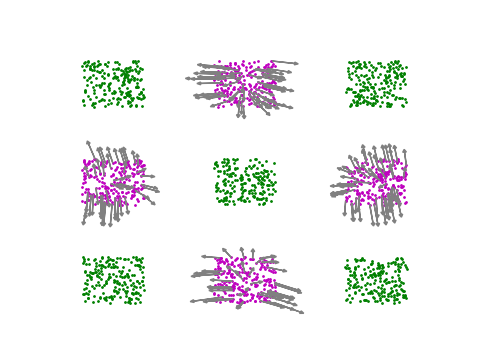} 
  \end{minipage}&
  \begin{minipage}[t]{\figwidth\textwidth}
    \centering
    \includegraphics[width=\textwidth]{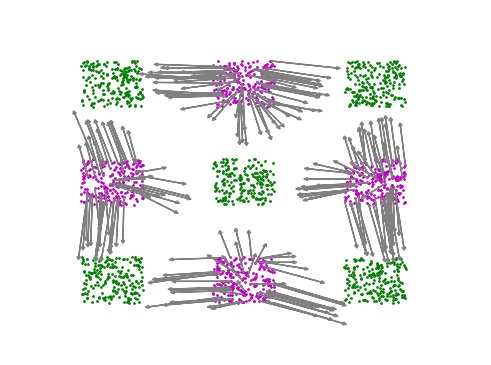}
  \end{minipage}&
  \begin{minipage}[t]{\figwidth\textwidth}
    \centering
    \includegraphics[width=\textwidth]{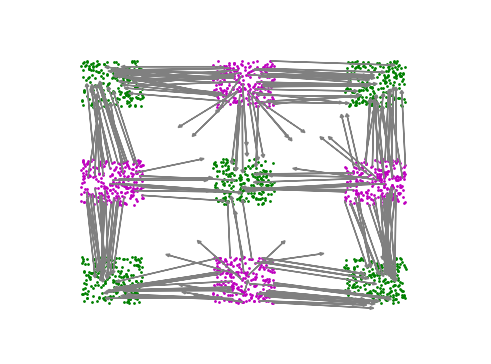}
  \end{minipage}\\
 \begin{minipage}[t]{\figwidth\textwidth}
    \centering
    \includegraphics[width=\textwidth]{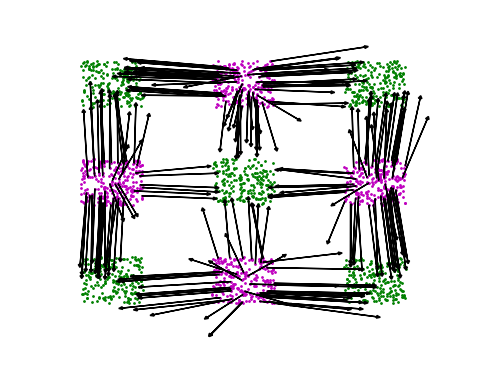} 
  \end{minipage}&
  \begin{minipage}[t]{\figwidth\textwidth}
    \centering
    \includegraphics[width=\textwidth]{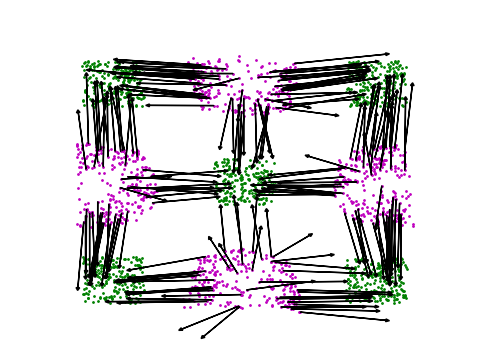} 
  \end{minipage}&
  \begin{minipage}[t]{\figwidth\textwidth}
    \centering
    \includegraphics[width=\textwidth]{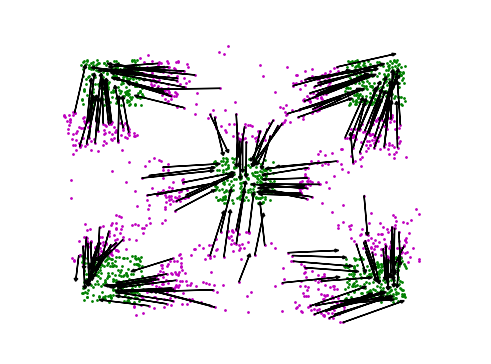}
  \end{minipage}&
  \begin{minipage}[t]{\figwidth\textwidth}
    \centering
    \includegraphics[width=\textwidth]{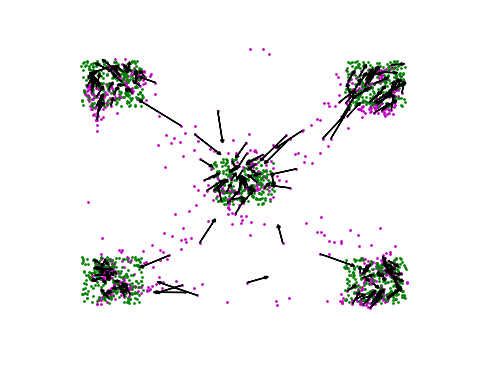}
  \end{minipage}\\
      \small{(a) step 500} & \small{(b) step 3000} & \small{(c) step 5500} & \small{(d) step 8000}
\end{tabular}
\caption{Evolution of the generator (top row) and the gradient it receives from the discriminator $\phi$ (bottom row) through training. The generator starts as an identity function, and is updated in the direction of $G(z)-\nabla\phi(G(z))$ (black arrows) where $z\sim P_z$(pink). The generator is following the $W_2$-geodesic.}


\label{fig:gan_evolution}
\end{figure*}


We show in Figure~\ref{fig:gan_2d} the performance of each of these models on the three 2D datasets.
When compared to the Discrete-OT map in Figure~\ref{fig:2d_data}-(b), we can see that
our proposed W2GAN can successfully recover the Monge map in all three datasets.
In comparison, Barycentric-OT performs mostly very well, but we notice a clear collapse of the mapping in the case of 4-Gaussians. WGAN-GP generally performs poorly, and we observe that training becomes especially unstable as the generated distribution approaches $P_x$, sometimes diverging from a good solution. This happens even with more training of the discriminator and smaller generator learning rates. WGAN-LP, on the other hand, matches the performance of W2GAN in all three datasets.\footnote{WGAN-LP clearly outperforms WGAN-GP, which can be unstable because the gradient penalty in WGAN-GP can prevent the discriminator from converging~\citep{petzka2018on}.} The fact this optimal map seems to match the Monge map of the $W_2$ case is surprising, because in theory there are many $W_1$ geodesics which the generator can follow. We think the bias of the parameters makes WGAN-LP's generator follow a specific geodesic.  


We also verify our analysis that the local direction provided to the generator is given by a perfect discriminator $\phi$, i.e that $\phi$ is related to the Monge map $T$ according to 
Eqn~\eqref{T_GRAD_PHI}. We can see in Figure~\ref{fig:local_OT} that the approximation of the Monge map by $\phi$ is quite reasonable. This is especially interesting, given that $\phi$, as well as $\psi$, are both real functions which are not explicitly trained to recover the Monge map. Finally, Figure~\ref{fig:gan_evolution} shows the generator's training sequence for the checkerboard dataset. We see that the generated distribution evolves along the optimal map.

\subsection{High-dimensional Data}

 Next, we move to a more challenging setting where we learn optimal maps in high-dimensional image data. 
 We first consider the task of mapping samples of a $28\times 28$ multivariate Gaussian\footnote{With mean and covariance matrix estimated with maximum-likelihood on MNIST training data.} to MNIST, which was originally proposed by~\mycitet{Seguy2018LSOT}, and compare with their method. We follow the exact experimental setting of~\mycitet{Seguy2018LSOT}, where we use the same architecture for both W2GAN generator and their Barycentric-OT.~\footnote{More experimental details can be found in the Appendix~\ref{sec:exp_details}.}
 Figure~\ref{fig:mvg_to_mnist} shows the generated samples by our model and the Barycentric-OT model. Qualitatively speaking, W2GAN seems to generate much better MNIST samples, which confirms that it is a competitive method for estimating the Monge maps in high dimensions.

In the second set of experiments, we apply our model to the unsupervised domain adaptation (UDA) task, which is a standard experimentation setting for evaluating large-scale OT maps~\citep{courty2017optimal}. We also follow the same experimental settings of~\mycitet{Seguy2018LSOT} for UDA across USPS and MNIST datasets. 
In UDA, we have labels in one dataset (e.g. USPS) and our goal is to train a classifier which performs well on the other dataset (e.g. MNIST) \emph{without} having access to any training labels in it. An optimal map is assumed to learn a map in the image space that preserves, as much as possible, the digit identity.
 
Similar to~\mycitet{Seguy2018LSOT}, we evaluate the accuracy of different models using a 1-nearest neighbour (1-NN) classifier. As a baseline, we train the classifier using USPS (resp. MNIST) images, and test it on USPS (resp. MNIST) images. 
We compare the classification accuracy of this naive model to the one obtained by both our W2GAN model, Barycentric-OT, and WGAN-LP, where we train the 1-NN classifier on the transferred images using labels from the source. We can see in Table~\ref{tab:da_results} that our model always achieves better results than Barycentric-OT. We note that our baseline is slightly lower than that of~\mycitet{Seguy2018LSOT}, possibly due to image-processing differences, and correspondingly, our implementation of Barycentric-OT also achieves slightly lower accuracy.  Even so, we can achieve higher accuracy with our model compared to the result reported in their paper. The WGAN-LP baseline, on the other hand, is competitive with our W2GAN model.


\begin{figure}[!t]
\centering
\subfloat[MV-Gaussian inputs]{
\includegraphics[width=0.3\textwidth]{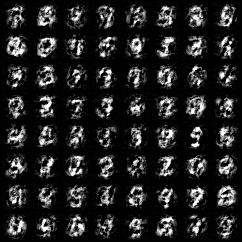}
}
\hfill
\subfloat[Barycentric-OT outputs]{
\includegraphics[width=0.3\textwidth]{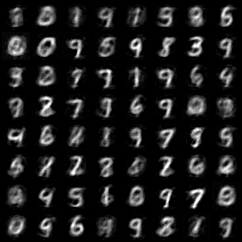}
}
\hfill
\subfloat[W2GAN outputs]{
\includegraphics[width=0.3\textwidth]{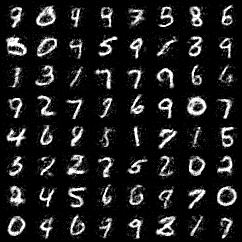}
}
\caption{Mapping MV-Gaussian to MNIST. (a) Samples from MV-Gaussian with mean and covariance estimated from MNIST. (b) Corresponding samples from Barycentric-OT. (c) Corresponding samples from our W2GAN model. }
\label{fig:mvg_to_mnist}
\end{figure}

\begin{table}[!t]
\centering
\begin{tabular}{ C{3cm}C{3cm}C{3cm} } 
 \toprule
  & USPS$\rightarrow$MNIST & MNIST$\rightarrow$USPS\\ 
 \midrule
 Source & 33.31 & 70.05\\ 
 Barycentric-OT$^\dagger$ & 60.50 & 77.92\\ 
 Barycentric-OT$^\ddagger$ & 58.68 & 63.48\\ 
 WGAN-LP &  59.34  & \textbf{81.71}\\ 
 W2GAN & \textbf{67.89} & 80.02\\ 
 \bottomrule \\
\end{tabular}
\caption{Accuracy of 1-NN classifier (in \%) trained using source vs. transported data. $^\dagger$From~\citep{Seguy2018LSOT}. $^\ddagger$Our implementation.}
    \label{tab:da_results}
\end{table}

\section{Conclusion}
We believe this work offers a new perspective on GANs: a way to characterize the dynamics of the generator during training, and as a main application, a way to compute a Monge map. In W2GAN, the generator recovers an optimal map between its initial and target distributions. To establish this, we connected the generator's training sequence, and especially the signal it receives from the discriminator, with the $W_2$ geodesics. Extending this analysis to other types of GAN would be of great interest. In Appendix~\ref{open_questions}, we raise other questions about our model left for future work.


%
%
%
\bibliographystyle{splncs04} 
\bibliography{references}

%
%


\appendix

\section{Appendix for Adversarial Computation of Optimal Transport Maps}

\subsection{Remaining questions and future work}
\label{open_questions}
In the following we provide some future questions which are raised by our proposed method and theoretical analysis: 
\begin{itemize}
    \item A crucial relationship used in our model is~\eqref{T_GRAD_PHI}, relating an optimal map $T$ solving the Monge problem~\eqref{MONGE_PROBLEM} with the Kantorovitch potential $\phi$ solving the dual~\eqref{KANTOROVITCH_DUAL}. We recall that this link is $T(x)=x-\nabla \phi(x)$ for any $x$ in the support of the initial distribution. In practice though, we get this potential by solving a regularized version of the Kantorovitch problem, for instance~\eqref{LOSS_DISCRIMINATOR_GAN_INEQONLY} or~\eqref{DISCRIMINATOR_LOSS_EQ_INEQ}. Assuming for simplicity we deal with~\eqref{LOSS_DISCRIMINATOR_GAN_INEQONLY}, we face a dual problem which basically removes the hard constraint defining~\eqref{KANTOROVITCH_DUAL} by including a penalty term $\mathcal{L}_{\text{ineq}}$ in the main objective, the emphasis of which we control with the hyperparameter $\lambda_{\text{ineq}}$. Denoting $\phi_{\lambda_{\text{ineq}}}$ the corresponding solution -which may be proven to be unique up to translation- it seems to constitute a hard challenge to prove that $\phi_{\lambda_{\text{ineq}}} \rightarrow \phi$ as $\lambda_{\text{ineq}} \rightarrow 0$. The case of the entropy penalty, that is when we define the regularization $\mathcal{L}_{\text{ineq}}(\phi,\psi):=- \mathds{E}_{(x,y) \sim \mu \times \nu} [\exp(\frac{\phi(x)+\psi(y)-c(x,y)}{\lambda_{ineq}})]$ has been widely studied. In a discrete setting, the convergence might be obtained as a result of~\mycitet{Cominetti1994}, but their analysis does not straightforwardly scale to the case of absolutely continuous probability measures. What we would find more useful is to prove the convergence $\nabla \phi_{\lambda_{\text{ineq}}} \rightarrow \nabla \phi$, as it would mean that our approximation of the optimal transport map $T$ using relationship~\eqref{T_GRAD_PHI} is valid. Notice that the entropy regularized dual also admits a primal formulation, which solution $\pi_{\lambda_{\text{ineq}}}$ may be proven to converge in some sense to the unique transport map $\pi$ solving~\eqref{KANTOROVITCH_PRIMAL}. No similar results have been found about the dual variables for the moment. Also, the case of other penalties as the $L_2$ one remain open problems. 
    \item In a similar fashion, it would enrich our analysis to have bounds on the deviation from the ideal trajectory, the $W_2$-geodesic, when we assume some errors of approximation for the generator and the discriminator. In the same way, how far is the final generated distribution from the Monge map in this more realistic case? 
    \item Another crucial theoretical need for strengthening our analysis would be to deepen the parametric analysis~\ref{parametric analysis} and try to understand the trajectory of the generated distribution in the parameter space more thoroughly. Ideally, one would like to prove that this trajectory would be some projection of the $W_2$-geodesic in the space of parameters. 
    \item In practice, what are the new possibilities enabled by the analysis of the generator's dynamic and the W2GAN model? First, it would be of interest to characterize the dynamic of GANs relying of $f$-divergences in a similar manner. In the Wasserstein-1 case, it is possible that knowing the generator is learning an optimal correspondence could be useful in a domain transfer situation, especially in high dimension where those models perform well. In particular, listing the different $W_1$-geodesics and characterizing the ones that are mainly followed by WGAN's generators could be a great insight.
    \item On the other hand, as the W2GAN's discriminator approximates the second Wasserstein distance, it enables tackling the challenge of computing Wasserstein barycenters of distributions~\citep{1310.4375}. In fact, it seems likely that an adversarial and parametric method would perform well in the task of generating all Wasserstein interpolations between distributions in high-dimension, a difficult problem addressed by~\mycitet{xie2018fast} for instance.
\end{itemize}

\subsection{Analysis of the parametric update rule}

\label{parametric analysis}
This section is dedicated to the analysis of the parametric update. \mycitet{lui2017implicit} already made the conjecture that a generator trained with the second Wasserstein distance would have its parameters updated towards the direction of an optimal transport. Let us prove their statement in the case of our model. In the following, $J_{f}(u)$ denotes the Jacobian of a function $f$ at point $u$. Let $G(\theta,z):=G_\theta(z)$ be our parametrized generator, which we assume to admit uniformly bounded derivatives w.r.t both the parameters $\theta$ and $z$. Recall the notation $\mu_\theta:=\mathds{P}_{G_\theta(z)}=G_\theta\#\mathds{P}_z$ where $\mathds{P}_z$ is the measure known beforehand on the input variable $z$. We consider the context of alternating gradient descent with learning rate $\alpha>0$ for the generator. We naturally associate the fictive time variable t, so that we consider the discrete update equation:
\begin{equation}
   \theta_{t+1} = \theta_t -\alpha \frac{\partial{W_2^2(\mu_{\theta_t},P_x)}}{\partial \theta}
  \label{UPDATE_THETA}
\end{equation}

\begin{proposition}\label{prop:dynamic}
\normalfont At each generator update, we assume the discriminator $\phi,\psi$ to achieve $\sup_{\phi,\psi} V^*(\phi,\psi,\mu_\theta,P_x)=W_2^2(\mu_\theta,P_x)$ where $V^*(\phi,\psi,\mu_\theta,P_x):= \int_{\mathds{R}^d} \phi(x)dP(x) + \int_{\mathds{R}^d} \psi(x)dP_x$ is the value function of the dual Kantorovitch problem~\eqref{KANTOROVITCH_DUAL}. Then~\eqref{UPDATE_THETA} admits a tractable form as:
\[\frac{\partial{ W_2^2}}{\partial \theta}= \mathds{E}_{z \sim P_z}(J_{\phi(G_\theta(z))}^T (\theta))= \mathds{E}_{z\sim P_z}(\nabla_\theta \phi(G_\theta(z)))\]
Moreover, the gradient ascent dynamics of the generator are linked to the optimal transport map $T$ solving~\eqref{MONGE_PROBLEM} between $\mu_\theta$ and $P_x$ as:
\begin{align}
\forall z,& \; G(\theta_{t+1},z)= G(\theta_t,z) \nonumber \\+\; &\alpha J_{G(.,z)}(\theta_t) \times \mathds{E}_{z\sim P_z}\left( J_{G(.,z)}^T(\theta_t)\times \left[T(G(\theta_t,z))-G(\theta_t,z) \right]\right) \nonumber \\+\; &o(\alpha)
\label{IDEAL_UPDATE_G}
\end{align}
\end{proposition}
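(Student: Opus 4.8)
The plan is to split the argument into an envelope-theorem step (which produces the tractable form of $\partial_\theta W_2^2$), followed by a substitution using the Brenier relation \eqref{T_GRAD_PHI} and a first-order Taylor expansion of the generator in its parameters.

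First I would establish $\partial_\theta W_2^2(\mu_\theta,P_x)=\mathds{E}_{z\sim P_z}\big(\nabla_\theta\phi(G_\theta(z))\big)$. The key structural point is that in the dual problem \eqref{KANTOROVITCH_DUAL} the feasible set $\mathcal{A}^*$ does \emph{not} depend on the marginals; the parameter $\theta$ enters the optimization only through the objective, and there only through $\int\phi\,d\mu_\theta=\mathds{E}_{z\sim P_z}[\phi(G_\theta(z))]$. So $\theta\mapsto W_2^2(\mu_\theta,P_x)$ is a supremum, over a fixed convex set, of functions $\theta\mapsto \mathds{E}_{z}[\phi(G_\theta(z))]+\int\psi\,dP_x$ that are differentiable in $\theta$ (using the assumed uniform bounds on the derivatives of $G$ and dominated convergence). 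Danskin's theorem then applies: since the maximizer is unique up to an additive constant on $\phi$ which cancels against $\psi$ and leaves $\nabla\phi$ (hence $\nabla_\theta\mathds{E}_z[\phi(G_\theta(z))]$) unchanged, the value function is differentiable with $\nabla_\theta W_2^2(\mu_\theta,P_x)=\nabla_\theta\mathds{E}_z[\phi(G_\theta(z))]$ evaluated at the optimal $\phi$. Equivalently, and more elementarily: for the currently optimal potentials $(\phi,\psi)$ at $\theta_0$, weak duality gives $W_2^2(\mu_\theta,P_x)\geqslant V^*(\phi,\psi,\mu_\theta,P_x)$ for all $\theta$ with equality at $\theta_0$, so the nonnegative difference is minimized at $\theta_0$ and the two gradients agree there. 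The chain rule then rewrites $\nabla_\theta\phi(G_\theta(z))=J_{G(\cdot,z)}^T(\theta)\,\nabla\phi(G_\theta(z))$, and since $\phi(G_\theta(z))$ is scalar its gradient is exactly $J_{\phi(G_\theta(z))}^T(\theta)$, giving the first displayed formula of the proposition.

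Next I would substitute this into the update \eqref{UPDATE_THETA}. By \eqref{T_GRAD_PHI}, if $T$ is the optimal map from $\mu_{\theta_t}$ to $P_x$ then $\nabla\phi(x)=x-T(x)$ on the support of $\mu_{\theta_t}$, hence $\nabla\phi(G_{\theta_t}(z))=G_{\theta_t}(z)-T(G_{\theta_t}(z))$ for $P_z$-a.e.\ $z$, and therefore
\begin{equation*}
\theta_{t+1}=\theta_t+\alpha\,\mathds{E}_{z\sim P_z}\!\left(J_{G(\cdot,z)}^T(\theta_t)\big[T(G_{\theta_t}(z))-G_{\theta_t}(z)\big]\right)=:\theta_t+\alpha\,\delta_t,
\end{equation*}
where $\delta_t$ is finite because $\mathds{E}_z\|T(G_{\theta_t}(z))-G_{\theta_t}(z)\|^2=\int\|\id-T\|^2\,d\mu_{\theta_t}$ is controlled by $W_2^2(\mu_{\theta_t},P_x)<\infty$ and $J_{G(\cdot,z)}$ is uniformly bounded. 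Finally, for each fixed $z$ I would Taylor-expand $\theta\mapsto G(\theta,z)$ around $\theta_t$: since $\|\alpha\delta_t\|=O(\alpha)$ and the second-order $\theta$-derivatives of $G$ are uniformly bounded, $G(\theta_{t+1},z)=G(\theta_t,z)+\alpha\,J_{G(\cdot,z)}(\theta_t)\,\delta_t+o(\alpha)$, which on expanding $\delta_t$ is precisely \eqref{IDEAL_UPDATE_G}.

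I expect the main obstacle to be the rigorous justification of the envelope step, namely showing that $\theta\mapsto W_2^2(\mu_\theta,P_x)$ is differentiable and that the implicit $\theta$-dependence of the optimal potential contributes nothing to the gradient. The ``touching from above'' formulation above is designed exactly to avoid differentiating the potential-valued map $\theta\mapsto\phi_\theta$; one still needs differentiability of the two scalar functions of $\theta$, which can be obtained either by invoking known regularity results for the optimal-transport cost as a function of its marginals, or by a direct estimate exploiting the uniqueness of $\nabla\phi$ together with the local semiconcavity of $\theta\mapsto W_2^2(\mu_\theta,P_x)$. Everything else — the dominated-convergence interchange of $\nabla_\theta$ and $\mathds{E}_{z\sim P_z}$, the chain rule, and the control of the Taylor remainder — is routine under the stated uniform boundedness hypotheses on $G$.
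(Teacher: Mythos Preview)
Your proposal is correct and follows essentially the same approach as the paper: an envelope-theorem argument (the paper cites the analogous computation in Arjovsky \emph{et al.}\ for $W_1$) together with dominated convergence to obtain $\nabla_\theta W_2^2=\mathds{E}_z[\nabla_\theta\phi(G_\theta(z))]$, followed by the chain rule, a first-order Taylor expansion of $G(\cdot,z)$ at $\theta_t$, and the Brenier relation \eqref{T_GRAD_PHI} to replace $\nabla\phi$ by $\id-T$. Your ``touching from above'' reformulation of the envelope step and the explicit remark that the feasible set $\mathcal{A}^*$ is $\theta$-independent are more detailed than what the paper writes, but the underlying argument is the same.
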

In the above, we use the notation $G(\theta,z):=G_\theta(z)$ to clarify where the derivatives are taken. Equation~\eqref{IDEAL_UPDATE_G} is close to the one conjectured in~\mycitet{lui2017implicit}. Following their approach, for the sake of clarity, we consider the case where the input variable $z$ is a constant. Then we read~\eqref{IDEAL_UPDATE_G} as 
\begin{align}
G(\theta_{t+1}&,z)= G(\theta_t,z) \nonumber \\+&\alpha J_{G(.,z)}(\theta_t)   J_{G(.,z)}^T(\theta_t)\times \left[T(G(\theta_t,z))-G(\theta_t,z) \right] 
\end{align}

In equation \eqref{IDEAL_UPDATE_G}, the term $T(G_{\theta_t}(z'))-G_{\theta_t}(z')$ confirms that locally the generator is updated toward the target distribution along the optimal map. But now this term is weighted by a Jacobian and put under an expectation over the input variable $z$. As observed by \mycitet{lui2017implicit} it is difficult to understand the effect of these additional terms.

\subsection{Proofs of main text's propositions}
\label{sec:proofs_main_text}
We here recall chronologically the different propositions of the main discussion and give their proofs. A more detailed introduction to essential notions of Optimal Transport theory that we need is given in the Appendix \ref{OT_THEORY_BACKGROUND}. 

For two fixed absolutely continuous measures $\mu$ and $\nu$ on $\mathds{R}^m$ and two $L_2$ functions $f,g:\mathds{R}^m \rightarrow \mathds{R}$ denote by $V^*(f,g)$ the quantity $\int_{x \in \mathds{R}^m} f(x)d\mu(x)+\int_{y \in \mathds{R}^m} g(y)d\nu(y)$. 
\begin{proposition}
\normalfont
Given the generated distribution $\mu_{\theta_t}$ at time $t\in \mathds{N}$, the updated distribution $\mu_{\theta_{t+1}}$ lies on the $2$-Wasserstein geodesic joining the current generated distribution $\mu_{\theta_t}$ and the target distribution $P_x$, i.e $\mu_{\theta_{t+1}}\in \mathcal{G}(\mu_{\theta_t},P_x)$. Moreover, we can write $G_{\theta_{t+1}}=(1-\alpha)G_{\theta_{t}}+ \alpha T_t \circ G_{\theta_{t}}$ where $T_t$ is the optimal transport map from $\mu_{\theta_t}$ to $P_x$.
\label{local_update}
\end{proposition}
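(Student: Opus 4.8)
The plan is to unwind the two assumptions ("perfect discriminator" and "ideal update") and then invoke the explicit description of $W_2$-geodesics from Section~\ref{sec:geodesics_W_2}. First I would recall that, by the ideal-update assumption, $G_{\theta_{t+1}} = G_{t+1}$, where $G_{t+1}$ is given by the functional update rule of Definition~\ref{derivative_functional_gradien_loss} with learning rate $\alpha' = \alpha$, i.e.
\[
G_{\theta_{t+1}} = G_{\theta_t} - \alpha\, \nabla\phi_t \circ G_{\theta_t}.
\]
Next, by the perfect-discriminator assumption, $\phi_t$ is a Kantorovitch potential for the dual problem \eqref{KANTOROVITCH_DUAL} between $\mu_{\theta_t}$ and $P_x$; since both measures are absolutely continuous (first assumption), Brenier's theorem in the form \eqref{T_GRAD_PHI} (Proposition~\ref{prop:uniqueRelation} in the appendix) gives that the unique optimal map $T_t$ from $\mu_{\theta_t}$ to $P_x$ satisfies $T_t = \id - \nabla\phi_t$ on the support of $\mu_{\theta_t}$. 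Substituting $\nabla\phi_t = \id - T_t$ evaluated at $G_{\theta_t}(z)$ into the functional update, I get
\[
G_{\theta_{t+1}} = G_{\theta_t} - \alpha\,(\id - T_t)\circ G_{\theta_t} = (1-\alpha)\,G_{\theta_t} + \alpha\, T_t \circ G_{\theta_t},
\]
which is exactly the second claim of the proposition.

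For the first claim, I would push forward $P_z$ through this identity: $\mu_{\theta_{t+1}} = G_{\theta_{t+1}}\# P_z = \bigl((1-\alpha)\id + \alpha T_t\bigr)\# \mu_{\theta_t}$, using that $G_{\theta_{t+1}}(z) = \bigl((1-\alpha)\id + \alpha T_t\bigr)(G_{\theta_t}(z))$ and the composition rule for push-forwards. Now I set $s = \alpha \in [0,1]$ and compare with the geodesic description in Section~\ref{sec:geodesics_W_2}: the constant-speed $W_2$-geodesic from $\mu_{\theta_t}$ to $P_x$ is $\{(T_s)\#\mu_{\theta_t}\}_{0\le s\le 1}$ with $T_s = (1-s)\id + sT_t$. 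Hence $\mu_{\theta_{t+1}}$ is precisely the point at parameter $s=\alpha$ of that geodesic, so $\mu_{\theta_{t+1}} \in \mathcal{G}(\mu_{\theta_t},P_x)$.

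The only genuinely delicate point is justifying that $T_t = \id - \nabla\phi_t$ holds $\mu_{\theta_t}$-almost everywhere even though $\phi_t$ is only a Kantorovitch potential (a priori merined up to an additive constant, and a priori only guaranteed differentiable $\mu_{\theta_t}$-a.e.): one must check that the functional update rule and the resulting push-forward only ever evaluate $\nabla\phi_t$ on the support of $\mu_{\theta_t}$, where Brenier's theorem applies, and that the set where $\phi_t$ fails to be differentiable is $\mu_{\theta_t}$-negligible — both of which follow from absolute continuity of $\mu_{\theta_t}$ and the convexity of the Brenier potential. I would also remark that the map $(1-\alpha)\id + \alpha T_t = \nabla\bigl((1-\alpha)\tfrac{\|\cdot\|^2}{2} + \alpha u\bigr)$ is itself the gradient of a convex function (with $u$ the Brenier potential, $\nabla u = T_t$), so it is itself an optimal map from $\mu_{\theta_t}$ to $\mu_{\theta_{t+1}}$, which is what makes the intermediate point lie on the geodesic rather than merely on some interpolating path; this is the standard argument behind McCann's displacement interpolation and I would cite \citep{Villani, ambrosio2008gradient} for it.
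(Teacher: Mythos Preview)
Your proof is correct and follows essentially the same route as the paper: invoke the ideal-update assumption to write $G_{\theta_{t+1}}=(\id-\alpha\nabla\phi_t)\circ G_{\theta_t}$, use the perfect-discriminator assumption together with \eqref{T_GRAD_PHI} to replace $\nabla\phi_t$ by $\id-T_t$, and then recognise the resulting push-forward as the point at parameter $\alpha$ on the displacement interpolation $\mathcal{G}(\mu_{\theta_t},P_x)$. Your additional remarks on $\mu_{\theta_t}$-a.e.\ differentiability of $\phi_t$ and on $(1-\alpha)\id+\alpha T_t$ being itself the gradient of a convex function are accurate refinements that the paper leaves implicit.
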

\begin{proof}
We first prove the second part of the proposition. Because the update is ideal \[G_{\theta_{t+1}}=G_{t+1}=G_{\theta_{t}}-\alpha\nabla \phi_t(G_{\theta_{t}})= (\id-\alpha\nabla \phi_t)\circ G_{\theta_{t}}\]
The discriminator $\phi_t$ being perfect it is related to $T_t$ via $T_t= \id-\nabla \phi_t$ from equation \ref{T_GRAD_PHI}. Hence we obtain $G_{\theta_{t+1}}=(1-\alpha)G_{\theta_{t}}+ \alpha T_t \circ G_{\theta_{t}}$. In terms of probability measures, this writes 
\[\mu_{\theta_{t+1}}=[(1-\alpha)\id+ \alpha T_t]\# \mu_{\theta_t}\]
By the characterization of $W_2$ geodesics for absolutely continuous measures we thus have $\mu_{\theta_{t+1}}\in \mathcal{G}(\mu_{\theta_t},P_x)$.
\end{proof}

\begin{proposition}
\normalfont
\label{global_evolution_generated_distrib}
The training dynamic is a subset of the $W_2$-geodesic between the initial distribution $\mu_{\theta_0}$ and the target distribution $P_x$:
\[\{\mu_{\theta_0},...,\mu_{\theta_t},...\} \subset \mathcal{G}(\mu_{\theta_0},P_x)\] 
\end{proposition}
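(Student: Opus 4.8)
The plan is to bootstrap from the local statement of Proposition \ref{local_update} by induction on $t$, using the key fact recalled in Section \ref{sec:geodesics_W_2} that $W_2$-geodesics are unique and that a geodesic can be "extended": if $\mu_{\theta_t}$ lies on $\mathcal{G}(\mu_{\theta_0},P_x)$ and $\mu_{\theta_{t+1}}$ lies on $\mathcal{G}(\mu_{\theta_t},P_x)$, then $\mu_{\theta_{t+1}}$ still lies on $\mathcal{G}(\mu_{\theta_0},P_x)$. Concretely, I would set up the induction hypothesis that there exists $f(t)\in[0,1]$ such that $\mu_{\theta_t}=\big((1-f(t))\id+f(t)T\big)\#\mu_{\theta_0}$, where $T$ is the (unique) optimal map from $\mu_{\theta_0}$ to $P_x$; this is exactly the McCann-interpolation parametrization of the geodesic $\mathcal{G}(\mu_{\theta_0},P_x)$ given in Section \ref{sec:geodesics_W_2} with $\mu_t = T_t\#\mu$, $T_t=(1-t)\id+tT$.

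First I would handle the base case $t=0$ with $f(0)=0$, which is trivial. For the inductive step, assume $\mu_{\theta_t}=\big((1-f(t))\id+f(t)T\big)\#\mu_{\theta_0}$. By Proposition \ref{local_update}, $\mu_{\theta_{t+1}}=\big((1-\alpha)\id+\alpha T_t\big)\#\mu_{\theta_t}$ where $T_t$ is the optimal map from $\mu_{\theta_t}$ to $P_x$. The crucial sub-step is to identify $T_t$ explicitly: since $\mu_{\theta_t}$ is an intermediate point of the geodesic from $\mu_{\theta_0}$ to $P_x$, the optimal map from $\mu_{\theta_t}$ to $P_x$ is $T_t = T\circ S_t^{-1}$ where $S_t=(1-f(t))\id+f(t)T$, equivalently the maps $S_t$ and $T\circ S_t^{-1}$ compose correctly so that $T_t\circ S_t = T$ on the support of $\mu_{\theta_0}$. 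Plugging this in, I would compute the composition $\big((1-\alpha)\id+\alpha T_t\big)\circ S_t = (1-\alpha)S_t+\alpha T = (1-\alpha)(1-f(t))\id + \big((1-\alpha)f(t)+\alpha\big)T$, which has the desired form with $f(t+1)=(1-\alpha)f(t)+\alpha = 1-(1-\alpha)(1-f(t))$. Pushing $\mu_{\theta_0}$ forward through this map shows $\mu_{\theta_{t+1}}\in\mathcal{G}(\mu_{\theta_0},P_x)$, closing the induction; this also incidentally yields the explicit formula and the exponential rate $1-f(t)=(1-\alpha)^t$ needed later for Propositions \ref{decreasing_ditance} and \ref{generator_recovers_T}.

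The main obstacle is the sub-step identifying the intermediate optimal map $T_t$: one must justify that the optimal transport map from a McCann interpolant $\mu_s = S_s\#\mu_0$ to the endpoint $\nu$ is obtained by "the remaining portion" of the original map, i.e. that $S_t$ is invertible $\mu_{\theta_0}$-a.e. (it is, since it is the gradient of a strictly convex function, being a convex combination $\id$ contributes strict convexity) and that $T\circ S_t^{-1}$ is indeed optimal from $\mu_{\theta_t}$ to $P_x$ (rather than merely \emph{a} transport map). This follows from the fact that the concatenation of the geodesic from $\mu_{\theta_0}$ to $\mu_{\theta_t}$ with one from $\mu_{\theta_t}$ to $P_x$ must again be a geodesic — otherwise one could shortcut and contradict $W_2(\mu_{\theta_0},P_x)=W_2(\mu_{\theta_0},\mu_{\theta_t})+W_2(\mu_{\theta_t},P_x)$ — combined with uniqueness of $W_2$-geodesics and the Brenier characterization \eqref{T_GRAD_PHI}. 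I would cite Section \ref{sec:geodesics_W_2} and the appendix on OT background for these structural facts rather than reproving them.
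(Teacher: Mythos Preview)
Your proof is correct and shares the paper's skeleton: induction on $t$, using Proposition~\ref{local_update} for the local step and uniqueness of $W_2$-geodesics to propagate the global statement. The paper's argument is terser: it simply invokes that $\mu_{\theta_t}\in\mathcal{G}(\mu_{\theta_0},P_x)$ forces $\mathcal{G}(\mu_{\theta_t},P_x)\subset\mathcal{G}(\mu_{\theta_0},P_x)$ by uniqueness of displacement interpolation, and stops there. You instead unpack the McCann parametrization explicitly, identify $T_t=T\circ S_t^{-1}$, and derive the recursion $f(t+1)=(1-\alpha)f(t)+\alpha$. This is more work than strictly needed for Proposition~\ref{global_evolution_generated_distrib} alone, but it pays off: your computation simultaneously yields the exponential rate $1-f(t)=(1-\alpha)^t$ and the explicit form of $G_{\theta_t}$, which the paper establishes separately in Propositions~\ref{decreasing_ditance}--\ref{generator_recovers_T} and an auxiliary lemma on compositions of Monge maps. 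Your justification that $T\circ S_t^{-1}$ is optimal (via the triangle-equality-plus-uniqueness argument) is exactly the content the paper packages into that lemma, so nothing is missing.
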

\begin{proof}
We proceed by induction on $t\in \mathds{N}$. Of course $\mu_{\theta_0}\in \mathcal{G}(\mu_{\theta_0},P_x)$. Assume that for some $t\geqslant 0$, $\{\mu_{\theta_0},...,\mu_{\theta_t}\} \subset \mathcal{G}(\mu_{\theta_0},P_x)$ and let us show that $\mu_{\theta_{t+1}}\in \mathcal{G}(\mu_{\theta_0},P_x)$. From the previous proposition, $\mu_{\theta_{t+1}}\in \mathcal{G}(\mu_{\theta_t},P_x)$. Now we use again the uniqueness of displacement interpolation between geodesics~\citep{Villani} and obtain that $\mu_{\theta_t}\in \mathcal{G}(\mu_{\theta_0},P_x) \Rightarrow \mathcal{G}(\mu_{\theta_t},P_x) \subset \mathcal{G}(\mu_{\theta_0},P_x)$. This concludes the induction step. 
\end{proof}

\begin{proposition}
\normalfont
\label{decreasing_distance}
The distance $W_2(\mu_{\theta_t},P_x)$ decreases exponentially fast. That is,
\[\forall t \in \mathds{N}, \ W_2(\mu_{\theta_t},P_x)\leqslant (1-\alpha)^{t}W_2(\mu_{\theta_0},P_x)\]
\end{proposition}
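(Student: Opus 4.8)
The plan is to prove $W_2(\mu_{\theta_t},P_x)\leqslant (1-\alpha)^t W_2(\mu_{\theta_0},P_x)$ by induction on $t$, where the inductive step relies entirely on Proposition~\ref{local_update} together with the defining property of constant-speed $W_2$-geodesics. The base case $t=0$ is trivial since $(1-\alpha)^0=1$.

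For the inductive step, suppose the bound holds at time $t$. By Proposition~\ref{local_update}, the updated distribution $\mu_{\theta_{t+1}}$ lies on the geodesic $\mathcal{G}(\mu_{\theta_t},P_x)$, and in fact $\mu_{\theta_{t+1}}=[(1-\alpha)\id+\alpha T_t]\#\mu_{\theta_t}$. Comparing with the explicit description of $W_2$-geodesics from Section~\ref{sec:geodesics_W_2} (namely $\mu_s=((1-s)\id+sT_t)\#\mu_{\theta_t}$), we identify $\mu_{\theta_{t+1}}$ as the point at parameter $s=\alpha$ along $\mathcal{G}(\mu_{\theta_t},P_x)$, i.e.\ $\mu_{\theta_{t+1}}=\mu_\alpha$ in that parametrization. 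The constant-speed property with $s=\alpha$ and endpoint $t=1$ then gives
\[
W_2(\mu_{\theta_{t+1}},P_x)=W_2(\mu_\alpha,\mu_1)=(1-\alpha)\,W_2(\mu_{\theta_t},P_x).
\]
Combining this with the induction hypothesis $W_2(\mu_{\theta_t},P_x)\leqslant (1-\alpha)^t W_2(\mu_{\theta_0},P_x)$ yields $W_2(\mu_{\theta_{t+1}},P_x)\leqslant (1-\alpha)^{t+1}W_2(\mu_{\theta_0},P_x)$, closing the induction.

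There is essentially no hard obstacle here: the result is a direct corollary of the local-update proposition, and the only thing to be careful about is the bookkeeping of geodesic parametrization, specifically checking that the map $(1-\alpha)\id+\alpha T_t$ produces exactly the point at arc-length fraction $\alpha$ of the geodesic (this uses that $T_t$ is the \emph{optimal} map from $\mu_{\theta_t}$ to $P_x$, guaranteeing the interpolant $(1-s)\id+sT_t$ traces the geodesic rather than some other path). One should also implicitly note $0<\alpha<1$ (or at least $\alpha\leqslant 1$) so that $1-\alpha\in[0,1)$ and the factor is a genuine contraction; if $\alpha>1$ the statement would need a separate treatment, but in the regime of interest $\alpha$ is a small learning rate. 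A remark worth adding is that this proposition immediately implies convergence $\mu_{\theta_t}\to P_x$ in $W_2$ at a geometric rate, which is the ingredient used downstream in Proposition~\ref{generator_recovers_T}.
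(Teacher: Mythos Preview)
Your proof is correct and in fact establishes the equality $W_2(\mu_{\theta_{t+1}},P_x)=(1-\alpha)W_2(\mu_{\theta_t},P_x)$, which is stronger than the stated inequality. The route, however, differs from the paper's. You invoke Proposition~\ref{local_update} to place $\mu_{\theta_{t+1}}$ at parameter $\alpha$ on the constant-speed geodesic $\mathcal{G}(\mu_{\theta_t},P_x)$ and then read off the distance directly from the geodesic property. The paper instead argues by hand: it builds an explicit admissible map $\tilde{T}=[(1-\alpha)\id+\alpha T_t]\circ T_t^{-1}$ from $P_x$ to $\mu_{\theta_{t+1}}$ (using that $T_t$ is invertible as the gradient of a strictly convex function), computes its transport cost to be $(1-\alpha)^2 W_2^2(\mu_{\theta_t},P_x)$, and concludes by taking the infimum over admissible maps. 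Your argument is shorter and yields equality, but it leans on the background fact that the displacement interpolation $((1-s)\id+sT_t)\#\mu_{\theta_t}$ is the constant-speed geodesic; the paper's argument is more self-contained, needing only the definition of $W_2$ as an infimum and a change-of-variables computation, and it only delivers the inequality (which is all that is required).
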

\begin{proof}
We proceed by induction. Assume $W_2(\mu_{\theta_t},P_x)\leqslant (1-\alpha)^{t}W_2(\mu_{\theta_0},P_x)$ for some $t\geqslant 0$ and let us show that $W_2(\mu_{\theta_{t+1}},P_x)\leqslant (1-\alpha)^{t+1}W_2(\mu_{\theta_0},P_x)$. It is evidently enough to show that $W_2(\mu_{\theta_{t+1}},P_x)\leqslant (1-\alpha)W_2(\mu_{\theta_{t}},P_x)$. 
Let $T_t$ denotes the optimal map joining $\mu_{\theta_t}$ and $P_x$. Proposition \ref{local_update} says that $\mu_{\theta_t}=[(1-\alpha)\id + \alpha T_t]\# \mu_{\theta_t}$. For $T_t$ to be the gradient of a strictly convex function $T_t^{-1}$ makes sense and we can consider the map $\tilde{T}:=[(1-\alpha)\id + \alpha T_t] \circ T_t^{-1}$. By construction $\tilde{T}\# P_x= \mu_{\theta_{t+1}}$. Hence it is an admissible map for the Monge problem between $P_x$ and $\mu_{\theta_{t+1}}$. Let us compute its cost:
\[\int_{x\in \mathds{R}^m} c(x,\tilde{T}(x))dP_x(x)=\frac{1}{2}\int_{x\in \mathds{R}^m} \|x-\tilde{T}(x)\|_2^2dP_x(x)=\frac{(1-\alpha)^2}{2}\int_{x\in \mathds{R}^m} \|(T_t^{-1}(x)-x)\|_2^2dP_x(x)\]
The right-hand side of the above is $(1-\alpha)^2W_2^2(P_x,\mu_{\theta_t})$. We found an admissible map $\tilde{T}$ with such a cost, so by taking the square root of the above and the infimum over admissible map we obtain the desired $W_2(\mu_{\theta_{t+1}},P_x)\leqslant (1-\alpha)W_2(\mu_{\theta_{t}},P_x)$.
\end{proof}

\begin{lemma}
\normalfont
\label{composition_monge_ismonge}
We assume discriminators to be perfect and updates of the generator to be ideal. Then for any integer $t\geqslant 0$,
\begin{enumerate}
\item $G_{\theta_{t+1}}=H_{t,t+1}\circ G_{\theta_{t}}$ where $H_{t,t+1}$ solves the Monge problem between ${\mu_{\theta_t}}$ and $\mu_{\theta_{t+1}}$.
\item Denoting $T_{t,t+k}$ the unique Monge map between $\mu_{\theta_t}$ and $\mu_{\theta_{t+k}}$, we have $T_{t,t+k}=H_{t+k-1,t+k}\circ ... \circ H_{t,t+1}$
\end{enumerate}
\end{lemma}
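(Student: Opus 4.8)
The plan is to get both parts out of Proposition~\ref{local_update} together with Brenier's theorem, the only substantive new ingredient being that optimal maps compose along a common $W_2$-geodesic. For the first claim, Proposition~\ref{local_update} already says that, under the perfect-discriminator and ideal-update assumptions, $G_{\theta_{t+1}}=\big((1-\alpha)\id+\alpha T_t\big)\circ G_{\theta_t}$, where $T_t$ is the optimal map from $\mu_{\theta_t}$ to $P_x$. So I would simply set $H_{t,t+1}:=(1-\alpha)\id+\alpha T_t$; then $G_{\theta_{t+1}}=H_{t,t+1}\circ G_{\theta_t}$, hence $H_{t,t+1}\#\mu_{\theta_t}=\mu_{\theta_{t+1}}$, which makes $H_{t,t+1}$ an admissible Monge map between $\mu_{\theta_t}$ and $\mu_{\theta_{t+1}}$. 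To upgrade ``admissible'' to ``optimal'', recall from \eqref{T_GRAD_PHI} that $T_t=\id-\nabla\phi_t=\nabla\big(\tfrac12\|\cdot\|^2-\phi_t\big)$ is the gradient of a convex function (Brenier~\citep{brenier1991polar}), so
\[
H_{t,t+1}=\nabla\!\left[(1-\alpha)\tfrac12\|\cdot\|^2+\alpha\Big(\tfrac12\|\cdot\|^2-\phi_t\Big)\right]
\]
is the gradient of a convex combination of convex functions, hence again the gradient of a convex function. Since $\mu_{\theta_t},\mu_{\theta_{t+1}}$ are absolutely continuous, the uniqueness part of Brenier's theorem identifies $H_{t,t+1}$ with the unique Monge map $T_{t,t+1}$.

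For the second claim I would induct on $k$, the base case $k=1$ being exactly the first claim. The structural fact driving the induction is the one already established in the proof of Proposition~\ref{global_evolution_generated_distrib}: since $\mu_{\theta_{s+1}}\in\mathcal G(\mu_{\theta_s},P_x)$ for every $s$ (Proposition~\ref{local_update}), uniqueness of displacement interpolation gives $\mathcal G(\mu_{\theta_{s+1}},P_x)\subseteq\mathcal G(\mu_{\theta_s},P_x)$, and iterating places every $\mu_{\theta_{t+j}}$ on the single geodesic $\mathcal G(\mu_{\theta_t},P_x)$. Moreover each step moves to the parameter-$\alpha$ point of the remaining geodesic, so the geodesic parameters $s_j$ of $\mu_{\theta_{t+j}}$ along $\mathcal G(\mu_{\theta_t},P_x)$ satisfy $0=s_0\le s_1\le s_2\le\cdots$; only this ordering will matter.

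I would then invoke the concatenation property of optimal maps along a $W_2$-geodesic: if absolutely continuous measures $\rho_0,\rho_1,\rho_2$ occur in this order on a common $W_2$-geodesic, then $\mathrm{OT}(\rho_0\to\rho_2)=\mathrm{OT}(\rho_1\to\rho_2)\circ\mathrm{OT}(\rho_0\to\rho_1)$. Concretely, writing $\Phi_s=(1-s)\id+sS$ for the displacement maps of the geodesic ($S$ the Brenier map between its endpoints), the transport $\Phi_b\circ\Phi_a^{-1}$ from the parameter-$a$ to the parameter-$b$ distribution is again a Brenier map — its potential is $\tfrac{b}{2a}\|\cdot\|^2+\tfrac{a-b}{a}u_a^\ast$ with $\Phi_a=\nabla u_a$, whose convexity hinges on $b\le1$ — and these maps visibly compose; this is McCann's displacement interpolation, cf.~\citep{Villani,ambrosio2008gradient}. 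Applying it with $\rho_0=\mu_{\theta_t}$, $\rho_1=\mu_{\theta_{t+k}}$, $\rho_2=\mu_{\theta_{t+k+1}}$ yields
\[
T_{t,t+k+1}=T_{t+k,t+k+1}\circ T_{t,t+k}=H_{t+k,t+k+1}\circ T_{t,t+k},
\]
where the last equality uses the first claim. Substituting the induction hypothesis $T_{t,t+k}=H_{t+k-1,t+k}\circ\cdots\circ H_{t,t+1}$ closes the step.

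The only non-bookkeeping ingredient is the concatenation property — equivalently, that the composite of consecutive partial-McCann maps is still a Brenier (hence optimal) map. This is where both the absolute-continuity hypothesis (so $\Phi_a^{-1}$ is well defined and Brenier's theorem applies) and the constraint $\alpha\in(0,1)$ (so the iterates sit at genuine interpolation parameters, never overshooting $P_x$) are used; everything else reduces to Propositions~\ref{local_update}--\ref{decreasing_distance} and Brenier's theorem.
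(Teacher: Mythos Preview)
Your argument is correct and structurally parallel to the paper's, but you establish the key ``concatenation of optimal maps along a geodesic'' fact by a different mechanism. For part~1 the two proofs are identical: set $H_{t,t+1}=(1-\alpha)\id+\alpha T_t=\id-\alpha\nabla\phi_t$, observe it is the gradient of a convex function, and invoke Brenier. For part~2 the paper works directly with the $k=2$ case: it writes $H_{t,t+1}=(1-\beta)\id+\beta T_{t,t+2}$ for some $\beta$, forms $F:=T_{t,t+2}\circ H_{t,t+1}^{-1}$, and \emph{computes the transport cost of $F$} via the change of variables $y=H_{t,t+1}(x)$, obtaining exactly $(1-\beta)^2 W_2^2(\mu_{\theta_t},\mu_{\theta_{t+2}})=W_2^2(\mu_{\theta_{t+1}},\mu_{\theta_{t+2}})$, so $F$ is optimal and hence equals $H_{t+1,t+2}$. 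You instead stay on the Brenier side: you exhibit a convex potential for $\Phi_b\circ\Phi_a^{-1}$ (your formula $\tfrac{b}{2a}\|\cdot\|^2+\tfrac{a-b}{a}u_a^\ast$ is correct, and its convexity indeed comes down to the bound $\nabla^2 u_a^\ast\preceq(1-a)^{-1}I$ together with $b\leqslant 1$), so each partial map is automatically the unique optimal one and composition is tautological. The paper's cost computation is more self-contained and avoids any smoothness bookkeeping on $u_a^\ast$; your Brenier-potential route is cleaner once one is willing to cite the displacement-interpolation machinery, and makes the role of the constraint $\alpha\in(0,1)$ (no overshooting, $b\leqslant1$) more transparent.
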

\begin{proof}
From Proposition \ref{local_update} $H_{t,t+1}$ exists and is written $H_{t,t+1}=(1-\alpha)\id + \alpha T_t$ where $T_t$ is the Monge map between $\mu_{\theta_t}$ and $P_x$. From the fact that $T_t=\id- \nabla \phi_t$, $H_{t,t+1}=\id -\alpha \nabla \phi_t$. Hence $H_{t,t+1}$ remains the gradient of a strictly convex function on $\mathds{R}^m$ and thus is a Monge map. By definition $H_{t,t+1}\#\mu_{\theta_t}=\mu_{\theta_{t+1}}$ and thus the first point is proved. 

To prove the second point, we focus on the case of the map $T_{t,t+2}$, the argument extending by induction for arbitrary $(T_{t,t+k})_{k\in \mathds{N}^*}$. From Proposition \ref{global_evolution_generated_distrib}, $\mu_{\theta_{t+1}}\in \mathcal{G}(\mu_{\theta_{t}},\mu_{\theta_{t+2}})$. From the paragraph about geodesics \ref{sec:geodesics_W_2}, $\mathcal{G}(\mu_{\theta_{t}},\mu_{\theta_{t+2}})=\{(1-\beta)\id+\beta T_{t,t+2})\#\mu_{\theta_{t}}\}_{0\leqslant \beta \leqslant 1}$. Thus there is a $\beta$ such that $((1-\beta)\id+\beta T_{t,t+2})\#\mu_{\theta_{t}}=\mu_{\theta_{t+1}}$. Moreover, as in the previous point, $((1-\beta)\id+\beta T_{t,t+2})$ is an optimal map between $\mu_{\theta_{t}}$ and $\mu_{\theta_{t+1}}$. By uniqueness of Monge maps, $(1-\beta)\id+\beta T_{t,t+2}=H_{t,t+1}$. From the structure of $W_2$-geodesics this also implies that $W_2(\mu_{\theta_t},\mu_{\theta_{t+1}})=\beta W_2(\mu_{\theta_{t}},\mu_{\theta_{t+2}})$, and thus that $W_2(\mu_{\theta_{t+1}},\mu_{\theta_{t+2}})=(1-\beta) W_2(\mu_{\theta_{t}},\mu_{\theta_{t+2}})$.

As gradients of strictly convex functions, Monge maps are injective (in fact invertible as we consider absolutely continuous measures which implies that Monge maps are also surjective). It thus makes sense to consider the function $F:= T_{t,t+2}\circ H_{t,t+1}^{-1}= T_{t+2}\circ [(1-\beta)\id+\beta T_{t,t+2}]^{-1}$. By construction this is an admissible map for the Monge problem between $\mu_{\theta_{t+1}}$ and $\mu_{\theta_{t+2}}$, i.e $F\# \mu_{\theta_{t+1}}=\mu_{\theta_{t+2}}$. From \cite{Villani}, Monge maps are changes of variables thus we have:
\[\int_{y \in \mathds{R}^m} \frac{1}{2}\|F(y)-y\|_2^2 d\mu_{\theta_{t+1}}(y)=\int_{x \in \mathds{R}^m} \frac{1}{2}\|F((1-\beta)x+\beta T_{t,t+2}(x))-(1-\beta)x+\beta T_{t,t+2}(x)\|_2^2 d\mu_{\theta_{t}}(x)\]
\[=\int_{x \in \mathds{R}^m} \frac{1}{2}\|T_{t,t+2}(x)-(1-\beta)x-\beta T_{t,t+2}(x)\|_2^2 d\mu_{\theta_{t}}(x)=(1-\beta)^2\int_{x \in \mathds{R}^m} \frac{1}{2}\|T_{t,t+2}(x)-x\|_2^2 d\mu_{\theta_{t}}(x)\]
The right hand side of the above is exactly $(1-\beta)^2W_2^2(\mu_{\theta_t},\mu_{\theta_{t+2}})$. Hence $F$ is the unique Monge map between $\mu_{\theta_{t+1}}$ and $\mu_{\theta_{t+2}}$, i.e $H_{t+1,t+2}=F$. We have proved that $H_{t+1,t+2}=T_{t,t+2}\circ H_{t,t+1}^{-1}$ which is the desired equality.

\end{proof}

\begin{proposition}
\normalfont
Denote $T$ the optimal map between $\mu_{\theta_0}$ and $P_x$. There is a decreasing function $f:\mathds{N}\rightarrow [0,1]$ with $f(0)=1$ such that
\label{generator_recovers_T}
\[\forall t \in \mathds{N}, \ \mu_{\theta_t}=(f(t)\id+ (1-f(t))T)\# \mu_{\theta_0} \]
\[\forall t \in \mathds{N}, \ G_{\theta_t}=(f(t)\id+ (1-f(t))T)\circ G_{\theta_0} \]
Moreover, $\mu_{\theta_t}$ (resp. $G_{\theta_t}$) converges exponentially fast toward $T \# \mu_{\theta_0}$ (resp. $T\circ G_{\theta_0}$) in the sense that $f(t)\leqslant (1-\alpha)^t$.
\end{proposition}
\begin{proof}
We first prove the part of the proposition that concerns the generated probability measures $\mu_{\theta_t}$ for $t\in \mathds{N}$.
The fact that there exists a function $f:\mathds{N}\rightarrow [0,1]$ with $f(0)=1$ such that 
$\mu_{\theta_t}=(f(t)\id+ (1-f(t))T)\# \mu_{\theta_0}$ for all $t\in \mathds{N}$ is simply using the fact that $\mu_{\theta_t}\in \mathcal{G}(\mu_{\theta_0},P_x)$ and the characterization of $W_2$-geodesics. For a given $t\in \mathds{N}$, again by definition of a $W_2$-geodesic, we have $W_2(\mu_{\theta_t},P_x)=f(t)W_2(\mu_{\theta_0},P_x)$. Hence by proposition \ref{decreasing_distance} $f$ decreases and we have the inequality $f(t)\leqslant (1-\alpha)^t$.

We now address the part of the proposition that concerns the generator function $G_{\theta_t}$. From the lemma \ref{composition_monge_ismonge}, we have $G_{\theta_t}=T_t \circ G_{\theta_0}$ where $T_t$ is the unique Monge map between $\mu_{\theta_0}$ and $\mu_{\theta_t}$. By the above $T_t=(f(t)\id +(1-f(t))T)$ and thus we are done.
\end{proof}
The Figure \ref{Evolution_generator} is a pictorial view of this result that shows how a generated distribution evolves under the $W_2$ loss.
\begin{figure}[t!]
\begin{center}
\includegraphics[width=7cm]{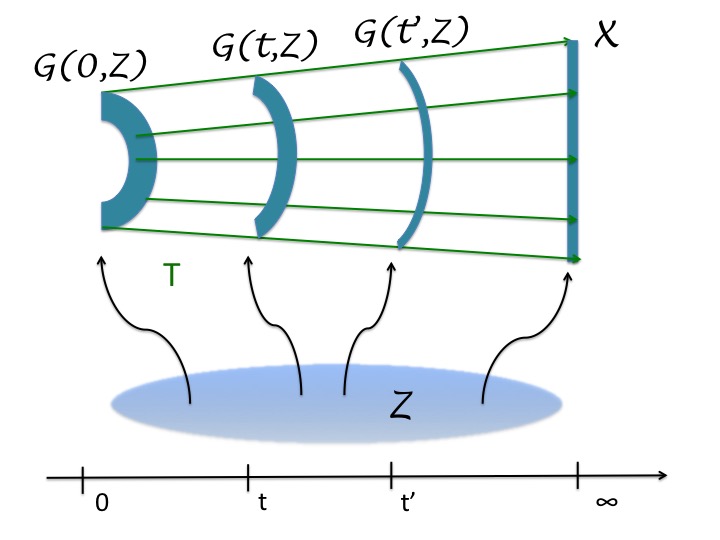}
\caption{The time evolving generated distribution minimizing its $2$-Wasserstein distance with the true distribution $X\sim P_x$. The input space is fixed and we denote it as $Z\sim P_z$. Green arrows give the shape of the optimal transport map $T$ between the initial distribution $G(0,Z)$ and the true distribution $X$. During training, $G(t,Z)$ does not follow an arbitrary path for converging toward $X$. It follows the Wasserstein-2 geodesic between $G(0,Z)$ and $X$ described by T.}
\label{Evolution_generator}
\end{center}
\end{figure}



\begin{proposition}
\normalfont
\label{unperfect_discriminator}
Given the current generated distribution $\mu_{\theta_t}$, consider a discriminator $\phi$ such that $\|\nabla \phi - \nabla \Tilde{\phi}\|_{\infty}\leqslant \epsilon$, where $\nabla \Tilde{\phi}$ is the gradient of a Kantorovitch potential $\Tilde{\phi}$ for the Kantorovitch problem \eqref{KANTOROVITCH_DUAL} between $\mu_{\theta_t}$ and $P_x$. Let $\mu_{\theta_{t+1}}$ and $\Tilde{\mu}_{\theta_{t+1}}$ be two ideally updated distributions, according to discriminators $\phi$ and $\tilde{\phi}$ respectively. Then $W_2(\mu_{\theta_{t+1}},\Tilde{\mu}_{\theta_{t+1}})\leqslant \frac{\alpha \epsilon}{\sqrt{2}}$.  
\end{proposition}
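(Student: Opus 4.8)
The plan is to use the explicit pushforward form of the ideal (functional) update and then exhibit the obvious synchronous coupling. First I would recall that, by the Definition of the functional update rule applied with $\alpha'=\alpha$ (the ``ideal'' setting), the updated generators are $G_{\theta_{t+1}}=(\id-\alpha\nabla\phi)\circ G_{\theta_t}$ and $\Tilde{G}_{\theta_{t+1}}=(\id-\alpha\nabla\Tilde{\phi})\circ G_{\theta_t}$, so that in terms of probability measures $\mu_{\theta_{t+1}}=(\id-\alpha\nabla\phi)\#\mu_{\theta_t}$ and $\Tilde{\mu}_{\theta_{t+1}}=(\id-\alpha\nabla\Tilde{\phi})\#\mu_{\theta_t}$. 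Note both are pushforwards of the same source measure $\mu_{\theta_t}$, which is what makes the rest easy.

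Next I would build an admissible coupling $\gamma\in\Pi(\mu_{\theta_{t+1}},\Tilde{\mu}_{\theta_{t+1}})$ by pushing $\mu_{\theta_t}$ forward through the map $x\mapsto\big((\id-\alpha\nabla\phi)(x),(\id-\alpha\nabla\Tilde{\phi})(x)\big)$; its two marginals are exactly $\mu_{\theta_{t+1}}$ and $\Tilde{\mu}_{\theta_{t+1}}$. Since $W_2^2$ is the infimum of $\int\frac12\|a-b\|_2^2\,d\gamma$ over admissible couplings (recall the $\frac1p$ normalization of the cost function fixed in the background section), plugging in this particular $\gamma$ gives
\[
W_2^2(\mu_{\theta_{t+1}},\Tilde{\mu}_{\theta_{t+1}})\;\leqslant\;\tfrac12\,\mathds{E}_{x\sim\mu_{\theta_t}}\big\|(\id-\alpha\nabla\phi)(x)-(\id-\alpha\nabla\Tilde{\phi})(x)\big\|_2^2\;=\;\tfrac{\alpha^2}{2}\,\mathds{E}_{x\sim\mu_{\theta_t}}\|\nabla\phi(x)-\nabla\Tilde{\phi}(x)\|_2^2 .
\]
The hypothesis $\|\nabla\phi-\nabla\Tilde{\phi}\|_\infty\leqslant\epsilon$ bounds the integrand pointwise by $\epsilon^2$, so the right-hand side is at most $\alpha^2\epsilon^2/2$; taking square roots yields $W_2(\mu_{\theta_{t+1}},\Tilde{\mu}_{\theta_{t+1}})\leqslant\frac{\alpha\epsilon}{\sqrt2}$, as claimed.

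There is essentially no deep obstacle: the whole argument is the elementary fact that any admissible coupling upper-bounds $W_2$. The only points requiring care are bookkeeping. One must check that the ideal update is taken with learning rate $\alpha$ rather than a generic $\alpha'$ (which is exactly the ``ideal update'' assumption), keep track of the non-standard $\frac1p$ normalization in $W_p^p$ that produces the $\sqrt2$ rather than a $2$ in the denominator, and observe that $\nabla\Tilde{\phi}$ is well-defined $\mu_{\theta_t}$-a.e.\ — by Eqn.~\eqref{T_GRAD_PHI} and uniqueness of the Monge map under the standing absolute-continuity assumption — so that the maps and expectations above are meaningful.
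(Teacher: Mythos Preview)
Your proof is correct. The paper's own argument is essentially the dual counterpart of yours: instead of exhibiting the synchronous coupling $\gamma=\big((\id-\alpha\nabla\phi),(\id-\alpha\nabla\Tilde{\phi})\big)\#\mu_{\theta_t}$ and bounding the primal Kantorovich cost, the paper takes an arbitrary admissible pair $(f,g)\in\mathcal{A}^*(\mu_{\theta_{t+1}},\Tilde{\mu}_{\theta_{t+1}})$, writes $V^*(f,g)$ as a single integral over $P_z$ (using that both measures are pushforwards of $P_z$ through the same $G_{\theta_t}$), applies the $c$-inequality constraint pointwise, and then takes the supremum. Both routes hinge on exactly the same observation --- the two updated measures are pushforwards of a common source by maps that differ by at most $\alpha\epsilon$ pointwise --- and both yield the identical bound $W_2^2\leqslant\alpha^2\epsilon^2/2$. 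Your primal/coupling argument is arguably the more direct of the two, since upper-bounding $W_2$ by any admissible transport plan is the most elementary tool available; the paper's dual version is equally short but requires the reader to keep the Kantorovich dual \eqref{KANTOROVITCH_DUAL} in mind.
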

\begin{proof}
By the way ideal updates are constructed, we have $\mu_{\theta_{t+1}}=G_{t+1}\#P_z$ and $\Tilde{\mu}_{\theta_{t+1}}=\Tilde{G}_{t+1}\#P_z$ where $G_{t+1}:=(\id-\alpha \nabla \phi)\circ G_{\theta_t}$ and $\Tilde{G}_{t+1}:=(\id-\alpha \nabla \Tilde{\phi})\circ G_{\theta_t}$. Consider potentials $(f,g)\in \mathcal{A}^*(\mu_{\theta_{t+1}},\Tilde{\mu}_{\theta_{t+1}})$. Then,

\[V^*(f,g)=\int_{z\in \mathds{R}^d} f(G_{t+1}(z))dP_z(z)+ \int_{z\in \mathds{R}^d} g(\Tilde{G}_{t+1}(z))dP_z(z)\]

Unfolding the definitions, we rewrite the above as
\[V^*(f,g)=\int_{z\in \mathds{R}^d} [f((\id-\alpha \nabla \phi)( G_{\theta_t}(z)))+ \int_{z\in \mathds{R}^d} g((\id-\alpha \nabla \Tilde{\phi})(G_{\theta_t}(z)))]dP_z(z)\]
$(f,g)\in \mathcal{A}^*(\mu_{\theta_{t+1}},\Tilde{\mu}_{\theta_{t+1}})$ means that for any $(x,y)\in (\mathds{R}^m)^2$, $f(x)+g(y)\leqslant \frac{\|x-y\|_2^2}{2}$. Hence,
\[V^*(f,g)\leqslant \alpha^2\int_{z\in \mathds{R}^d} \frac{\|(\nabla \phi-\nabla \Tilde{\phi})(G_{\theta_t}(z))\|_2^2}{2} dP_z(z)\leqslant \frac{\alpha^2\epsilon^2}{2}\]
where we used that $\|\nabla \phi - \nabla \Tilde{\phi}\|_{\infty}\leqslant \epsilon$ in order to obtain the second inequality. Taking the supremum over all $(f,g)\in \mathcal{A}^*(\mu_{\theta_{t+1}},\Tilde{\mu}_{\theta_{t+1}})$ we obtain $W_2^2(\mu_{\theta_{t+1}},\Tilde{\mu}_{\theta_{t+1}})\leqslant \frac{\alpha^2\epsilon^2}{2}$.
\end{proof}

\begin{proposition}
\normalfont
\label{deviation_ideal_update}
Fix a time step $t\in \mathds{N}$. Denote $\mu_{\theta_{t+1}}$ and $\Tilde{\mu}_{\theta_{t+1}}$ the generated distributions obtained using the parametric and functional updates respectively. If $\|G_{\theta_{t+1}}-G_{t+1}\|_2\leqslant \epsilon'$ or $\|G_{\theta_{t+1}}-G_{t+1}\|_\infty\leqslant \epsilon'$ for some $ \epsilon' \geqslant 0$, then $W_2(\mu_{\theta_{t+1}},\Tilde{\mu}_{\theta_{t+1}})\leqslant \frac{\epsilon'}{\sqrt{2}}$. 

\end{proposition}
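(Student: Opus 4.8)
The plan is to exhibit an explicit coupling between $\mu_{\theta_{t+1}}$ and $\Tilde{\mu}_{\theta_{t+1}}$ that is cheap for the quadratic cost, mirroring the argument used in the proof of Proposition~\ref{unperfect_discriminator}. Recall that $\mu_{\theta_{t+1}}=G_{\theta_{t+1}}\#P_z$ and $\Tilde{\mu}_{\theta_{t+1}}=G_{t+1}\#P_z$, so the push-forward of $P_z$ through the map $z\mapsto(G_{\theta_{t+1}}(z),G_{t+1}(z))$ is a valid transport plan between the two measures. Using the primal Kantorovitch problem with cost $c(x,y)=\tfrac12\|x-y\|_2^2$ — which, under the conventions of the paper, has value exactly $W_2^2(\mu_{\theta_{t+1}},\Tilde{\mu}_{\theta_{t+1}})$ — and plugging in this particular plan gives
\[
W_2^2(\mu_{\theta_{t+1}},\Tilde{\mu}_{\theta_{t+1}})\;\leqslant\;\tfrac12\,\mathds{E}_{z\sim P_z}\big\|G_{\theta_{t+1}}(z)-G_{t+1}(z)\big\|_2^2\;=\;\tfrac12\,\|G_{\theta_{t+1}}-G_{t+1}\|_2^2 .
\]

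Equivalently, and in closer analogy with the proof of Proposition~\ref{unperfect_discriminator}, one may argue through the dual: for any admissible pair $(f,g)\in\mathcal{A}^*(\mu_{\theta_{t+1}},\Tilde{\mu}_{\theta_{t+1}})$ the inequality constraint $f(x)+g(y)\leqslant\tfrac12\|x-y\|_2^2$ yields $V^*(f,g)=\mathds{E}_{z\sim P_z}[f(G_{\theta_{t+1}}(z))+g(G_{t+1}(z))]\leqslant \tfrac12\,\mathds{E}_{z\sim P_z}\|G_{\theta_{t+1}}(z)-G_{t+1}(z)\|_2^2$, and taking the supremum over $(f,g)$ recovers the same bound on $W_2^2$.

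It then remains to estimate $\mathds{E}_{z\sim P_z}\|G_{\theta_{t+1}}(z)-G_{t+1}(z)\|_2^2$ under each of the two hypotheses. If $\|G_{\theta_{t+1}}-G_{t+1}\|_2\leqslant\epsilon'$ this expectation is by definition $\|G_{\theta_{t+1}}-G_{t+1}\|_2^2\leqslant(\epsilon')^2$. If instead $\|G_{\theta_{t+1}}-G_{t+1}\|_\infty\leqslant\epsilon'$, then $\|G_{\theta_{t+1}}(z)-G_{t+1}(z)\|_2\leqslant\epsilon'$ for $P_z$-almost every $z$, so the expectation is again at most $(\epsilon')^2$. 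In both cases $W_2^2(\mu_{\theta_{t+1}},\Tilde{\mu}_{\theta_{t+1}})\leqslant(\epsilon')^2/2$, and taking square roots gives $W_2(\mu_{\theta_{t+1}},\Tilde{\mu}_{\theta_{t+1}})\leqslant\epsilon'/\sqrt2$, as claimed.

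There is essentially no obstacle here; the argument is a one-line coupling (or dual) bound. The only points requiring a little care are keeping the factor $\tfrac12$ in the cost consistent with the paper's convention that the quadratic Monge problem evaluates to $W_2^2$ — this is precisely what produces the $\sqrt2$ in the denominator — and, in the $L^\infty$ case, reading $\|\cdot\|_\infty$ as the $P_z$-essential supremum so that the pointwise bound transfers to the expectation.
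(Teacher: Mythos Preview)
Your proof is correct and matches the paper's own argument: the paper proceeds exactly via the dual bound you give as your second approach, using the $c$-inequality constraint on an arbitrary pair $(f,g)\in\mathcal{A}^*$ and then taking the supremum. Your additional primal coupling formulation is just the equivalent viewpoint of the same one-line estimate, and your handling of the two hypotheses ($L^2$ and $L^\infty$) and the $\tfrac12$ convention is accurate.
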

\begin{proof}
Let $(f,g) \in  \mathcal{A}^*(\mu_{\theta_{t+1}},\Tilde{\mu}_{\theta_{t+1}})$. We have
\[V^*(f,g)=\int_{z\in \mathds{R}^d} f(G_{t+1}(z))dP_z(z)+ \int_{z\in \mathds{R}^d} g(G_{\theta_{t+1}}(z))dP_z(z)\]
$(f,g)\in \mathcal{A}^*(\mu_{\theta_{t+1}},\Tilde{\mu}_{\theta_{t+1}})$ means that for any $(x,y)\in (\mathds{R}^m)^2$, $f(x)+g(y)\leqslant \frac{\|x-y\|_2^2}{2}$. Hence,
\[V^*(f,g)\leqslant \int_{z\in \mathds{R}^d} \frac{\|(G_{\theta_{t+1}}(z)-G_{t+1}(z))\|_2^2}{2} dP_z(z)\leqslant \frac{\epsilon'^2}{2}\]
where we used either $\|G_{\theta_{t+1}}-G_{t+1}\|_2\leqslant \epsilon'$ or $\|G_{\theta_{t+1}}-G_{t+1}\|_\infty\leqslant \epsilon'$ to device the last inequality. Taking the supremum over all $(f,g)\in \mathcal{A}^*(\mu_{\theta_{t+1}},\Tilde{\mu}_{\theta_{t+1}})$ we obtain $W_2^2(\mu_{\theta_{t+1}},\Tilde{\mu}_{\theta_{t+1}})\leqslant \frac{\epsilon'^2}{2}$.
\end{proof}

\begin{corollary}
\normalfont
\label{total deviation}
If
\begin{inlinelist}
    \item the discriminator $\phi$ is such that $\|\nabla \phi - \nabla \Tilde{\phi}\|_{\infty}\leqslant \epsilon$, where $\nabla \Tilde{\phi}$ is as in Proposition \ref{unperfect_discriminator}
    \item the parametric and functional updates $G_{\theta_{t+1}}$ and $G_{t+1}$ with respect to $\phi$ are such that$\|G_{\theta_{t+1}}-G_{t+1}\|_2\leqslant \epsilon'$ or $\|G_{\theta_{t+1}}-G_{t+1}\|_\infty\leqslant \epsilon'$. 
\end{inlinelist}
then $W_2(\mu_{\theta_{t+1}},\Tilde{\mu}_{\theta_{t+1}})\leqslant \frac{\alpha\epsilon+\epsilon'}{\sqrt{2}}$ where $\Tilde{\mu}_{\theta_{t+1}}$ is the ideally updated distribution with respect to the perfect discriminator $\Tilde{\phi}$. 
\end{corollary}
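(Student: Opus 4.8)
The plan is to combine Propositions~\ref{unperfect_discriminator} and~\ref{deviation_ideal_update} via the triangle inequality for the $W_2$ metric. Concretely, I would introduce an intermediate distribution: let $\mu_{\theta_{t+1}}^{\mathrm{id}}$ denote the distribution obtained by applying the \emph{ideal} (functional) update rule with respect to the sub-optimal discriminator $\phi$, i.e. $\mu_{\theta_{t+1}}^{\mathrm{id}} = G_{t+1}\# P_z$ with $G_{t+1} = (\id - \alpha\nabla\phi)\circ G_{\theta_t}$. This sits ``between'' the two endpoints of interest: on one side we have $\Tilde{\mu}_{\theta_{t+1}}$, the ideal update with respect to the perfect potential $\Tilde{\phi}$; on the other side we have $\mu_{\theta_{t+1}}$, the \emph{parametric} update with respect to $\phi$.

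The first step is to invoke Proposition~\ref{unperfect_discriminator}, whose hypothesis $\|\nabla\phi - \nabla\Tilde{\phi}\|_\infty \leqslant \epsilon$ is exactly assumption~(i), to conclude $W_2(\mu_{\theta_{t+1}}^{\mathrm{id}}, \Tilde{\mu}_{\theta_{t+1}}) \leqslant \frac{\alpha\epsilon}{\sqrt 2}$. The second step is to invoke Proposition~\ref{deviation_ideal_update}: its hypothesis is precisely that the parametric update $G_{\theta_{t+1}}$ and the functional update $G_{t+1}$ (both taken with respect to the \emph{same} discriminator $\phi$) satisfy $\|G_{\theta_{t+1}} - G_{t+1}\|_2 \leqslant \epsilon'$ or $\|G_{\theta_{t+1}} - G_{t+1}\|_\infty \leqslant \epsilon'$, which is assumption~(ii); this yields $W_2(\mu_{\theta_{t+1}}, \mu_{\theta_{t+1}}^{\mathrm{id}}) \leqslant \frac{\epsilon'}{\sqrt 2}$. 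The final step is the triangle inequality:
\[
W_2(\mu_{\theta_{t+1}}, \Tilde{\mu}_{\theta_{t+1}}) \leqslant W_2(\mu_{\theta_{t+1}}, \mu_{\theta_{t+1}}^{\mathrm{id}}) + W_2(\mu_{\theta_{t+1}}^{\mathrm{id}}, \Tilde{\mu}_{\theta_{t+1}}) \leqslant \frac{\epsilon'}{\sqrt 2} + \frac{\alpha\epsilon}{\sqrt 2} = \frac{\alpha\epsilon + \epsilon'}{\sqrt 2}.
\]

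There is essentially no hard part here: the corollary is a bookkeeping exercise once the two preceding propositions are in hand. The only point requiring a little care is making sure the intermediate object $\mu_{\theta_{t+1}}^{\mathrm{id}}$ is the correct hinge — namely that it is simultaneously (a) an ideal update with respect to $\phi$, so that Proposition~\ref{unperfect_discriminator} applies with $\phi$ playing the role of the sub-optimal discriminator and $\Tilde\phi$ the perfect one, and (b) the functional update referenced in Proposition~\ref{deviation_ideal_update}, so that the $\epsilon'$ bound between parametric and functional updates applies verbatim. Both hold by the definitions of the functional and parametric update rules, so stitching them together with the triangle inequality closes the argument.
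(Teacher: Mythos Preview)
Your proposal is correct and mirrors the paper's own proof exactly: introduce the intermediate distribution obtained by the ideal (functional) update with respect to the sub-optimal discriminator $\phi$, apply Proposition~\ref{unperfect_discriminator} for the $\frac{\alpha\epsilon}{\sqrt{2}}$ bound and Proposition~\ref{deviation_ideal_update} for the $\frac{\epsilon'}{\sqrt{2}}$ bound, then conclude by the triangle inequality.
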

\begin{proof}
Denote $\mu_{\theta_{t+1},\Tilde{\phi}}$ the ideally updated distribution with respect to the discriminator $\phi$. From Proposition \ref{unperfect_discriminator}, $W_2(\mu_{\theta_{t+1},\phi},\Tilde{\mu}_{\theta_{t+1}})\leqslant \frac{\alpha \epsilon }{\sqrt{2}}$. From Proposition \ref{deviation_ideal_update}, $W_2(\mu_{\theta_{t+1}},\mu_{\theta_{t+1},\phi})\leqslant \frac{\epsilon' }{\sqrt{2}}$. We can thus conclude by triangle inequality.

\end{proof}

\subsection{Results in optimal transport theory}
\label{OT_THEORY_BACKGROUND}
We develop a bit more the materials of the background section, introducing the same notions with more details in the same order. \\

\textbf{Monge Problem} Optimal transport (OT) theory~\citep{Villani, Ambrosio2013} introduces a natural quantity to distinguish two prob١٢٠ability measures. Given two probability measures $\mu$ and $\nu$ on the euclidean space $\mathds{R}^d$, the original \textit{Monge problem} is to find a \textit{map} $T$ that ``transports" the $\mu$ distribution on $\nu$, and minimizes the cost of the transport, which is point-wise defined by a fixed cost function $c :\mathds{R}^d\times \mathds{R}^d \rightarrow \mathds{R}$. The value $c(x,y)$ can be seen as the cost for transporting a unit from $x$ to $y$. The problem is summarized by:
\begin{equation}
 \inf_{T \in \mathcal{A}_T} \int_{\mathds{R}^d} c(x,T(x)) d\mu(x)
 \label{MONGE_PROBLEM_BIS}
\end{equation}
where $\mathcal{A}_T$ is the set of all maps from $\mathds{R}^d$ to $\mathds{R}^d$ that respect the marginal $\nu$, more formally written as $T_\# \mu= \nu$. The map $T_\# \mu$ is called the \textit{push-forward measure} of $\mu$ and is defined as $T_\# \mu (A) = \mu(T^{-1}(A))$ for any Borel set $A \in \mathds{R}^d$. Figure \ref{fig:OT} provides an illustration of the shape of an optimal transport map $T$ solving~\eqref{MONGE_PROBLEM_BIS}.\\

\begin{figure}[t!]
  \centering
  \subfloat[The discrete case.]{
  \includegraphics[width=0.45\textwidth]{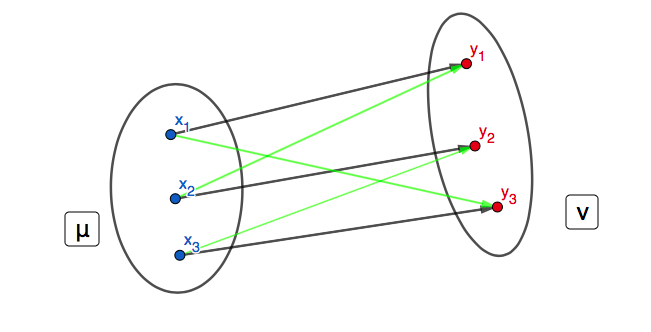}
  }
  \hfill
  \subfloat[The continuous case.]{
  \includegraphics[width=0.45\textwidth]{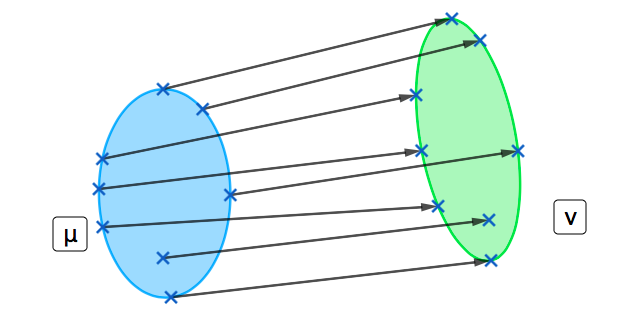}
  }
  \caption{The Monge problem. {\bf (a)} A discrete example of the Monge problem~\eqref{MONGE_PROBLEM_BIS} for distributions in $\mathds{R}^2$. The $\mu$ distribution consists in three equally weighted diracs in $x_1$, $x_2$ and $x_3$, while the $\nu$ one is represented by $y_1$, $y_2$ and $y_3$ in the same way. Black arrows denote the actual optimal transport map T. The green arrows together also define a map from $\mu$ onto $\nu$, but it is not optimal. {\bf (b)} A continuous example of the Monge problem~\eqref{MONGE_PROBLEM_BIS}. $\mu$ and $\nu$ are uniform distributions on the blue and green ellipsoids respectively. The optimal transport map T is defined for any point in the support of $\mu$, and we see how it transports some points onto $\nu$'s support with the arrows.} 
  \label{fig:OT}
 \end{figure}

\textbf{Kantorovitch relaxation} Unfortunately, problem~\eqref{MONGE_PROBLEM_BIS} often does not admit a solution as $\mathcal{A}_T$ might be empty~\footnote{For instance, let us take $\mu=\delta_0$ a dirac at $0$ and $\nu:=\frac{1}{2}(\delta_{-1}+\delta_{1})$ a weighted sum of diracs, both on real line. One can see that any map T would have to send $0$ on either $-1$ or $1$, hence the constraint $T\#\mu=\nu$ cannot hold}. To circumvent this issue, one considers the so-called \textit{Kantorovitch relaxation} of this problem:
\begin{equation}
 V_c(\mu,\nu):=\inf_{\pi \in \mathcal{A}(\mu,\nu)} \int_{ x,y \in \mathds {R}^d} c(x,y) d\pi(x,y)
 \label{KANTOROVITCH_PRIMAL}
\end{equation}
where $\mathcal{A}(\mu,\nu)$ is the set of joint distributions whose first and second marginals are equal to $\mu$ and $\nu$ respectively. The joint $\pi$ is called the \textit{transport plan} between $\mu$ and $\nu$. If $\pi$ is deterministic --- specifically, for any $x$, there is a unique $y$ such that $\pi(x,y)>0$ --- then it is also a transport map as defined in the Monge problem. It suffices to define such a map $T(x)$ by the only $y$ respecting $\pi(x,y)>0$.  However, we could instead consider $\pi$ to be a ``one-to-many" transport plan: for each $x$, there might be several $y$ such that $\pi(x,y)>0$. While Monge's development had the problem of non-existence of the transport map, the Kantorovitch relaxation $\mathcal{A}(\mu,\nu)$ is never empty. In particular, it always contains the independent joint distribution $\mu \times \nu$. In many situations, under mild assumptions on the cost function $c$ (semi-lower continuity and bounded from below), there always exists a minimizer of~\eqref{KANTOROVITCH_PRIMAL}, so that the infimum might be replaced by a minimum~\citep{Ambrosio2013}. 
We distinguish between a \emph{transport map} $T$ achieving the minimum in~\eqref{MONGE_PROBLEM_BIS} and a \emph{transport plan} $\pi$ minimizing~\eqref{KANTOROVITCH_PRIMAL}. Note that for most applications, we are more interested in obtaining an approximation of the optimal transport map than a transport plan.\\

\textbf{Dual of Kantorovitch} One could also work with the dual of problem~\eqref{KANTOROVITCH_PRIMAL}, which is proven to lead to the same value $V_c(\mu,\nu)$. We write this dual as:
\begin{equation}
\begin{gathered}
\underset{\phi,\psi \in \mathcal{A}^* }{\text{sup}}
 \int_{ x \in \mathcal{X}} \phi(x) d\mu(x) + \int_{ y \in \mathcal{Y}} \psi(y) d\nu(y) \\
\mathcal{A}^*:= 
\{(\phi,\psi): \mathds{R}^d\rightarrow \mathds{R} \mid
\forall x,y  \in \mathcal{X}\times \mathcal{Y}, \\
 \phi(x)+\psi(y) \leqslant c(x,y)\}
\end{gathered}
\label{KANTOROVITCH_DUAL_BIS}
\end{equation}
As in the main paper, we will also denote as $V^*(\phi,\psi)$ the value $\int_{ x \in \mathcal{X}} \phi(x) d\mu(x) + \int_{ y \in \mathcal{Y}} \psi(y) d\nu(y)$. A coordinate of a pair ($\phi,\psi$) maximizing $V^*$ is called a \textit{Kantorovitch potential}. We refer to the constraint in the definition of~\eqref{KANTOROVITCH_DUAL_BIS} which forces the sum of the potentials $\phi$ and $\psi$ to be upper-bounded by the cost as the \emph{c-inequality constraint} or simply the \emph{inequality constraint}.\\

\textbf{Wasserstein distances} When the cost $c$ is a distance, $V_c(\mu, \nu)$ exactly matches the famous $W_1(\mu,\nu)$, which is called the first Wasserstein distance between $\mu$ and $\nu$. When the cost $c$ is a distance to a power of some positive integer $p$, $V_c(\mu, \nu)^{1/p}$ is denoted $W_p(\mu, \nu)$ and is commonly called the $p^{\mathrm{th}}$ Wasserstein distance. An important result is that these $W_p$ distances are actually respecting the axioms of a distance over (absolutely continuous) distributions~\citep{Villani}, although one should pay careful attention to the fact that equality between measures is meant in an "almost everywhere" sense. For the sake of simplicity, we will abusively extend the term ``Wasserstein distance" to any $V_c$, for any cost function c, even though not all choices of $c$ lead to $V_c$ being a metric over probability distributions. \\

\textbf{Computing Wasserstein distances} The discriminator of the Wasserstein GAN (WGAN)~\citep{arjovsky2017wasserstein} computes the dual formulation~\eqref{KANTOROVITCH_DUAL_BIS}, with $\nu$ being the ``true" distribution $P_x$ (defined by training examples) and $\mu$ being the ``fake" distribution $G_\# P_z$ of the random variable $G(z)$ (defined by samples $z$ drawn from the input distribution $P_z$ and then fed to the generator). In practice, the inequality constraint on potentials $\phi,\psi$ may be enforced by the addition of a constraint-violation penalty term to the objective~\citep{gulrajani2017improved}. Although introduced for other reasons, \mycitet{cuturi2013sinkhorn} proposed an efficient penalized version of~\eqref{KANTOROVITCH_PRIMAL} whose resulting optimization problem is called the entropic-regularized optimal transport problem. We shall only provide its dual form: 
\begin{equation}
\sup_{\phi,\psi} V^*(\phi,\psi) + \lambda_{\text{ineq}} \mathcal{L}_{\text{ineq}}(\phi,\psi)
\label{INEQ_REGULARIZED_DUAL_BIS}
\end{equation}
where $\mathcal{L}_{\text{ineq}}$ penalizes $(\phi,\psi)$ when they violate the inequality constraint:
\begin{align}
    \mathcal{L}_{\text{ineq}}(\phi,\psi):= - \mathds{E}_{(x,y) \sim \mu \times \nu} \left[\exp\left({\frac{\phi(x)+\psi(y)-c(x,y)}{\lambda_{\text{ineq}}}}\right)\right]
\end{align} 
Other penalties were theoretically explored by~\mycitet{Blondel2018}, including the widely used $L_2$-penalty~\citep{Seguy2018LSOT}:
\[ \mathcal{L}_{\text{ineq}}(\phi,\psi):=- \mathds{E}_{(x,y) \sim \mu \times \nu} \left[\left(\phi(x)+\psi(y)-c(x,y)\right)_{+}^2 \right]    \]
where $(.)_+:= \max(0,.)$. Objective~\eqref{INEQ_REGULARIZED_DUAL_BIS} was shown to asymptotically recover the value of~\eqref{KANTOROVITCH_DUAL_BIS} when $\lambda_{\text{ineq}} \rightarrow 0$. 

Due to its suitability for training parametric models such as neural network, we exploit this optimization objective to compute Wasserstein distances $W_p$. As it is provable~\citep{Ambrosio2013} that the `differential' of $W_p(.,\nu)$ with respect to its first variable is $-\phi$ (where $\phi$ is a Kantorovitch potential), we may conclude that a GAN framework where the discriminator accurately computes~\eqref{KANTOROVITCH_DUAL_BIS} through~\eqref{INEQ_REGULARIZED_DUAL_BIS} provides a means of training a generative model.\\

\textbf{The link between the Kantorovitch potentials and the Monge map} We pursue this section by exploring the relationship between optimal potentials and optimal transport plans and maps. The goal is to recover an approximation of the solution of~\eqref{MONGE_PROBLEM_BIS}, namely an optimal transport map, given optimal solutions of~\eqref{KANTOROVITCH_DUAL_BIS} (i.e. Kantorovitch potentials). There is the following well-known result about the existence and uniqueness of the optimal map and its relationship to Kantorovitch potentials~\cite{brenier1991polar} is:

\begin{proposition}\label{prop:uniqueRelation}
\normalfont
Let $\mathcal{X}=\mathcal{Y}=\mathds{R}^m$ for some integer $m$. Fix $p\geqslant 2$ and the cost $c(x,y):= \frac{\|x-y\|^p}{p}$. Further assume $\mu,\nu$ to be absolutely continuous with respect to the Lebesgue measure on $\mathds{R}^m$. Then there is one unique optimal transport plan $\pi$ solving~\eqref{KANTOROVITCH_PRIMAL}. It is deterministic, that is it corresponds to an optimal transport map $T$ solving~\eqref{MONGE_PROBLEM_BIS}: $\pi=(\id \times T)$. Also, the Kantorovitch potentials $\phi,\psi$ are unique up to a translation and the following relation holds:
\begin{equation}
\forall x \in \text{supp}(\mu),\;\; T(x)=x- \|\nabla \phi(x)\|^{\frac{1}{p-1}-1}\nabla \phi(x)
\label{PRIMAL_DUAL_GENERAL_CORRESPONDANCE}
\end{equation}
\end{proposition}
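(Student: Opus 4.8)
This is the classical Brenier--Gangbo--McCann theorem, and I would follow the standard route: weak compactness, Kantorovich duality, and a ``twist'' argument tailored to the strictly convex cost $c(x,y)=\|x-y\|^p/p$. First I would establish existence of a minimizer $\pi$ for \eqref{KANTOROVITCH_PRIMAL}: the set $\mathcal{A}(\mu,\nu)$ is tight, hence weakly compact by Prokhorov, and $\pi\mapsto\int c\,d\pi$ is lower semicontinuous and bounded below. Then I would invoke Kantorovich duality to produce a maximizing pair for \eqref{KANTOROVITCH_DUAL_BIS} that can be taken $c$-conjugate, so that $\phi$ is $c$-concave, $\psi=\phi^c$, and every optimal plan is supported on the contact set $\Gamma=\{(x,y):\phi(x)+\psi(y)=c(x,y)\}$.

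\textbf{From the contact set to the map.} Since $c$ is locally Lipschitz (a power, $p\geqslant 2$, of the Euclidean norm), a $c$-concave $\phi$ is locally Lipschitz, hence differentiable Lebesgue-a.e.\ by Rademacher, hence $\mu$-a.e.\ because $\mu\ll\mathcal{L}^m$. At a differentiability point $x$ with $(x,y)\in\Gamma$, the function $x'\mapsto c(x',y)-\phi(x')$ attains its minimum at $x$, so $\nabla\phi(x)=\nabla_x c(x,y)=\|x-y\|^{p-2}(x-y)$. The crucial observation is that $w\mapsto\|w\|^{p-2}w$ is a global bijection of $\mathds{R}^m$ for $p>1$, being the gradient of the strictly convex, superlinear function $w\mapsto\|w\|^p/p$; this lets me solve uniquely for $y$ in terms of $x$, and I set $T(x):=y$. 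Writing $v:=x-y$, one reads off $\|\nabla\phi(x)\|=\|v\|^{p-1}$, hence $\|v\|=\|\nabla\phi(x)\|^{1/(p-1)}$ and $v=\|v\|^{2-p}\nabla\phi(x)=\|\nabla\phi(x)\|^{\frac{2-p}{p-1}}\nabla\phi(x)$; since $\frac{2-p}{p-1}=\frac{1}{p-1}-1$ this is precisely \eqref{PRIMAL_DUAL_GENERAL_CORRESPONDANCE} (and collapses to $T=\id-\nabla\phi$ when $p=2$), with the convention that $T(x)=x$ where $\nabla\phi(x)=0$.

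\textbf{Optimality of $T$ and uniqueness.} Because every optimal $\pi$ is concentrated on $\Gamma$, and $\Gamma$ pins down $y=T(x)$ for $\mu$-a.e.\ $x$, disintegrating $\pi$ along its first marginal $\mu$ forces $\pi=(\id\times T)_\#\mu$, giving both determinism and uniqueness of the plan. Being a deterministic plan attaining the Kantorovich value, $(\id\times T)_\#\mu$ corresponds to a map $T$ realizing the infimum in \eqref{MONGE_PROBLEM_BIS}, which always dominates \eqref{KANTOROVITCH_PRIMAL}, so $T$ solves the Monge problem. (An alternative uniqueness argument: if $\pi_0\neq\pi_1$ were both optimal, $\tfrac12(\pi_0+\pi_1)$ would be optimal hence concentrated on a graph, which is impossible for a strict convex combination of distinct measures.) For the potentials, $\nabla\phi$ is determined $\mu$-a.e.\ through $\nabla\phi(x)=\nabla_x c(x,T(x))$; on the support of $\mu$, which has full Lebesgue density, $\phi$ is therefore pinned down up to an additive constant, and then $\psi=\phi^c$ up to the opposite constant.

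\textbf{Main obstacle.} The delicate part is the step passing from the contact set to the map: rigorously establishing local Lipschitz regularity of $c$-concave functions, carefully discarding the relevant null sets (non-differentiability of $\phi$, and $\{\nabla\phi=0\}$ where the displayed formula degenerates) while keeping $T$ Borel measurable, and justifying the global inversion of $w\mapsto\|w\|^{p-2}w$. These are exactly the technical core of Gangbo--McCann; the compactness, the duality theorem (quoted from \citep{Villani,Ambrosio2013}), and the algebra above are routine by comparison.
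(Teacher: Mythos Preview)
Your proof is correct. The paper's own proof, however, is far more terse: it simply cites the general Gangbo--McCann result for costs of the form $c(x,y)=h(x-y)$ with $h$ strictly convex (from \cite{Villani} and Santambrogio), which directly gives $T(x)=x-(\nabla h)^{-1}(\nabla\phi(x))$, and then performs the one algebraic step of inverting $\nabla h$ for $h(w)=\|w\|^p/p$ to obtain $(\nabla h)^{-1}(y)=\|y\|^{\frac{1}{p-1}-1}y$. You instead unpack that cited black box in full: weak compactness for existence, duality and $c$-concavity for the potentials, Rademacher plus the twist argument on the contact set, and disintegration for uniqueness. The proposition-specific content---the inversion of $w\mapsto\|w\|^{p-2}w$---is identical in both; what you add is a self-contained sketch of the theorem the paper merely invokes. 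This buys you independence from the references, at the cost of having to be careful about the regularity and null-set issues you correctly flag as the main obstacle; the paper simply delegates those to the literature.
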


Finally, we recall a useful result for the converse question: given a function $T$ from $R^d$ to itself and a probability $\mu$, is $T$ a Monge map between $\mu$ and $T\#\mu$? In the case of the second Wasserstein distance, Brenier's polarization theorem~\citep{brenier1991polar} says this is true if $T$ is the gradient of a strictly convex function. If $T$ may be written $x-\nabla \phi(x)$ for some real function $\phi$ one can rephrase this condition as $\frac{\|x\|_2^2}{2}-\phi(x)$ is strictly convex. Such a $\phi$ is said to be c-concave. In particular, any Kantorovitch potential $\phi$ is c-concave.\\

\textbf{Geodesics in the space of probability measures}
Recall that $W_2$ is a metric over absolutely continuous distributions. Studying properties of this space starts with the analysis of its geodesics. That is, for any given distributions $\mu$ and  $\nu$, we look for constant speed paths described by $\mu_t$ such that $\mu_0=\mu$, $\mu_1=\nu$ and
\[\forall 0\leqslant s \leqslant t\leqslant1, W_2(\mu_s,\mu_t)=(t-s)W_2(\mu,\nu) \]
Between a fixed pair $(\mu,\nu)$, there exists a unique constant-speed $W_2$ geodesic~\citep{Villani}. Given a Monge map $T$ between $\mu$ and $\nu$, the only constant speed geodesic is given by $\mu_t:=T_t\#\mu$ where $T_t=(1-t)\id+tT$. It is a remarkable fact, compared with the case of the first Wasserstein distance, where the geodesics are eventually in infinite number. This is one obstruction to conduct our analysis in the $W_1$ case, and this is one reason we appeal to $W_2$ instead of $W_1$.

By~\eqref{PRIMAL_DUAL_GENERAL_CORRESPONDANCE}, we can write $T(x)=x-\nabla \phi(x)$ for any Kantorovitch potential $\phi$ solving~\eqref{KANTOROVITCH_DUAL_BIS}. Hence $T_t(x)=x-t\nabla\phi(x)$. In particular, this guarantees that $t\phi(x)$ is a Kantorovitch potential solving~\eqref{KANTOROVITCH_DUAL_BIS} for $\mu$ and $\mu_t=T_t\#\mu$.

\subsection{Can Wasserstein GANs benefit from a similar analysis?}
\label{sec:W1_case}
Our model is similar to GANs relying on the $1$-Wasserstein metric, and one might wonder why we do not use 
 e.g WGAN, WGAN-GP or WGAN-LP as they achieve state-of-the art performance in generative modelling. For the purpose of learning an optimal map, this section is meant to explain why we cannot theoretically rely on such models, at least by trying to adapt our analysis to the $W_1$ case. Experimentally, however, as WGAN and its extensions rely on a similar objective as W2GAN, they seem to be approximately following an optimal map -- at least in some cases. Let us enumerate what parts of the previous analysis do not apply to the case of $W_1$:
\begin{itemize}
    \item The local analysis could actually be applied in a similar way. Recall that local direction given to the generated distribution by the Wasserstein GAN discriminator is $-\nabla f(.)$ where $f$ is a (non unique) Kantorovitch potential in the $W_1$ case. It is true then that there exists an associated Monge map $T$ from the current distribution toward the true distribution. The relationship between the map and the potential is $\frac{T(x)}{\|T(x)\|}=-\nabla f(x)$~\citep{brenier1991polar}. Hence, we can get the direction of an OT map thanks to the knowledge of the dual optimal variable, but not its norm. Thus, locally, the generator is updated toward the true distribution, and in the direction of the optimal map, but the magnitude of the update step is impossible to obtain. 
    \item Going from a local to global analysis is much more difficult in the case of $W_1$. Recall that in the case of $W_2$, the main ingredient of the discussion was the uniqueness of geodesics and their nice analytic description. In the case of $W_1$, there are eventually infinitely many geodesics. A great description of this fact is available in~\citep{bottou2018geometrical}. Hence, while locally the discriminator may be one among many Kantorovitch potentials, it is hard to decide whether globally the generator follows a geodesic. Although this seems to be true as every local direction is happening on geodesics, it would remain impossible to decide which geodesic the generator is following. It would be of interest to describe different kind of $W_1$-geodesics and understand the ones that Wasserstein GANs' generators are prone to follow.\footnote{For the same reason it is also hard to solve the gradient flow of the $W_1$ metric as we do in the Appendix~\ref{subsection_dynamic_gradient_flow} 
    in the case of $W_2$.} 
\end{itemize}
In conclusion, while we might have a locally nice description of GANs relying on $W_1$, it remains challenging to obtain a global description. Nonetheless, low dimensional experiments still suggest that their generators might recover a Monge map at the end of training.

\subsection{Proofs of Appendix' propositions}
\begin{proof}[proposition \ref{prop:uniqueRelation}]
This result is a particular case of a well-known correspondence between Kantorovitch potentials and optimal transport map. In fact, when the cost $c$ is such that $c(x,y)=h(x-y)$ and $h$ is strictly convex, one has the existence and uniqueness of an optimal Monge map $T$ solving~\eqref{MONGE_PROBLEM_BIS} and a specific relationship with the Kantorovith potential $\phi,\psi$ solving~\eqref{KANTOROVITCH_DUAL_BIS}~\mycitet{Villani,santambrogio2017euclidean}:
\begin{center}
$T(x)=x- (\nabla h)^{-1}(\nabla \phi(x)) $
\end{center}
Fix $p \geqslant 2$ and consider the case of the p-Wasserstein distance. Then $c(x,y)=h(x-y)$ where $h(x):= \frac{1}{p} \|x\|^p$. A norm is always convex by triangle inequality, and any $x \rightarrow x^p$ is strictly convex and increasing on $\mathbf{R}_+$, so the previous result provides the uniqueness of the optimal transport map $T$. It only remains to invert the gradient of h. A quick calculation gives:
\[ \forall y \in Im\nabla h, \\ \nabla h ^{-1}(y)= \|y\|^{\frac{1}{p-1}-1}y\]
as we supposed the norm to be the euclidean one. Plugging this into the first expression, we obtain the desired result.
\end{proof} 

\begin{proof}[proposition \ref{prop:dynamic}]

Proving the first part of proposition 2 is exactly similar as the way it is done for the Wasserstein-1 case in~\mycitet{arjovsky2017wasserstein}. We recall that the main ingredients of the proof is first to convey an envelop theorem to obtain that 
\[\nabla_\theta W_2^2(\mu_{\theta_t},P_x)=\nabla_\theta \mathds{E}_{z\sim P_z}[\phi(G(\theta,z))]\]
and second to use a dominated convergence argument to invert the expectation and the gradient operator in the right-hand side of the above.\\
For the second part of the proof, we rewrite the gradient ascent equation~\eqref{UPDATE_THETA} for the generator parameter $\theta$:
\begin{equation}
    \begin{aligned}
    \theta_{t+1}&=\theta_t -\alpha \nabla_\theta W_2^2(\mu_{\theta_t},P_x)\\
    &= \theta_t -\alpha  \mathds{E}_{z_ \sim P_z}[\nabla_\theta \phi(G(\theta_t,z))]\\
    &=\theta_t -\alpha  \mathds{E}_{z \sim P_z}[J_{\phi(G(\theta_t,z))}^T(\theta_t)] \\
    &=\theta_t -\alpha  \mathds{E}_{z\sim P_z}[( (\nabla \phi(G(\theta_t,z)))^T J_{G(.,z)}(\theta_t) )^T]\\
    &=\theta_t -\alpha  \mathds{E}_{z\sim P_z}[(  J_{G(.,z)}(\theta_t)^T \nabla \phi(G(\theta_t,z)) ]
    \end{aligned}
\end{equation}

Therefore, for a fix input variable $z_0$,
\begin{align}
G(\theta_{t+1},z_0)= G(\theta_t -\alpha  \mathds{E}_{z\sim P_z}[  J_{G(.,z)}(\theta_t)^T \nabla \phi(G(\theta_t,z)) ],z_0)
\end{align}
Then by first order Taylor expansion: 
\begin{align}
G(\theta_{t+1},z_0)= G(\theta_t,z_0) -\alpha J_{G(.,z_0)}(\theta_t) \mathds{E}_{z\sim P_z}[  J_{G(.,z)}(\theta_t)^T \nabla \phi(G(\theta_t,z) ]  + o(\alpha)
\end{align}

We conclude using the hypothesis that $\phi$ is maximizing the dual of Kantorovitch and is related to an optimal map T through~\eqref{PRIMAL_DUAL_GENERAL_CORRESPONDANCE}.
\end{proof}

\subsection{Extensions to the W2GAN training objective}
\label{sec:objective_details}

In the following, we discuss two possible extensions to the discriminator's training objective inspired by optimal transport theory. We consider the two random variables $X\sim \mu$ and $Y\sim \nu$ in $\mathds{R}^m$ to be discriminated by $\phi,\psi$. In our applications, $X$ is the generated distribution $G(z)$ and $Y$ is the true distribution $P_x$. In both cases the discriminator objective is divided into two parts. First, a main objective $\mathcal{L}_{\text{OT}}(\phi,\psi,X,Y):=\mathds{E}(\phi(X)+\psi(Y))$ which corresponds to the dual of the Kantorovitch problem. Second, the inequality constraint $\mathcal{L}_{\text{ineq}}(\phi,\psi,X,Y):=-\lambda_{\text{ineq}} \mathds{E} [(\phi(X)+\psi(Y)-\frac{\|X-Y\|_2^2}{2})_{+}^2 ]$. One could prefer an other choice of regularization such as the entropy penalty. In any case the overall objective for the discriminator is:
\begin{equation*}
    \sup_{\phi,\psi} \mathcal{L}_{\text{OT}}(\phi,\psi,X,Y) + \mathcal{L}_{\text{ineq}}(\phi,\psi,X,Y)
\end{equation*}

The first idea is to take advantage of relation~\eqref{PRIMAL_DUAL_GENERAL_CORRESPONDANCE}, which we know to be true between an optimal potential $\phi$ and an optimal map $T$. In fact, optimal transport theory asserts that the inequality constraint should be exactly saturated where there is some transport~\citep{Villani}:
\begin{theorem}
Consider any lower-semicontinuous cost function $c$ and a optimal transport plan for~\eqref{KANTOROVITCH_PRIMAL}, and Kantorovitch potentials $\phi,\psi$ for~\eqref{KANTOROVITCH_DUAL_BIS}. Then,
\begin{equation}
 \forall x,y,  (x,y)\in supp(\pi)  \\ \Longrightarrow  \phi(x) + \psi(y) = c(x,y)
 \label{GENERAL_RELATIONSHIP_PI_COST_EQ}
\end{equation}
\end{theorem}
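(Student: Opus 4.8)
The plan is to run the classical complementary-slackness argument from Kantorovitch duality. The key observation is that the ``slack'' $g(x,y):=c(x,y)-\phi(x)-\psi(y)$ is nonnegative \emph{everywhere} by the $c$-inequality constraint defining $\mathcal{A}^*$, yet has vanishing integral against the optimal plan $\pi$ because $\pi$ and $(\phi,\psi)$ are simultaneously optimal for a primal/dual pair with no duality gap. A nonnegative function with zero integral must vanish $\pi$-almost everywhere, and a little semicontinuity upgrades this to ``everywhere on $\mathrm{supp}(\pi)$''.

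Concretely, the steps in order are: (1) invoke Kantorovitch strong duality \citep{Villani} --- valid since $c$ is lower-semicontinuous and, after the usual normalization, bounded below --- to obtain $\int c\,d\pi = V_c(\mu,\nu) = \int\phi\,d\mu+\int\psi\,d\nu$; (2) use the marginal constraints on $\pi$ to rewrite the dual value as $\int\bigl(\phi(x)+\psi(y)\bigr)\,d\pi(x,y)$, so that $\int g\,d\pi = 0$; (3) since $g\geqslant 0$ pointwise, conclude $g=0$ $\pi$-almost everywhere, i.e. $\pi(\{g>0\})=0$; (4) choosing $c$-concave (hence appropriately semicontinuous) representatives $\phi=\psi^c$, $\psi=\phi^c$ of the potentials --- harmless because $c$-transforms never decrease the dual objective --- the set $\{g>0\}$ is open, so any point of $\mathrm{supp}(\pi)$ lying in it would carry an open neighborhood of positive $\pi$-measure contained in $\{g>0\}$, contradicting step (3). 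Hence $g\equiv 0$ on $\mathrm{supp}(\pi)$, which is the claim.

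The main obstacle is step (4): strong duality and the ``zero integral of a nonnegative function'' reasoning are routine, but passing from $\pi$-a.e. equality to equality at \emph{every} point of the support requires $g$ to be lower-semicontinuous, which forces one to work with semicontinuous (ideally $c$-concave) versions of the Kantorovitch potentials and to verify this normalization is both legitimate and sufficient for the costs at hand. For the continuous costs $c(x,y)=\tfrac{1}{p}\|x-y\|^p$ actually used in the paper the $c$-concave potentials are continuous and this is unproblematic; for general lower-semicontinuous $c$ one must argue more carefully, e.g. via a measurable-selection argument as in \citep{Villani}. A minor secondary point is the integrability needed to split $\int g\,d\pi$ into $\int c\,d\pi-\int\phi\,d\mu-\int\psi\,d\nu$, which is covered by the hypotheses of the duality theorem.
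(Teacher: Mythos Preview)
The paper does not supply its own proof of this theorem: it is stated as a quotation from the optimal-transport literature with a citation to \citep{Villani}, and no argument is given in the text or in the proofs appendix. Your proposal reproduces exactly the classical complementary-slackness proof one finds in Villani, so there is nothing to compare --- your approach is the standard one and is correct.

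One small caveat worth flagging: your step (4) hinges on $g=c-\phi-\psi$ being lower-semicontinuous so that $\{g>0\}$ is open. With $c$ merely lsc, the $c$-transform $\psi^c(x)=\inf_y\bigl(c(x,y)-\psi(y)\bigr)$ is an infimum of lsc functions and need not be usc, so $g$ need not be lsc in full generality. You already acknowledge this, and for the continuous costs $c(x,y)=\tfrac1p\|x-y\|^p$ that the paper actually uses the $c$-concave potentials are continuous and the argument goes through cleanly; in Villani's general statement the ``everywhere on $\mathrm{supp}(\pi)$'' conclusion is likewise obtained under continuity of $c$, while for bare lsc costs one only gets the $\pi$-a.e.\ version. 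This is a refinement of the hypothesis rather than a gap in your reasoning.
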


In our case, that is when the cost is a $p^{th}$ power of the euclidean distance, we know from Proposition 1 than an optimal transport plan is actually an optimal transport map $T$, and we dispose of a relationship with the corresponding Kantorovitch potential $\phi$. Hence an immediate consequence of the above for the case of the square of the euclidean distance is:

\begin{corollary}
For the Kantorovitch problem~\eqref{MONGE_PROBLEM_BIS} related to $c(x,y):=\frac{\|x-y\|_2^2}{2}$, that is the computation of the second Wasserstein distance, given Kantorovitch potentials $\phi,\psi$:
\begin{equation}
 \forall x \in supp(\mu),  \phi(x) + \psi(x-\nabla \phi(x)) =\frac{\|\nabla \phi(x)\|_2^2}{2}
 \label{EQ_COST_PHI_PSI}
\end{equation}
\end{corollary}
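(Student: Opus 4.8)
The plan is to obtain this corollary as an immediate specialization of the two results that precede it: the support-saturation theorem (equation~\eqref{GENERAL_RELATIONSHIP_PI_COST_EQ}) and Proposition~\ref{prop:uniqueRelation} in the case $p=2$. No new machinery should be needed; the only care required is with the pointwise (as opposed to $\mu$-almost everywhere) nature of the quantifier.

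First I would fix an arbitrary $x\in\mathrm{supp}(\mu)$ and invoke Proposition~\ref{prop:uniqueRelation} with $p=2$: since $c(x,y)=\frac{1}{2}\|x-y\|^2$ and $\mu,\nu$ are absolutely continuous, there is a unique optimal transport plan $\pi$, it is deterministic, and it is carried by the graph of the Monge map $T$, i.e.\ $\pi=(\id\times T)_\#\mu$. Moreover~\eqref{PRIMAL_DUAL_GENERAL_CORRESPONDANCE} with exponent $\frac{1}{p-1}-1=0$ gives $T(x)=x-\nabla\phi(x)$. Hence the point $(x,\,x-\nabla\phi(x))$ lies in $\mathrm{supp}(\pi)$.

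Next I would apply the saturation theorem~\eqref{GENERAL_RELATIONSHIP_PI_COST_EQ} to this pair, which yields $\phi(x)+\psi(x-\nabla\phi(x))=c(x,\,x-\nabla\phi(x))$. The right-hand side is $\frac{1}{2}\|x-(x-\nabla\phi(x))\|_2^2=\frac{1}{2}\|\nabla\phi(x)\|_2^2$, which is exactly the claimed identity~\eqref{EQ_COST_PHI_PSI}.

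The one delicate point, which I expect to be the main obstacle (though a minor one), is the quantifier ``for all $x\in\mathrm{supp}(\mu)$'': Brenier's theorem delivers $T=\id-\nabla\phi$ and the inclusion $(x,T(x))\in\mathrm{supp}(\pi)$ only $\mu$-almost everywhere a priori. To upgrade this to every point of $\mathrm{supp}(\mu)$ I would use that a Kantorovitch potential $\phi$ for the quadratic cost can be taken $c$-concave, i.e.\ $\frac{1}{2}\|x\|_2^2-\phi(x)$ convex (as recalled in Appendix~\ref{OT_THEORY_BACKGROUND}); then $\nabla\phi$ is defined and continuous on the interior of $\mathrm{supp}(\mu)$, both sides of~\eqref{EQ_COST_PHI_PSI} are continuous there, and the almost-everywhere identity extends by density. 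The algebraic core of the argument is otherwise immediate.
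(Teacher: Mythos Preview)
Your proposal is correct and follows essentially the same route as the paper: the corollary is presented there as ``an immediate consequence'' of the saturation theorem~\eqref{GENERAL_RELATIONSHIP_PI_COST_EQ} combined with Proposition~\ref{prop:uniqueRelation} specialized to $p=2$, which is exactly what you do. Your additional care about upgrading the $\mu$-a.e.\ identity to a pointwise one on $\mathrm{supp}(\mu)$ via $c$-concavity is not addressed in the paper's terse treatment, so in that respect your argument is more complete.
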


Thus we suggest enforcing our discriminator to abide by relation~\eqref{EQ_COST_PHI_PSI}. Notice that the previous corollary admits an exact symmetric relationship involving the gradient of $\psi$. This is done by adding the following penalty during training:
\begin{align}
\mathcal{L}_{\text{eq}}(\phi,\psi,X,Y):=
-\lambda_{\text{eq}}& \left[ \left(\phi(X) + \psi(X-\nabla \phi(X)) -\frac{\|\nabla \phi(X)\|_2^2}{2}\right)^2 \right. \nonumber \\
& \left. + \left(\phi(Y-\nabla \psi(Y)) + \psi(Y) -\frac{\|\nabla \psi(Y)\|_2^2}{2}\right)^2 \right]
\label{GAN_EQ_PENALTY}
\end{align}
We call it the c-equality penalty term as it tries to enforce the dual functions to saturate to c-inequality constraint at the right locations. Hence the overall objective of the discriminator is:
\begin{equation}
    \sup_{\phi,\psi} \mathcal{L}_{\text{OT}}(\phi,\psi,X,Y) + \mathcal{L}_{\text{ineq}}(\phi,\psi,X,Y) + \mathcal{L}_{\text{eq}}(\phi,\psi,X,Y)
\label{DISCRIMINATOR_LOSS_EQ_INEQ}
\end{equation}

\begin{remark}
\normalfont It is interesting to bridge this objective with the one used in GAN relying on the first Wasserstein distance. In WGAN-GP, the discriminator is asked to have gradient exactly equal to $1$. Translating that into the optimal transport theory, it is tantamount to have potentials $\phi,\psi$ saturating the c-inequality. That is, this gradient penalty term is similar to our $\mathcal{L}_{\text{eq}}$. Importantly though,~\mycitet{petzka2018on} raised the fact that it is not a valid practice according to theory to ask the gradient norm of the discriminator to be $1$ everywhere. Instead, WGAN-LP is a model where the gradient's norm is enforced to be less than 1, which is then translated into optimal transport perspective as potentials $\phi,\psi$ respecting the c-inequality. Their penalty term is thus similar to our $\mathcal{L}_{\text{ineq}}$. Our model takes advantage of the two forms. The difference is that our $\mathcal{L}_{\text{eq}}$ is completely justified -at least because it enforces a relationship which is true at optimum- thanks to the theory existing in the $W_2$ case.
\end{remark}

A second modification concerns the way we encode the penalty enforcing the inequality constraint to be respected by $\phi,\psi$. In fact, we modify $\mathcal{L}_{\text{ineq}}$ in order to bring it closer to the theory. Looking at the definition of $\mathcal{A}^*(\mu,\nu)$, the set of constraint of the dual of Kantorovitch problem~\eqref{KANTOROVITCH_DUAL_BIS}, we see that the c-inequality constraint should be respected point wise everywhere in the ambient space where the distributions are defined. Instead, the entropy regularization or the $L^2$ one only enforce it for pairs $x,y$ that live in the supports of the two distributions. We want to reduce this bias by somehow enforcing the inequality on a broader set. It is sufficient that the potentials respect the inequality constraint point wise on a convex compact set $\Omega$ containing the support of the two distributions. Hence we suggest enforcing it on the convex envelop of the two supports: $\Omega:=$Conv(supp($\mu$) $\bigcup$ supp($\nu$)). To do so, we define two i.i.d random variables $\tilde{X}$ and $\tilde{Y}$ which follow the same law as $\epsilon X + (1- \epsilon)Y$ where $\epsilon \sim \mathcal{U}([0,1])$. Hence the overall objective for $\phi,\psi$ is:
\begin{align}
    \sup_{\phi,\psi} \mathcal{L}_{\text{OT}}(\phi,\psi,X,Y) + \mathcal{L}_{\text{ineq}}(\phi,\psi,\tilde{X},\tilde{Y}) +
 \mathcal{L}_{\text{eq}}(\phi,\psi,X,Y)
\label{DISCRIMINATOR_LOSS_EQ_INEQ_INTERP}
\end{align}

\begin{remark}
\normalfont This interpolation idea has already been used in GANs relying on the first Wasserstein distance, such as WGAN-GP and WGAN-LP in their gradient penalty. This practice was more motivated by better results. Here we provided a theoretical argument in favor of such practice. On the other hand, models trying to broaden GANs to higher order Wasserstein distances and/or to compute optimal transport map~\citep{Sanjabi2018swgan,Seguy2018LSOT,Salimans2017otgan} in a similar manner only enforced the constraint on the support of the distributions.
\end{remark}

\subsection{An alternative parameterization for $\phi$ and $\psi$}
\label{sec:parametrization_details}
An intuitive way of parameterizing $\phi$ and $\psi$ is to simply replace $\phi$ and $\psi$ with two neural networks of the same architecture. A potential downside we found practically is that this parameterization tends to be unstable. An alternative reparameterization is to replace $\psi(Y)$ with $-\phi(Y) + \epsilon(Y)$ where both $\phi$ and $\epsilon$ are neural networks. This reparameterization and the property that 
$$\forall y \in \mathds{R}^m, \; \phi(y) + \psi(y) \leq \frac{c(y,y)}{2}= 0 \; \Longrightarrow \; \epsilon(y)\leqslant 0,$$ which yields the additional regularizer:
\begin{equation}
    \mathcal{L}_{\epsilon}(\phi, \psi, X, Y) = -\lambda_{\epsilon}\bigg[\epsilon(Y)_+^2\bigg]\label{addition}.
\end{equation}
We found this parameterization to be particularly useful for high-dimensional data settings.

\subsection{Training algorithm with additional regularizers}
\label{sec:algorithm_details}
In algorithm \ref{alg:example2}, one can choose to use the equality constraint as an additional constraint for the discriminator. One can also use the interpolation method, or any method of sampling in Appendix \ref{sec:objective_details}. In the pseudo-code presented, we combine all methods but discarding some of the additional losses $\mathcal{L}_{eq}$ or $\mathcal{L}_{\epsilon}$ amounts to setting the hyper parameters $\lambda_{eq}$ or $\lambda_\epsilon$ to $0$.

\begin{algorithm}[t]
  \caption{W2GAN with $D = (\phi, \psi)$ and $L^2$-regularization.}
  \label{alg:example2}
\begin{algorithmic}
  \REQUIRE Hyperparameters $\lambda_\mathrm{eq}$, $\lambda_\mathrm{ineq}$, $\lambda_{\epsilon}$, $n_\mathrm{critic}$, $B$, $p$.
  \REQUIRE Initial parameters $w_0$ of $\phi$, $v_0$ of $\psi$ and $\theta_0$ of $G$.
  \WHILE{$\theta$ has not converged}
  \STATE {Initialize generator and discriminator losses $\mathcal{L}_D$,$\mathcal{L}_G$ to 0.}
  \FOR{$t=1, ..., n_{\mathrm{critic}}$}
  
  \STATE {Initialize losses $\mathcal{L}_{\text{OT}}$, $\mathcal{L}_{\text{eq}}$ and $\mathcal{L}_{\text{ineq}}$ to 0.}
  \FOR{$i=1, ..., B$}
  \STATE Sample real data $x,x' \sim \mathbf{P}_x$, input variable $y,y' \sim \mathbf{P}_{G(z)}$ and $\epsilon_1, \epsilon_2 \sim \mathcal{U}([0,1])$.
  \STATE $\mathcal{L}_{\text{OT}} \leftarrow \mathcal{L}_{\text{OT}} - \frac{1}{B}(\phi(x) + \psi(y))$
  \STATE $\mathcal{L}_{\text{eq}} \leftarrow \mathcal{L}_{\text{eq}} + \frac{\lambda_{\text{eq}}}{B}([\phi(x) + \psi(x-\nabla \phi(x))-\frac{\|\nabla \phi(x)\|_2^2}{2}]^2 + [\phi(y-\nabla \psi(y)) + \psi(y)-\frac{\|\nabla \psi(y)\|_2^2}{2}]^2)$
  \STATE $\tilde{x} \leftarrow \epsilon_1 x+ (1-\epsilon_1) y$, $\tilde{y} \leftarrow \epsilon_2 x'+ (1-\epsilon_2) y'$
  \STATE \STATE $\mathcal{L}_{\text{ineq}} \leftarrow \mathcal{L}_{\text{ineq}} + \frac{\lambda_{\text{ineq}}}{B}(\phi(\tilde{x}) + \psi(\tilde{y})- \frac{\|\tilde{x}-\tilde{y}\|_2^2}{2})_+^2$
  \STATE \STATE $\mathcal{L}_{\epsilon}\leftarrow \mathcal{L}_{\epsilon} + \frac{ \lambda_{\epsilon}}{B}\epsilon(y)_+^2$
  \ENDFOR
  \STATE $\mathcal{L}_D \leftarrow \mathcal{L}_{\text{OT}} +\mathcal{L}_{\text{ineq}} +\mathcal{L}_{\text{eq}}+\mathcal{L}_{\epsilon}$
  \STATE Update the parameter $w$ of $(\phi,\psi)$ with respect to $\mathcal{L}_D$ via SGD.
  \ENDFOR
  \STATE $\mathcal{L}_G \leftarrow \mathcal{L}_{\text{OT}}$.
  \STATE Update the parameter $\theta$ of $G$ with respect to $\mathcal{L}_G$ via SGD.
  \ENDWHILE
\end{algorithmic}
\end{algorithm}

\subsection{Implementation Details and Architecture}\label{sec:exp_details}

Below we describe the implementation details of our experiments.
\subsubsection{2D Setting}
For the 2D synthetic data experiments, the learning rate used for Barycentric-OT is $0.005$ (although we did not notice that the learning rate influenced the solution quality significantly). For the GAN experiments, learning rates were chosen from the set $\{0.00001, 0.0005, 0.00005, 0.00001\}$. For the Wasserstein-based GANs, the number of discriminator updates per generator update is chosen from the set $\{5, 10, 20\}$. This is set to $1$ in the Jensen-Shannon based GAN by default. $\lambda_{\text{gp}}$ for both WGAN-LP and WGAN-LP is set to $10$, following their conventions. For W2-OT and W2GAN, we set $\lambda_{\text{eq}} = \lambda_{\text{ineq}} = 200$.
To enforce that $G_{\theta_0}(z) = z$, we parameterize $G$ by $G(z) = H(z) + z$, where $H(z)$ is initialized to be close to $0$. $H$ is parameterized by 4 fully connected hidden layers of size 128, with ReLU activations and batch norm in between the layers, and 1 fully connected final layer. $\phi$ and $\psi$ are each parameterized by 2 fully connected layers with ReLU activations in between, and 1 fully connected final layer.

\subsubsection{Multivariate Gaussian to MNIST Setting}
In this experiment, MNIST images are kept at their original size of $28 \times 28$ and pixel values are re-scaled to be in $[-1, 1]$.
For Barycentric-OT, we use the same architecture as~\citep{Seguy2018LSOT}. We searched over $\lambda \in \{0.01, 0.05, 0.1, 0.5, 1, 2, 5\}$, and use the ADAM optimizer with learning rate$ = 0.0002$ and $\beta_1 = 0.5$, $\beta_2 = 0.999$ for both the dual variables and the mapping. We run the experiment with a batch size of $64$ for $200,000$ iterations for each phase.

For W2GAN, to enforce that $G_{\theta_0}(z)$ is close to identity, and that $G(z) \in [-1, 1]$ we reparameterize $G$ by $G(z) = 2\cdot H(z) + z_{\text{clip}}$ where $z_{\text{clip}}$ is $z$ clipped to be in $[-1, 1]$ and tanh is used as the final activation layer of $H$. We use the same architecture as barycentric-OT with the addition of BatchNorm in between the layers of the generator. Specifically, the architecture for our model is FC($28^*28 \rightarrow 1024$)-BN-RELU-FC($1024 \rightarrow 1024$)-BN-RELU-FC($1024 \rightarrow 28^*28$)-Tanh for $H$ in the generator, and FC($28^*28 \rightarrow 1024$)-RELU-FC($1024 \rightarrow 1024$)-RELU-FC($1024 \rightarrow 1$) for both $\phi$ and $\epsilon$ in the discriminator. We note that in the high dimensional setting, better training stability and image quality is achieved by using both $\mathcal{L}_{\text{eq}}$ and $\mathcal{L}_{\epsilon}$ which complement $\mathcal{L}_{\text{ineq}}$ in enforcing the constraint.  We set $\lambda_{\text{ineq}} = \lambda_{\text{eq}} = \lambda_{\epsilon} = 10$ and use the ADAM optimizer with learning rate $= 0.0001$  and $\beta_1 = 0.5$, $\beta_2 = 0.999$ for both the generator and the discriminator. We ran the experiment for $100,000$ iterations with a batch size of $64$. 

\subsubsection{Unsupervised Domain Adaptation Setting}
 The USPS dataset consists of $16 \times 16$ grayscale images of digits, with significantly less training and testing data (7291 train and 2007 test images). The MNIST digits are rescaled to $16 \times 16$ to match the USPS digits, and the grayscale pixels in both datasets are scaled to be in $[-1, 1]$. 
For this set of experiments, we use the same architecture as~\citep{Seguy2018LSOT} for Barycentric-OT, and choose the best model between using entropy regularization and L2 regularization and $\lambda \in \{0.01, 0.05, 0.1, 0.5, 1, 2, 5\}$. We use the ADAM optimizer with learning rate$ = 0.0002$ and $\beta_1 = 0.5$, $\beta_2 = 0.999$ for both the dual variables and the mapping. We ran the experiment for $20,000$ iterations for each phase with a batch size of $1024$. 

For W2GAN, to ensure that $G_{\theta_0}(z)$ is identity and that $G(z) \in [-1, 1]$, we reparameterize $G$ by $G(z) = 2\cdot H(z) + z$ where the last activation layer of $G$ is tanh. we use similar architecture as the previous experiment for our model. Specifically, the architecture is FC($16^*16 \rightarrow 200$)-BN-RELU-FC($200 \rightarrow 500$)-BN-RELU-FC($500 \rightarrow 16^*16$)-Tanh for $H$ in the generator, and FC($16^*16 \rightarrow 200$)-RELU-FC($200 \rightarrow 500$)-RELU-FC($500 \rightarrow 1$) for both $\phi$ and $\epsilon$ in the discriminator. We use the same optimizers and hyperparameters as experiment above. 
are also based on~\citep{gulrajani2017improved}.

\subsection{Continuous analysis of the generator's evolution}
\label{subsection_dynamic_gradient_flow}
 We here strengthen the argument that during training, the generated distribution $\mu_\theta$ is following the "line" defined the optimal transport map between its initialization $\mu_{\theta_0}$ and the target distribution $P_x$. As in the previous discussion, we assume the discriminator $(\phi,\psi)$ to compute exactly the squared second Wasserstein distance $W_2^2(\mu_\theta,P_x)$, i.e. we suppose it is trained infinitely many times at each update of the generator, and we forget about the bias induced by the c-inequality constraint being encoded in the objective as a penalty (informally, we assume $\lambda_{\text{ineq}}=0$). We again look at the evolution of the generated distribution in the case of ideal updates in the space of probability measures. The difference in this case is that we consider $\alpha\rightarrow 0$, thus writing $G(t,Z)$ as a time-dependent random variable where G is in the space of $L_2$ functions and $t$ is a fictive time variable. In this case we may equivalently work on the corresponding generated probability distribution of interest, which we denote as $\mu_t:=\mu_{\theta_t}$. Then our generated distribution evolves according to the gradient flow
\begin{equation}
\dot{\mu_t}= - \nabla W_2^2(\mu_t,P_x).
\label{GRADIENT_FLOW_GENERATED}
\end{equation}
One would need to introduce the definition of gradient flow in the space of probability measures, in particular the notion of real functions' gradient with respect to probability measures and velocities of time-dependent measures in order to fully express the meaning of the above. We refer to~\mycitet{ambrosio2008gradient} for a comprehensive overview. 
We can refine~\eqref{GRADIENT_FLOW_GENERATED} thanks to~\mycitet{ambrosio2008gradient} and obtain:
\begin{equation}
\dot{\mu_t}= T_t-\id
\label{GRADIENT_FLOW_GENERATED_OT}
\end{equation}
where $T_t$ is the unique optimal transport map solving~\eqref{MONGE_PROBLEM} between $\mu_t$ and $P_x$. According to the gradient flow \eqref{GRADIENT_FLOW_GENERATED_OT}, locally, we recover that the generated distribution evolves towards $P_x$ by following the Wasserstein-2 geodesic. In fact, this is a global behaviour, from \mycitet{ambrosio2008gradient}:
\begin{theorem}
Denote $T$ the optimal transport map between $\mu_0$ and $P_x$. Then we have that the gradient flow solving~\eqref{GRADIENT_FLOW_GENERATED} is uniquely determined:
\begin{equation}
\mu_t= [e^{-t}\id + (1-e^{-t})T]\#\mu_0
\label{EVOLUTION_GENERATED}
\end{equation}
\end{theorem}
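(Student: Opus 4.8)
The plan is to exhibit an explicit curve of the prescribed form that solves the refined gradient flow \eqref{GRADIENT_FLOW_GENERATED_OT}, and then to invoke the uniqueness of Wasserstein gradient flows of (generalized-)geodesically convex functionals to conclude that it is \emph{the} gradient flow. This is the continuous-time analogue of Proposition~\ref{generator_recovers_T}, so the ansatz is $\mu_t=(S_t)_\#\mu_0$ with $S_t:=f(t)\,\id+(1-f(t))\,T$ for an unknown scalar $f:[0,\infty)\to[0,1]$ with $f(0)=1$. Since $T$ is an optimal map for $W_2$, it is the gradient of a convex function (equivalently, by \eqref{T_GRAD_PHI}, $\tfrac{\|x\|_2^2}{2}-\phi(x)$ is convex) and $\id=\nabla\big(\tfrac{\|x\|_2^2}{2}\big)$; as $f(t)\in[0,1]$, the convex combination $S_t$ is again the gradient of a convex function, hence $S_t$ is the optimal transport map from $\mu_0$ to $\mu_t$. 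By the displacement-interpolation/composition property of optimal maps along $W_2$-geodesics (the continuous counterpart of Lemma~\ref{composition_monge_ismonge}), the optimal map from $\mu_t$ to $P_x$ is then $T_t=T\circ S_t^{-1}$.

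Next I would compute the velocity field of $t\mapsto\mu_t$. Writing a point of $\mathrm{supp}(\mu_t)$ as $y=S_t(x)$ with $x\sim\mu_0$, the continuity equation identifies the velocity as $v_t(y)=\partial_t S_t(x)=f'(t)\,(\id-T)(x)$. Evaluating the right-hand side of \eqref{GRADIENT_FLOW_GENERATED_OT} at the same point gives
\[
T_t(y)-y=T(x)-S_t(x)=T(x)-f(t)x-(1-f(t))T(x)=-f(t)\,(\id-T)(x).
\]
Matching the two expressions for $\mu_0$-a.e.\ $x$ (discarding the trivial case $T=\id$ $\mu_0$-a.e., i.e.\ $\mu_0=P_x$) reduces everything to the scalar ODE $f'(t)=-f(t)$ with $f(0)=1$, whose unique solution is $f(t)=e^{-t}$. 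Substituting back yields \eqref{EVOLUTION_GENERATED}, and along the way $W_2(\mu_t,P_x)=e^{-t}\,W_2(\mu_0,P_x)$, the exact-rate analogue of Proposition~\ref{decreasing_distance}.

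Finally, to upgrade ``a solution'' to ``the solution'', I would use that $\mu\mapsto\tfrac12 W_2^2(\mu,P_x)$ is convex along generalized geodesics in the Wasserstein space, so by the general existence–uniqueness theory of gradient flows of such functionals in~\mycitet{ambrosio2008gradient} the gradient flow issued from $\mu_0$ is unique; the curve constructed above is therefore that flow. I expect the main obstacle to lie in the bookkeeping of the second step made rigorous: justifying the identification $T_t=T\circ S_t^{-1}$ and the passage $v_t=\dot S_t\circ S_t^{-1}$ through the continuity equation requires the invertibility of $S_t$, the absolute continuity of all intermediate measures $\mu_t$, and the characterization of the metric derivative of $t\mapsto\mu_t$ — all of which rest on the standing absolute-continuity assumptions and Brenier's theorem already used in the paper, but must be assembled carefully. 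The convexity/uniqueness step, by contrast, is essentially a black-box citation to~\mycitet{ambrosio2008gradient}.
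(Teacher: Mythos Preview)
Your proposal is correct. Note, however, that the paper does not actually prove this theorem: it is stated with the attribution ``from \mycitet{ambrosio2008gradient}'' and treated as a black-box citation to the AGS gradient-flow theory. Your write-up therefore goes beyond what the paper does, supplying an explicit verification that the curve $\mu_t=(S_t)_\#\mu_0$ with $S_t=e^{-t}\id+(1-e^{-t})T$ solves the velocity equation $v_t=T_t-\id$, and then invoking the same uniqueness theory the paper cites.

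A brief comparison: the paper's route is purely by reference; your route is the natural ``guess-and-verify'' argument (ansatz $\Rightarrow$ scalar ODE $f'=-f$ $\Rightarrow$ done), which has the advantage of being self-contained and of making transparent the exact rate $W_2(\mu_t,P_x)=e^{-t}W_2(\mu_0,P_x)$ that the paper records as a separate corollary. The one place where you still lean on the literature is the same place the paper does, namely the convexity/uniqueness of the $W_2^2(\cdot,P_x)$ gradient flow from \mycitet{ambrosio2008gradient}. Your identification $T_t=T\circ S_t^{-1}$ is justified exactly as in the paper's Lemma~\ref{composition_monge_ismonge} (or directly by checking that the transport cost of $T\circ S_t^{-1}$ equals $f(t)^2W_2^2(\mu_0,P_x)$, which saturates the geodesic bound), and your caveat about invertibility of $S_t$ and absolute continuity of $\mu_t$ is well placed but is covered by the standing assumptions and Brenier's theorem, as you note.
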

Hence a consequence (also from \mycitet{ambrosio2008gradient}) is that the generated distribution evolves exponentially fast towards $P_x$:
\begin{corollary}
\begin{equation}
     \forall t\geqslant 0, W_2^2(\mu_t,P_X)=e^{-2t}W_2^2(\mu_0,P_x)
    \label{FAST_CONVERGENCE}
\end{equation}
\end{corollary}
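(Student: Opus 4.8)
The plan is to deduce \eqref{FAST_CONVERGENCE} directly from the explicit formula \eqref{EVOLUTION_GENERATED} for $\mu_t$, combined with the description of constant-speed $W_2$-geodesics recalled in Section~\ref{sec:geodesics_W_2}. Write $T$ for the unique optimal transport map from $\mu_0$ to $P_x$, so that $P_x = T\#\mu_0$, and recall that $s\mapsto \mu^{(s)} := [(1-s)\id + sT]\#\mu_0$, for $s\in[0,1]$, is the unique constant-speed $W_2$-geodesic from $\mu_0$ to $P_x$, which therefore satisfies $W_2(\mu^{(s)},P_x) = (1-s)\,W_2(\mu_0,P_x)$.

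First I would set $s(t) := 1-e^{-t}$; since $t\geqslant 0$ this lies in $[0,1)$, and \eqref{EVOLUTION_GENERATED} reads precisely $\mu_t = [(1-s(t))\id + s(t)T]\#\mu_0 = \mu^{(s(t))}$. Substituting into the geodesic distance identity gives $W_2(\mu_t,P_x) = (1-s(t))\,W_2(\mu_0,P_x) = e^{-t}\,W_2(\mu_0,P_x)$, and squaring yields \eqref{FAST_CONVERGENCE}. An alternative that avoids quoting the geodesic formula, mimicking instead the proof of Proposition~\ref{decreasing_distance}, is as follows: the map $T_t := e^{-t}\id + (1-e^{-t})T = \nabla\!\bigl(e^{-t}\tfrac{\|\cdot\|_2^2}{2} + (1-e^{-t})(\tfrac{\|\cdot\|_2^2}{2}-\phi)\bigr)$ is the gradient of a strictly convex function (using that $\phi$ is $c$-concave and $1-e^{-t}\in[0,1]$), hence is the optimal map pushing $\mu_0$ onto $\mu_t$, and a direct computation gives $W_2^2(\mu_0,\mu_t) = (1-e^{-t})^2 W_2^2(\mu_0,P_x)$; meanwhile $\tilde S := T\circ T_t^{-1}$ is admissible from $\mu_t$ to $P_x$, and a change of variables $y = T_t(x)$ shows its transport cost equals $e^{-2t}W_2^2(\mu_0,P_x)$, giving the upper bound $W_2(\mu_t,P_x) \leqslant e^{-t}W_2(\mu_0,P_x)$; the matching lower bound then follows from the triangle inequality $W_2(\mu_0,P_x) \leqslant W_2(\mu_0,\mu_t) + W_2(\mu_t,P_x)$.

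I do not expect a genuine obstacle here: the substantive work — solving the gradient flow \eqref{GRADIENT_FLOW_GENERATED} and identifying its solution with the displacement interpolation — has already been discharged in the preceding theorem \eqref{EVOLUTION_GENERATED} (citing~\cite{ambrosio2008gradient}), so the corollary is essentially a one-line substitution. The only points deserving a sentence of care are verifying that \eqref{EVOLUTION_GENERATED} is written in terms of the same displacement interpolation $t\mapsto[(1-t)\id+tT]\#\mu_0$ that defines $W_2$-geodesics in Section~\ref{sec:geodesics_W_2}, and noting that the reparametrization $1-e^{-t}$ stays in $[0,1]$ so the constant-speed identity applies for every $t\geqslant 0$.
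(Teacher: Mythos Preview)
Your proposal is correct. The paper does not actually supply its own proof of this corollary; it simply records it as a consequence of the preceding theorem and attributes both results to \cite{ambrosio2008gradient}. Your argument---substituting the reparametrization $s(t)=1-e^{-t}$ into the constant-speed geodesic identity $W_2(\mu^{(s)},P_x)=(1-s)W_2(\mu_0,P_x)$---is exactly the natural one-line deduction the paper has in mind, and your alternative via the admissible map $T\circ T_t^{-1}$ is also sound and parallels the technique used in the proof of Proposition~\ref{decreasing_distance}.
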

The evolution in~\eqref{EVOLUTION_GENERATED} suggests that the generated distribution follows the Wasserstein-2 geodesic between $\mu_0$ and $P_x$. That means the training dynamic of the generator "draws" the optimal transport map between $\mu_0$ and $P_x$. At the end of training, the generator $G(\infty,.)$ provides a certain optimal transport map. For each $z$, the 'arrows' joining $G(0,z)$ and $G(\infty,z)$ together constitute the optimal map. Figure~\ref{Evolution_generator} helps visualizing this analysis.

\end{document}